\documentclass{article} 
\usepackage{iclr2026_conference,times}



\usepackage{amsmath,amsfonts,bm}









\def\eqref#1{equation~\ref{#1}}









\def\1{\bm{1}}










\DeclareMathAlphabet{\mathsfit}{\encodingdefault}{\sfdefault}{m}{sl}
\SetMathAlphabet{\mathsfit}{bold}{\encodingdefault}{\sfdefault}{bx}{n}













\usepackage{hyperref}
\usepackage{url}

\usepackage{microtype}
\usepackage{graphicx}
\usepackage{booktabs} 
\usepackage{xargs}
\usepackage[utf8]{inputenc} 
\usepackage[T1]{fontenc}    

\usepackage{caption}
\usepackage{amsmath}
\usepackage{amssymb}
\usepackage{mathtools}
\usepackage{amsthm}
\usepackage{enumerate}
\usepackage{enumitem}
\usepackage{threeparttable}
\usepackage{subcaption}
\usepackage{tcolorbox}
\usepackage{pifont}
\usepackage[table,xcdraw]{xcolor}
\usepackage{nicefrac}
\usepackage{booktabs}
\usepackage{algorithm}
\usepackage{algpseudocode}
\usepackage{wrapfig}

\floatstyle{ruled}
\restylefloat{algorithm}

\theoremstyle{plain}
\newtheorem{theorem}{Theorem}[section]

\newtheorem{lemma}[theorem]{Lemma}

\theoremstyle{definition}

\newtheorem{assumption}[theorem]{Assumption}
\newlist{assumlist}{enumerate}{1}
\setlist[assumlist,1]{
    label=(\alph*),
    ref=\theassumption(\alph*),
    align=left,
    leftmargin=0.4cm
}
\theoremstyle{remark}
\newtheorem{remark}[theorem]{Remark}

\definecolor{mygreen}{rgb}{0.75,1,0.75}
\definecolor{darkgreen}{rgb}{0.0, 0.5, 0.0}
\definecolor{mydarkred}{RGB}{192,47,25}
\definecolor{lightpurple}{rgb}{0.7, 0.5, 0.9}
\newcommand{\red}{\color{mydarkred}}

\newcommand{\cmark}{{\textcolor{darkgreen}{\ding{51}}}}
\newcommand{\xmark}{{\red\ding{55}}}
\algrenewcommand{\algorithmiccomment}[1]{\textcolor{gray}{\footnotesize\itshape // #1}}

\title{Sign-SGD via Parameter-Free Optimization}


\iclrfinalcopy

\author{Daniil Medyakov \textsuperscript{1,2}\thanks{Email: medyakovd3@gmail.com.} \quad Sergey Stanko \textsuperscript{1,2} \quad Gleb Molodtsov \textsuperscript{1,2} \quad Philip Zmushko \textsuperscript{1,2,3,4}\thanks{The work was completed prior to joining ISTA.} \\ 
\hspace{8mm}\textbf{Grigoriy Evseev\textsuperscript{1,2}} \hfill
\textbf{Egor Petrov\textsuperscript{1,2,3}} \hfill
\textbf{Aleksandr Beznosikov\textsuperscript{1,2,5}} \\
\textsuperscript{1} Basic Research of Artificial Intelligence Laboratory (BRAIn Lab) \\
\textsuperscript{2} Moscow Independent Research Institute of Artificial Intelligence \\
\textsuperscript{3} Yandex Research\\
\textsuperscript{4} Institute of Science and Technology Austria (ISTA)\\
\textsuperscript{5} Innopolis University
}

%

\begin{document}

\maketitle

\begin{abstract}
Large language models have achieved major advances across domains, yet training them remains extremely resource-intensive. We revisit \textsc{Sign-SGD}, which serves both as a memory-efficient optimizer for single-node training and as a gradient compression mechanism for distributed learning. This paper addresses a central limitation: the effective stepsize cannot be determined a priori because it relies on unknown, problem-specific quantities. We present a parameter-free \textsc{Sign-SGD} that removes manual stepsize selection. We analyze the deterministic single-node case, and extend the method to stochastic single-node training and multi-node settings. We also incorporate the momentum technique into our algorithms and propose a memory-efficient variant that stores only gradient signs instead of full gradients. We evaluate our methods on pre-training LLaMA models (130M and 350M) and fine-tuning a Swin Transformer (28M). Across considered tasks, the proposed methods match the performance of tuned \textsc{Sign-SGD} and \textsc{AdamW} (grid-searched stepsizes with a cosine schedule), while avoiding tuning overhead. Employing parameter-free training yields approximately $1.5\times$ end-to-end speedup compared to runs with grid-searched stepsizes.
\end{abstract}

\section{Introduction}

Models and datasets continue to scale rapidly \citep{vaswani2017attention, hoffmann2022training, alzubaidi2021review}. This growth drives steep increases in compute requirements, memory footprint, and wall-clock training time, consequently raising hardware costs. These pressures motivate the development of methods that accelerate training and reduce resource usage without sacrificing accuracy. 
A significant breakthrough arose not from designing advanced learning algorithms, but primarily from the manner in which these algorithms can be applied: distributed learning \citep{konevcny2016federated, mcmahan2017communication, verbraeken2020survey}. 
However, distributing training across $M$ nodes does not yield an $M$-fold speedup in practice, as inter-device communication remains a significant bottleneck.

\begin{wrapfigure}{r}{0.55\textwidth}
\begin{minipage}{0.55\textwidth}
\vspace{-9mm}
\begin{algorithm}[H]
   \caption{\textsc{Sign-SGD}} \label{alg:sign-sgd}
\begin{algorithmic}[1]
    \State {\bfseries Input:} Start point $x^0
    \!\in\!\mathbb R^d$, number of iterations $T$
    \State {\bfseries Parameter:} Stepsize $\gamma > 0$
    \For{$t=0,\ldots, T-1$}
    \State $x^{t+1} = x^t - \gamma \text{sign}(\nabla f(x^{t}))$
    \EndFor
\end{algorithmic}
\end{algorithm}
\vspace{-8mm}
\end{minipage}
\end{wrapfigure}

\noindent The reduction of the number of transmitted packages through compression is one of the key techniques to address this issue \citep{seide20141, alistarh2018convergence}. Among others, the \textsc{Sign-SGD} method stands out \citep{bernstein2018signsgd}. Solving the classic optimization problem $\min\limits_{x\in\mathbb R^d} f(x),$
it utilizes an intuitive heuristic that takes the sign of each gradient coordinate (Algorithm \ref{alg:sign-sgd}). In the distributed setup, aggregation is performed by a majority vote on the transmitted signs of the gradients. 

Additionally, \textsc{Sign-SGD} is rapidly gaining popularity, even for single-node training. In contrast to methods such as \textsc{Adam} \citep{kingma2014adam} and \textsc{AdamW} \citep{loshchilov2017decoupled}, which require substantial memory for storing statistics, \textsc{Sign-SGD} is free from this constraint. Moreover, sign-based approaches offer both theoretical and practical advantages over traditional \textsc{SGD} \citep{robbins1951stochastic}, demonstrating superior convergence \citep{balles2018dissecting, balles2020geometry} and empirical performance \citep{kunstner2023noise, zhao2024deconstructing, zmushko2024frugal} in training large models. 

Although \textsc{Sign-SGD} is effectively used both for compression in distributed learning and as a memory-efficient method in a single-node regime, achieving its full potential requires selecting an appropriate stepsize. The optimal choice depends on problem-specific quantities that are unknown in practice, necessitating costly manual tuning. 
To address this issue, we introduce parameter-free \textsc{Sign-SGD} algorithm that employ automatic stepsize selection schemes.

\section{Brief Literature Review and Contributions}
To situate the problem and motivate our algorithms, this section reviews the literature and distills the open challenges that guide our contributions. 
\begin{itemize}[leftmargin=*, nosep]
    \item We begin by revisiting \textsc{Sign-SGD} and identifying the theoretically desirable stepsize that would enable effective training without manual tuning.
    \item Next, we survey parameter-free optimization methods, highlighting their advantages and limitations.
    \item  We conclude by stating the contributions of this work and explaining how they address the gaps.
\end{itemize} 
\subsection{Related work} 
$\bullet$ \textbf{Sign-SGD.} In the original paper on \textsc{Sign-SGD} \citep{bernstein2018signsgd}, the authors explored convergence in the paradigm of finding a near-stationary point, i.e., such $x\in\mathbb R^d$, that $\|\nabla f(x)\| \leqslant \varepsilon$, where $\varepsilon$ represents the accuracy of the solution. 
Moreover, to achieve convergence with respect to the variance term, the authors utilized mini-batches. 
Both this convergence criterion and the use of mini-batches are essential components of the analysis. 
As shown in \citep{karimireddy2019error}, \textsc{Sign-SGD} may fail to converge when considered the regret minimization.
Moreover, to achieve convergence with respect to the variance term, the authors of \citep{karimireddy2019error} utilized mini-batches. Meanwhile, \citet{safaryan2021stochastic} proposed a relaxation of the \textsc{Sign-SGD} method and showed that at least half of the coordinates in the sign of the stochastic gradient align with those of the exact gradient, thereby enabling convergence with respect to the variance term. 
A number of works have also emerged around \textsc{Sign-SGD}, extending it with momentum \citep{sun2023momentum}, providing high-probability convergence bounds \citep{kornilov2025sign}, and studying it in the context of differential privacy \citep{jinnoisy}.
\textit{Nevertheless, the possibility of selecting a stepsize independent of problem properties while achieving optimal convergence rate has been largely overlooked}.

\noindent Let us provide the basic estimate of \textsc{Sign-SGD} convergence with the exact gradient oracles. This can be simply derived from Theorem 1 in \citep{bernstein2018signsgd}:
\begin{align*}
    \frac{1}{T}\sum\limits_{t=0}^{T-1} \|\nabla f(x^t)\|_1 \leqslant \frac{\Delta^*}{\gamma T} + \frac{\gamma L_\infty}{2},
\end{align*}
where $L_\infty$ is the smoothness constant of the objective $f$ with respect to $l_\infty$-norm, and $\Delta^* = f(x^0) - f(x^*)$ represents the initial distance to the solution. Putting
\begin{align}
\label{eq:optimal_sign_step}
\begin{split}
    &\gamma = \frac{\sqrt{\Delta^*}}{\sqrt{L_\infty T}} ~~\text{, ~~we obtain optimal~~} \mathcal{O}\left(\frac{\sqrt{\Delta^* L_\infty}}{\sqrt{T}}\right) ~~\text{convergence rate.}
\end{split}
\end{align}  
This stepsize poses challenges, as it depends on the problem's hyperparameters. To address this issue, we turn to various techniques that facilitate the provision of an adaptive stepsize.

\noindent $\bullet$ \textbf{Parameter-free approaches.}
In the non-smooth setting, considering regret minimization, classic gradient methods \citep{robbins1951stochastic, moulines2011non, stich2019unified, lan2020first} require 
\begin{align}
\label{eq:optimal_sgd_step}
\begin{split}
    &\gamma = \frac{\|x^0 - x^*\|_2}{M\sqrt{T}} ~~\text{~~to have~~} \mathcal{O}\left(\frac{\|x^0 - x^*\|_2 M}{\sqrt{T}}\right) ~~\text{convergence rate.}
\end{split}
\end{align}
This estimate is (worst-case) optimal in its complexity class \citep{nemirovskij1983problem}. We let $M$ denote the Lipschitz constant $\left(\left|f(x) - f(y)\right| \leqslant M \left\|x - y\right\|_2 \text{~for all~} x, y\in\mathbb R^d\right)$. The parameter-free setting aims to adapt the stepsize automatically, without prior knowledge of the initial distance $\left\|x^0 - x^*\right\|_2$ or the Lipschitz constant $M$.

\noindent For the first time, the idea of an automatic stepsize setting was proposed to achieve adaptation to constant $M$. It was embodied in methods such as \textsc{AdaGrad} \citep{duchi2011adaptive}, \textsc{Adam} \citep{kingma2014adam}, \textsc{RMSProp} \citep{tieleman2012lecture}, \textsc{AdaDelta} \citep{zeiler2012adadelta}, and \textsc{Adaptive SGD} \citep{gupta2017unified, attia2023sgd}. In these methods, computed gradients were utilized to adjust the stepsize based on the properties of $M$. \textit{However, these methods required additional memory and computations, and they lacked adaptivity to the initial distance.} Attempts to modify $\gamma$ in \eqref{eq:optimal_sgd_step} led to approaches within the general online stochastic learning setting \citep{orabona2019modern}, such as coin betting and reward-doubling techniques \citep{streeter2012no, orabona2013dimension, mcmahan2014unconstrained, orabona2016coin, cutkosky2018black, cutkosky2019artificial}, which can also be classified as parameter-free algorithms. \textit{Nevertheless, these approaches assumed that the stochastic oracles have some (loose) bound.} 

\noindent Further studies suggested more intricate solutions in parameter-free convex stochastic optimization. These methods achieved asymptotic convergence rates comparable to classic approaches while adapting to essential hyperparameters. The starting point was the work \citep{carmon2022making} which provided adaptivity to the initial distance $\|x^0 - x^*\|_2$ through estimators of the form $\max_{t\leqslant T}~\|x^0 - x^t\|_2$. To find such estimators, the authors employed an additional grid search procedure \textit{which increased the required number of steps only in double-logarithmic time.} The primary objective of this work was to derive high-probability convergence estimates in the stochastic convex non-smooth setup. Several studies that did not utilize the additional search procedure were built upon, including \citep{khaled2023dowg}, \citep{ivgi2023dog} and \citep{kreisler2024accelerated}.

\noindent The work \citep{defazio2023learning} provided another approach for sensitivity to the initial distance. The authors iteratively constructed a sequence upper bounded by $\left\|x^0 - x^*\right\|_2$ and approximated it accordingly. \textit{However, they considered only exact gradient oracles, which represents a significant limitation.} Later, in \citep{mishchenko2023prodigy}, the authors introduced a damping factor in the denominator to improve convergence in the logarithmic factor's square root. \textit{Nevertheless, theoretical analysis depended on the knowledge of the Lipschitz constant, which is not a parameter-free approach.} We note that the use of the classic \textsc{AdaGrad-Norm} stepsize \citep{duchi2011adaptive, streeter2010less, ward2020adagrad}, possibly with additional factors in the denominators, remains standard for adaptation to $M$.

\noindent The orthogonal approach was presented in the work \citep{mishkin2024directional}. The authors considered a smooth setup and proposed the use of local approximations of the smoothness constant $L$ to achieve adaptivity. However, the authors employed the stepsize $\gamma^t = \frac{\left\|x^{t+1}(\gamma^t) - x^t \right\|_2}{\left\|\nabla f(x^{t+1}(\gamma^t)) - \nabla f(x^t) \right\|_2}$ at the $t$-th iteration, where $\gamma^t$ was determined by exponential search in the manner \citep{carmon2022making} or by Newton's method. \textit{Both variants are inefficient.}

In light of the literature, we present the main directions of this study. Our goal is to provide the parameter-free \textsc{Sign-SGD} method that achieves a convergence rate comparable to the optimal stepsize tuning \ref{eq:optimal_sign_step}.

\subsection{Contributions}\label{sec:contribution}
We propose a novel mechanism for estimators compared to existing approaches. Instead of the classic $\|x^0 - x^*\|$ and $M$ hyperparameters in \eqref{eq:optimal_sgd_step}, we aim to gain the tolerance to $f(x^0) - f(x^*)$ and $L_\infty$ from \eqref{eq:optimal_sign_step}. We now outline our contributions.

\noindent $\bullet$ \textbf{Parameter-free \textsc{Sign-SGD}.} We introduce a parameter-free \textsc{Sign-SGD} method. The core idea involves per-iteration step-size adaptation. Every iteration, we choose estimators of $L_\infty$ and $f(x^0) - f(x^*)$ using the current gradient information. This design is practical, as it requires no additional hyperparameter search or restarts.
As a starting point, we analyze the exact gradients setup.\\
$\bullet$ \textbf{Stochastic and distributed settings.} We study our algorithm in the distributed setting and the case of stochastic gradient oracles. A lack of stochastic analysis presents a significant drawback in parameter-free optimization. Our work addresses this limitation. \\
$\bullet$ \textbf{Practical extensions.} We extend our approach in two important directions. 
\begin{itemize}[nosep]
\vspace{-2mm}
\item We incorporate momentum to improve practical performance.
\item We provide a memory-efficient parameter-free version. It stores only the sign of the gradient from the previous step while remaining an adaptivity to the problem properties.
\end{itemize}
$\bullet$ \textbf{Theoretical analysis.} We provide a comprehensive theoretical analysis of the proposed methods and establish convergence guarantees. In our setup, we consider a convex and smooth objective.\\
$\bullet$ \textbf{Experimental validation.} We demonstrate that our methods are competitive in practical tasks, including LLM and ViT training. An Adam-style momentum variant further improves performance across both language and vision benchmarks. Empirically, parameter-free training matches or is slightly below tuned \textsc{Sign-SGD} and AdamW with cosine schedules, while achieving appreciably better overall training time.

\section{Algorithms and Convergence Analysis}\label{sec:alg_analysis}
$\bullet$ \textbf{Notation.} We begin with the following notation: $\mathbb{E}[\cdot]$ denotes the expected value of a random variable, $\|x\|_2 = \sqrt{\langle x, x \rangle}$ represents the Euclidean norm of the vector $x \in \mathbb{R}^d$, $\|x\|_1 = \sum_{i=1}^d |x_i|$ refers to the $\ell_1$-norm of the vector $x$, and $\|x\|_\infty = \max_{i \in [d]} |x_i|$ defines the $\ell_{\infty}$-norm of the vector $x$.

\noindent $\bullet$ \textbf{Assumptions.} We present the assumptions regarding the objective function $f$ 

\begin{assumption}\label{as:smoothness}
    The function $f$ is $L_\infty$-smooth, i.e., it satisfies $\left\|\nabla f(x) - \nabla f(y)\right\|_1 \hspace{-1mm}\leqslant \hspace{-1mm}L_\infty\|x - y\|_\infty$ for any $x, y \in \mathbb{R}^d$.
\end{assumption}

\begin{assumption}\label{as:convexity}
    The function $f$ is convex, i.e., it satisfies $f(x) \leqslant f(y) + \langle \nabla f(x), x - y \rangle \text{ ~for any ~} x, y \in \mathbb{R}^d.$
\end{assumption}
\noindent Although neural networks are inherently non-convex, theoretical analysis under convexity assumptions remains relevant. Recent studies suggest that deep neural networks often exhibit properties similar to convexity in certain regions, making insights from convex analysis applicable \citep{kleinberg2018alternative, zhou2019sgd, liu2022loss}. Moreover, convex optimization serves as a theoretical foundation for the design of optimization algorithms. For example, momentum \citep{nesterov2018lectures} and AdaGrad \citep{duchi2011adaptive} were initially developed and analyzed for convex problems.
\begin{assumption}\label{as:func_optimum}
    The function $f$ has a (maybe not unique) finite minimum, i.e., $f(x^*) = \inf\nolimits_{x \in \mathbb{R}^d} f(x) > - \infty.$
\end{assumption}
\noindent Now we move to the base point of our analysis: the algorithms with exact gradient oracles.


\subsection{Exact gradients setting}

We begin with an additional assumption regarding the gradient oracles.
\begin{assumption}\label{as:go_exact_gradient}
    At any point $x\in\mathbb R^d$, we have access to the exact gradient, i.e., we can compute the full gradient value $\nabla f(x)$.
\end{assumption}

\noindent We now present the main algorithm of this paper named \textsc{ALIAS} (Automatic Local per-Iteration Approximation of the Stepsize, Algorithm \ref{alg:sign-sgd_adaptation}). At each iteration, it utilizes the stepsize selection in a specific manner to gain adaptivity to the global parameters of the problem. Below, we provide an explanation of the algorithm and offer some intuition why the presented stepsize facilitates adaptivity. Considering the stepsize in \eqref{eq:optimal_sign_step}, we need to approximate the numerator and denominator. Thus, we first analyze how to estimate $\Delta^*$, and then proceed with $L_\infty$.

We start with a positive scalar $d^0$, representing the initial approximation of $\Delta^*$. Next, we construct a new approximation based on the newly calculated gradient (Line \ref{alg:adaptation_line6}) at each iteration of the algorithm. To bring these approximations closer to $\Delta^*$ over iterations, we take the maximum of the previous and newly computed values (Line \ref{alg:adaptation_line7}). This approach yields an non-decreasing sequence that is upper bounded by $\Delta^*$ (see Lemma \ref{lemma_approximating_sequence}). We adopt this iterative scheme as Option I in Algorithm \ref{alg:sign-sgd_adaptation} (Line \ref{alg:adaptation_line9}). 

\noindent We note that estimating $\Delta^*$ does not require advanced schemes such as Option I for most tasks, as adaptivity to $f(x^*)$ is typically not critical. As shown in \citep{boyd2003subgradient}, the condition $f(x^*) = 0$ arises in problems such as finding a point in the intersection of convex sets, completing positive semi-definite matrices, or solving systems of convex inequalities.
\begin{wrapfigure}{r}{0.54\textwidth}
\begin{minipage}{0.54\textwidth}
\vspace{-5mm}
\begin{algorithm}[H]
\caption{\textsc{ALIAS}}
   \label{alg:sign-sgd_adaptation}
\begin{algorithmic}[1]
    \State {\bfseries Input:} Starting point $x^0\hspace{-1mm}\in\hspace{-1mm}\mathbb R^d$, initial $L_\infty$-approximation $\eta^{-1} = 0$, initial $\Delta^*$-approximation $d^0$ $\in\hspace{-1mm}\mathbb R_{+}$, lower bound $\widetilde{f}$ on $f(x^*)$, number of iterations $T$
    \For{$t=0,\ldots,T-1$}
    \State Compute gradient $\nabla f(x^t)$
    \State \label{alg:adaptation_line4} $\eta^t = \eta^{t-1} + \frac{\left\|\nabla f(x^{t}) - \nabla f(x^{t-1})\right\|_1}{\left\|x^{t} - x^{t-1}\right\|_\infty};~ \lambda^t = \frac{1}{\sqrt{\eta^t}}$
    \If{$t\neq 0$}
    \State \label{alg:adaptation_line6} $\widetilde{d}^t = \sum\nolimits_{i=0}^{t-1}\gamma^i\langle\nabla f(x^{i+1}), \text{sign}(\nabla f(x^{i}))\rangle$
    \State \label{alg:adaptation_line7} $d^t = \max\left(d^{t-1}, \widetilde{d}^t\right)$
    \EndIf
    \State \label{alg:adaptation_line9}Option I: $\gamma^t = \lambda^t \sqrt{d^t}$
    \State \label{alg:adaptation_line10}Option II: $\gamma^t = \lambda^t \sqrt{f(x^0) - \widetilde{f}}$
    \State \label{alg:adaptation_line11}$x^{t+1} = x^t - \gamma^t \text{sign}(\nabla f(x^{t}))$
    \EndFor
\end{algorithmic}
\end{algorithm}
\vspace{-14mm}
\end{minipage}
\end{wrapfigure}
Moreover, a lower bound $\widetilde{f}$ on $f(x^*)$ is often known or readily available. For instance, $\widetilde{f} = 0$ serves as a valid estimate in the empirical risk minimization setting. Taking this into account, we present the second option for our method, where we use $f(x^0) - \widetilde{f}$ with $\widetilde{f} \leqslant f(x^*)$ (Line \ref{alg:adaptation_line10}) instead of the sequence $\{d^t\}_{t=0}^{T-1}$.

\noindent As for the denominator, at the $t$-th iteration, we approximate the local Lipschitz constant $L_\infty$ between $x^t$ and $x^{t-1}$. We accumulate it in the manner of \textsc{AdaGrad-Norm} by adding it to the sum of previous approximations:
\begin{equation*}
    \eta^{t} = \eta^{t-1} + \frac{\left\|\nabla f(x^{t}) - \nabla f(x^{t-1})\right\|_1}{\left\|x^{t} - x^{t-1}\right\|_\infty}.
\end{equation*}
In the stepsize, the corresponding to the denominator coefficient appears as:
\begin{align*}
    \lambda^t = \frac{1}{\sqrt{\eta^t}} =\frac{1}{\sqrt{\sum\nolimits_{i=0}^{t-1}\frac{\left\|\nabla f(x^{i+1}) - \nabla f(x^i)\right\|_1}{\left\|x^{i+1} - x^i\right\|_\infty}}}.
\end{align*}

\noindent This stepsize facilitates iterative adaptation to the objective landscape. We are now prepared to present the main theoretical results of this section.
\begin{theorem}\label{th:adaptation_main}
    Suppose Assumptions \ref{as:smoothness}, \ref{as:convexity}, \ref{as:func_optimum}, \ref{as:go_exact_gradient} hold. We denote $\varepsilon \hspace{-1mm}\geqslant \hspace{-1mm} \frac{1}{T}\sum\nolimits_{t=0}^{T-1} \|\nabla f(x^t)\|_1,$ $L_\infty^{0} = \frac{\left\|\nabla f(x^{1}) - \nabla f(x^0)\right\|_1}{\left\|x^{1} - x^0\right\|_\infty}$. Then Algorithm \ref{alg:sign-sgd_adaptation} with $d^0 < \Delta^*$ to reach $\varepsilon$-accuracy needs
    \begin{align*}
  \mathcal{\widetilde{O}}\left(\frac{\left(\Delta^*\right)^2 \left(L_{\infty}\right)^3}{d^0 \left(L_\infty^0\right)^2 \varepsilon^2}\right) ~~ \text{and}~~ \mathcal{\widetilde{O}}\left(\frac{\Delta^* \left(L_{\infty}\right)^3}{\left(L_\infty^0\right)^2 \varepsilon^2}\right) ~~\text{iterations with Options I and II, respectively.}
    \end{align*}
\end{theorem}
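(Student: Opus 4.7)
The plan is to start from the standard sign-SGD descent lemma and then leverage the structure of the adaptive stepsize $\gamma^t = \lambda^t\sqrt{D^t}$, where $D^t = d^t$ for Option~I and $D^t = f(x^0)-\widetilde{f}$ for Option~II. Since $x^{t+1}-x^t = -\gamma^t\,\mathrm{sign}(\nabla f(x^t))$ gives $\|x^{t+1}-x^t\|_\infty = \gamma^t$ and $\langle\nabla f(x^t), x^{t+1}-x^t\rangle = -\gamma^t\|\nabla f(x^t)\|_1$, Assumption~\ref{as:smoothness} yields $f(x^{t+1}) \leq f(x^t) - \gamma^t\|\nabla f(x^t)\|_1 + \tfrac{L_\infty}{2}(\gamma^t)^2$. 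Telescoping over $t=0,\ldots,T-1$ and invoking Assumption~\ref{as:func_optimum} produces the core inequality
\[
\sum_{t=0}^{T-1}\gamma^t\|\nabla f(x^t)\|_1 \;\leq\; \Delta^* + \tfrac{L_\infty}{2}\sum_{t=0}^{T-1}(\gamma^t)^2.
\]

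Next I would exploit two monotonicity facts. By Lemma~\ref{lemma_approximating_sequence}, whose proof uses Assumption~\ref{as:convexity}, the sequence $\{d^t\}$ is non-decreasing with $d^0 \leq d^t \leq \Delta^*$; for Option~II the constant $D^t = f(x^0)-\widetilde{f}$ satisfies $D^t \geq \Delta^*$ since $\widetilde{f} \leq f(x^*)$. On the other hand, $S_t := \sum_{i=0}^{t-1}\ell_i$ with $\ell_i := \|\nabla f(x^{i+1})-\nabla f(x^i)\|_1/\|x^{i+1}-x^i\|_\infty$ is non-decreasing; Assumption~\ref{as:smoothness} forces $\ell_i \leq L_\infty$, giving $\lambda^t = 1/\sqrt{S_t} \geq 1/\sqrt{tL_\infty}$, while $\ell_0 = L_\infty^0$ provides the floor $S_t \geq L_\infty^0$ for $t\geq 1$. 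Plugging $\gamma^t=\lambda^t\sqrt{D^t}$ into the core inequality, I would lower-bound the LHS by pulling out the smallest factors, $\sqrt{D_{\min}}\,\lambda^{T-1}\sum_t\|\nabla f(x^t)\|_1 \geq \sqrt{D_{\min} T/L_\infty}\,\varepsilon$ with $D_{\min}=d^0$ or $D_{\min}=f(x^0)-\widetilde{f}$, and upper-bound the RHS by $\Delta^* + (L_\infty D_{\max}/2)\sum_t 1/S_t$ with $D_{\max}=\Delta^*$ or $D_{\max}=f(x^0)-\widetilde{f}$. The ratio $D_{\max}/D_{\min}$ equals $\Delta^*/d^0$ for Option~I but collapses in Option~II, which is precisely what accounts for the different prefactors in the two stated complexities.

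The main obstacle will be controlling $\sum_{t=1}^{T-1}1/S_t$ sharply enough: the naive floor $S_t \geq L_\infty^0$ gives the estimate $T/L_\infty^0$, which grows linearly in $T$ and obliterates any convergence rate. A refined argument must combine the AdaGrad-style telescoping identity $\sum_t \ell_t/S_t \leq \log(S_T/S_1)$ with the uniform ceiling $\ell_t \leq L_\infty$ from Assumption~\ref{as:smoothness}, most plausibly by splitting the iterations according to whether $\ell_t$ is close to the floor $L_\infty^0$ or close to $L_\infty$, so as to replace the linear-in-$T$ bound by one of order $L_\infty/(L_\infty^0)^2$ (up to logarithmic factors), matching the denominator in the stated complexity. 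Once this sum is controlled, rearranging the resulting inequality into $\varepsilon \leq \mathrm{poly}(\Delta^*, L_\infty, L_\infty^0, d^0)/\sqrt{T}$ and inverting for $T$ yields both stated rates and is routine.
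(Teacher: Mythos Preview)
Your plan has a genuine gap at the very first step, and this is exactly where the paper's argument diverges from yours. You start from the \emph{smoothness} descent inequality
\[
f(x^{t+1}) \leq f(x^t) - \gamma^t\|\nabla f(x^t)\|_1 + \tfrac{L_\infty}{2}(\gamma^t)^2,
\]
which forces the global constant $L_\infty$ into the penalty and leaves you with $\tfrac{L_\infty}{2}\sum_t(\gamma^t)^2 = \tfrac{L_\infty D_{\max}}{2}\sum_t 1/S_t$. The sum $\sum_t 1/S_t$ is \emph{not} an AdaGrad sum: the numerator is $1$, not $\ell_t$, so the telescoping identity $\sum_t \ell_t/S_t \lesssim \log(S_T/S_1)$ simply does not apply, and your proposed ``splitting according to whether $\ell_t$ is near $L_\infty^0$ or $L_\infty$'' cannot rescue it. Concretely, take $\ell_0=L_\infty^0$ and all later $\ell_i$ arbitrarily small; then $S_t\approx L_\infty^0$ for every $t\geq 1$ and $\sum_{t\geq 1}1/S_t\approx (T-1)/L_\infty^0$. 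Your right-hand side then grows linearly in $T$ and no convergence rate can be extracted.

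The paper's fix is to use the \emph{convexity}-based descent lemma (Lemma~\ref{descent_lemma_aid_sign_sgd}): from $f(x^{t+1})\leq f(x^t)+\langle\nabla f(x^{t+1}),x^{t+1}-x^t\rangle$ one gets
\[
f(x^{t+1}) \leq f(x^t) - \gamma^t\|\nabla f(x^t)\|_1 + \gamma^t\|\nabla f(x^{t+1})-\nabla f(x^t)\|_1 = f(x^t) - \gamma^t\|\nabla f(x^t)\|_1 + (\gamma^t)^2\,L_\infty^t,
\]
with the \emph{local} ratio $L_\infty^t=\ell_t$ in the penalty. Now the quadratic term sums to $D_{\max}\sum_t (\lambda^t)^2 L_\infty^t = D_{\max}\sum_t \ell_t/S_t$, which \emph{is} of AdaGrad type. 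The paper then carries out precisely the splitting you sketched, but on the correct sum, obtaining $\sum_t \ell_t/S_t \leq 2L_\infty/L_\infty^0 + 4\log(L_\infty T/L_\infty^0)$ (this is where the factor $(L_\infty/L_\infty^0)$ and eventually $(L_\infty/L_\infty^0)^2$ in the complexity come from). After that, the lower bound on the left-hand side via $\gamma^t\geq\sqrt{D_{\min}}\,\lambda^{T-1}$ and $1/\lambda^{T-1}\leq\sqrt{L_\infty T}$ proceeds exactly as you describe. In short, everything downstream in your plan is right, but the engine that makes the AdaGrad telescoping available is the convexity-based one-step bound producing $L_\infty^t$ rather than $L_\infty$; once you swap that in, your outline becomes the paper's proof.
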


\begin{remark}\label{rem:adaptation_main}
    Under conditions of Theorem \ref{th:adaptation_main}, Algorithm \ref{alg:sign-sgd_adaptation} with $\lambda^t = \frac{1}{\sqrt{L_\infty + \sum\limits_{i=0}^{t-1}\frac{\left\|\nabla f(x^{i+1}) - \nabla f(x^i)\right\|_1}{\left\|x^{i+1} - x^i\right\|_\infty}}}$ to reach $\varepsilon$-accuracy, where $\varepsilon \geqslant \frac{1}{T}\sum\nolimits_{t=0}^{T-1}\left\|\nabla f(x^t)\right\|_1$, needs
    \begin{eqnarray*}
        \mathcal{\widetilde{O}}\left(\frac{(\Delta^*)^2 L_\infty}{d^0\varepsilon^2}\right) ~~\text{and}~~ \mathcal{\widetilde{O}}\left(\frac{\Delta^* L_\infty}{\varepsilon^2}\right) ~~\text{iterations with Options I and II, respectively.}
    \end{eqnarray*}
\end{remark}

\paragraph{Discussion of the results.}\label{sec:discussion:exact}

Since we provide convergence guarantees for finding near-stationary points for a convex objective, we first examine the relationship between convergence rates in convex and non-convex settings. For instance, we compare gradient descent rates using the gradient norm as the convergence criterion.
While the behavior of gradient norm minimization is well understood in the non-convex setting \citep{arjevani2023lower}, it is specific in the context of convex optimization. Notably, \citet{allen2018make} showed that vanilla gradient descent -- without acceleration or additional techniques -- achieves the same $\mathcal{O}\left(\nicefrac{1}{\varepsilon^2}\right)$ rate for finding near-stationary points in both convex and non-convex settings.
However, as previously noted, \textsc{Sign-SGD} does not admit convergence guarantees beyond any criterion except the gradient norm, even in the convex case. Consequently, convergence analysis for sign-based methods must be framed in terms of finding the near-stationary point. Thus, our convex rate is not superior to that of the non-convex case.
Moreover, the bound in Theorem \ref{th:adaptation_main} includes an additional factor of $\left(\nicefrac{L_\infty}{L_\infty^0}\right)^2$ compared to Remark \ref{rem:adaptation_main}. However, the algorithm analyzed in Remark \ref{rem:adaptation_main} is not parameter-free: it requires prior knowledge of $L_\infty$. In Appendix \ref{section:additional_plots_appendix}, we present empirical results for varying values of $L_\infty$, which demonstrate that this additive factor has negligible impact on the practical convergence of Algorithm \ref{alg:sign-sgd_adaptation}.

\noindent So far, we propose an algorithm and provide the theoretical analysis behind it. However, the analysis assumes access to exact gradient oracles -- an unrealistic assumption in practice. We now extend the analysis to more realistic scenarios involving stochastic oracles.

\subsection{Stochastic gradients setting}

We begin with the assumption regarding the gradient oracles.
\begin{assumption}\label{as:go_stochastic_gradient}
    At any point $x\in\mathbb R^d$ we have access to the stochastic gradient, i.e., we can compute $g_{\xi}(x) = \nabla f(x, \xi)$ -- the stochastic gradient value with respect to the randomness in the choice of samples $\xi$. Additionally, the variance of these stochastic estimators is coordinate-wise bounded, i.e., $\mathbb E \left(\left[g_{\xi}(x)\right]_i - \left[\nabla f(x)\right]_i\right)^2 \leqslant \sigma_i^2$.     Furthermore, this implies that $\mathbb E \left\|g_{\xi}(x) - \nabla f(x)\right\|_1 \leqslant \left\|\sigma\right\|_1$.
\end{assumption}
\noindent It is a classic assumption in stochastic optimization \citep{bernstein2018signsgd}. Furthermore, the batch gradient $g_{\xi}$ typically exhibits smoothness \citep{liu2023near}. Thus, we introduce an additional assumption.
\begin{assumption}\label{as:stochastic_smoothness}
    The stochastic function $f_{\xi}$ is $L_\infty$-smooth according to the realization $\xi$, i.e., it satisfies $\left\|g_{\xi}(x) \hspace{-1mm} - \hspace{-1mm} g_{\xi}(y)\right\|_1 \leqslant L_\infty\|x - y\|_\infty$ for any $x, y \in \mathbb{R}^d$, $\xi$.
\end{assumption}
\noindent The stochastic formulation of the problem (Assumption \ref{as:go_stochastic_gradient}) necessitates modifications of Algorithm \ref{alg:sign-sgd_adaptation}. This algorithm assumes access to the exact gradients, and the estimation of the local smoothness constant relies on computing full gradients. Thus, our goal is to modify Line \ref{alg:adaptation_line4} in Algorithm \ref{alg:sign-sgd_adaptation}. Utilizing Assumption \ref{as:stochastic_smoothness}, we can construct a local approximation of $L_\infty$ on the $t$-th iteration via stochastic gradients with respect to the stochastic realization $\xi^t$. Namely,
\begin{align*}
    \lambda^t =\frac{1}{\sqrt{\sum\nolimits_{i=0}^{t-1}\frac{\left\|g_{\xi^{i+1}}^{i+1} - g_{\xi^{i+1}}^{i}\right\|_1}{\left\|x^{i+1} - x^i\right\|_\infty}}},
\end{align*}
where $g_{\xi^t}^{t}$ is the stochastic gradient computed at the $t$-th iteration based on the stochastic realization $\xi^t$. We query the oracle twice per iteration, utilizing the current and subsequent stochastic realizations. Another change in Algorithm \ref{alg:sign-sgd_adaptation} involves performing a step in Line \ref{alg:adaptation_line11} regarding $\text{sign}(g_{\xi^t}^t)$.
In the subsequent theoretical analysis, we focus solely on Option II in Algorithm \ref{alg:sign-sgd_adaptation}. We provide a formal description of the stochastic method, Algorithm \ref{alg:sign-sgd_adaptation_stochgastic}, in Appendix \ref{section:stochastic_gradient_adaptation_proofs}. There, we present both the practical and theoretical versions.

\noindent We now present the convergence results.

\begin{theorem}\label{th:adaptation_stochastic_main}
    Suppose Assumptions \ref{as:stochastic_smoothness}, \ref{as:convexity}, \ref{as:func_optimum}, \ref{as:go_stochastic_gradient} hold. Then Algorithm \ref{alg:sign-sgd_adaptation} with Option II to reach $\varepsilon$-accuracy, where $\varepsilon \geqslant \sum\nolimits_{t=0}^{T-1}\mathbb E\left[\frac{\gamma^t}{\sum\limits_{t=0}^{T-1} \gamma^t} \left\|\nabla f(x^t)\right\|_1\right]$ and $L_\infty^{t, \xi^{t+1}} = \frac{\left\|g_{\xi^{t+1}}^{t+1} - g_{\xi^{t}}^{t}\right\|_1}{\left\|x^{t+1} - x^t\right\|_\infty}$, needs
    \begin{align*}
        &\mathcal{\widetilde{O}}\left(\frac{\Delta^* \left(L_{\infty}\right)^{3}}{\varepsilon^2}\left(\mathbb E \left(\frac{1}{L_\infty^{0, \xi^1}}\right)^2\right) + \left\|\sigma\right\|_1^2 L_\infty\left(\mathbb E \frac{1}{\underset{0\leqslant t\leqslant T-1}{\min} L_\infty^{t, \xi^{t+1}}}\right) \right) ~~\text{iterations}.
    \end{align*}
\end{theorem}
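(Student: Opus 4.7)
The plan is to extend the exact-gradient analysis of \textsc{ALIAS Sign-SGD} (Theorem \ref{th:adaptation_main}) to the stochastic setting, carefully tracking the bias introduced by using $\text{sign}(g^t_{\xi^t})$ in place of $\text{sign}(\nabla f(x^t))$. First I would start from Assumption \ref{as:stochastic_smoothness} and the standard $L_\infty$-descent lemma applied to the sign update $x^{t+1} - x^t = -\gamma^t \text{sign}(g^t_{\xi^t})$, using $\|\text{sign}(\cdot)\|_\infty \leqslant 1$:
\begin{align*}
    f(x^{t+1}) \leqslant f(x^t) - \gamma^t \langle \nabla f(x^t), \text{sign}(g^t_{\xi^t}) \rangle + \frac{L_\infty (\gamma^t)^2}{2}.
\end{align*}
Combine this with the well-known sign-SGD inner-product inequality $\langle \nabla f(x^t), \text{sign}(g^t_{\xi^t}) \rangle \geqslant \|\nabla f(x^t)\|_1 - 2\|g^t_{\xi^t} - \nabla f(x^t)\|_1$, telescope over $t = 0, \ldots, T-1$, and invoke $\widetilde{f} \leqslant f(x^*)$ to arrive at the master inequality
\begin{align*}
    \sum_{t=0}^{T-1} \gamma^t \|\nabla f(x^t)\|_1 \leqslant \Delta^0 + \frac{L_\infty}{2}\sum_{t=0}^{T-1} (\gamma^t)^2 + 2\sum_{t=0}^{T-1} \gamma^t \|g^t_{\xi^t} - \nabla f(x^t)\|_1,
\end{align*}
where $\Delta^0 = f(x^0) - \widetilde{f}$.

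The next step is to divide by $\sum_t \gamma^t$ so that the left-hand side becomes exactly the quantity denoted $\varepsilon$ in the theorem, and then to take expectation. This reduces the proof to controlling three expectations: a ``deterministic'' term $\mathbb{E}[\Delta^0/\sum_t \gamma^t]$, a ``curvature'' term $\mathbb{E}[L_\infty \sum_t (\gamma^t)^2/\sum_t \gamma^t]$, and a ``noise'' term $\mathbb{E}[\sum_t \gamma^t \|g^t_{\xi^t} - \nabla f(x^t)\|_1/\sum_t \gamma^t]$. For the denominator, Assumption \ref{as:stochastic_smoothness} gives the uniform bound $L^{i,\xi^{i+1}}_\infty \leqslant L_\infty$, so $\lambda^t \geqslant 1/\sqrt{t L_\infty}$ and hence $\sum_t \gamma^t \gtrsim \sqrt{\Delta^0 T/L_\infty}$. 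For the curvature term, I would apply the AdaGrad-type summation lemma to $\sum_{t=1}^{T-1} 1/\sum_{i=0}^{t-1} L^{i,\xi^{i+1}}_\infty$, which yields a logarithmic factor along with $1/L^{0,\xi^1}_\infty$ (by dropping all but the very first term in the inner sum). Combining with the $L_\infty$ coefficient and the lower bound on $\sum_t \gamma^t$ produces the $\Delta^* L_\infty^3 \cdot \mathbb{E}(1/L^{0,\xi^1}_\infty)^2 / \varepsilon^2$ contribution after setting the result equal to $\varepsilon$.

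The noise term needs more care because $\gamma^t$ is itself random and depends on $\xi^t$ through the last summand of $\lambda^t$. My plan is to use Cauchy–Schwarz in the form $\sum_t \gamma^t \|g^t_{\xi^t} - \nabla f(x^t)\|_1 \leqslant (\max_t \gamma^t)\sum_t \|g^t_{\xi^t} - \nabla f(x^t)\|_1$, and then bound $\max_t \gamma^t \leqslant \sqrt{\Delta^0}/\sqrt{\min_t L^{t,\xi^{t+1}}_\infty}$ times a log factor, while $\mathbb{E}\|g^t_{\xi^t} - \nabla f(x^t)\|_1 \leqslant \|\sigma\|_1$ by Assumption \ref{as:go_stochastic_gradient}. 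Dividing by the same lower bound on $\sum_t \gamma^t$ and reorganizing produces the $\|\sigma\|_1^2 L_\infty \cdot \mathbb{E}(1/\min_t L^{t,\xi^{t+1}}_\infty)/\varepsilon^2$ contribution.

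The main obstacle, as in the exact-gradient proof but now amplified, will be the intertwined randomness of $\gamma^t$ and the noise $g^t_{\xi^t} - \nabla f(x^t)$ within the same iteration: $\lambda^t$ contains $\xi^t$ through its $(t-1)$-th term, so one cannot simply factor $\mathbb{E}[\gamma^t \cdot \text{noise}]$ as $\mathbb{E}[\gamma^t]\mathbb{E}[\text{noise}]$. A clean way around this is to either replace $\max_t \gamma^t$ by a deterministic upper estimate expressed only through $\min_t L^{t,\xi^{t+1}}_\infty$ (which is admissible inside the expectation), or to pass to the filtration generated by $\xi^1,\ldots,\xi^{t-1}$, under which $\gamma^t$ and the noise remain coupled but can still be separated via Jensen's inequality after conditioning. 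Verifying that this decoupling preserves the precise form of the $L_\infty^{0,\xi^1}$ and $\min_t L_\infty^{t,\xi^{t+1}}$ factors appearing in the theorem statement is the technical crux of the argument.
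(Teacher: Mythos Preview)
Your plan has a real gap in the curvature term. You start from the smoothness descent lemma, which produces $\tfrac{L_\infty}{2}\sum_t(\gamma^t)^2$ with the \emph{global} constant $L_\infty$. The paper instead starts from convexity, $f(x^{t+1})-f(x^t)\leqslant\langle\nabla f(x^{t+1}),x^{t+1}-x^t\rangle$, then adds and subtracts $g^t_{\xi^{t+1}}$ and $g^{t+1}_{\xi^{t+1}}$ to isolate $\gamma^t\|g^{t+1}_{\xi^{t+1}}-g^t_{\xi^{t+1}}\|_1$, and rewrites this as $(\gamma^t)^2 L_\infty^{t,\xi^{t+1}}$ via $\|x^{t+1}-x^t\|_\infty=\gamma^t$. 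The \emph{local} constant in the numerator is precisely what makes the AdaGrad summation work: $\sum_t L_\infty^{t,\xi^{t+1}}\big/\sum_{i<t}L_\infty^{i,\xi^{i+1}}$ telescopes (after the usual split at the first index where the prefix sum dominates the next term) to $2L_\infty/L_\infty^{0,\xi^1}+O(\log(L_\infty T/L_\infty^{0,\xi^1}))$. With your constant numerator you are left with $L_\infty\sum_t 1\big/\sum_{i<t}L_\infty^{i,\xi^{i+1}}$, which does not telescope; ``dropping all but the very first term in the inner sum'' bounds each summand by $1/L_\infty^{0,\xi^1}$ but then the sum is $T/L_\infty^{0,\xi^1}$, not a logarithm. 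The best honest bound on your sum is $(\log T)/\min_t L_\infty^{t,\xi^{t+1}}$, which would push $\min_t L_\infty^{t,\xi^{t+1}}$ (not $L_\infty^{0,\xi^1}$) into the deterministic part and yield a strictly weaker statement than the theorem.

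On the noise decoupling, the paper's route is simpler than either of the options you sketch. It applies H\"older with $p=q=2$ directly to each term $\mathbb E\bigl[\gamma^t\|\nabla f(x^t)-g^t_{\xi^t}\|_1\big/\sum_s\gamma^s\bigr]\leqslant(\mathbb E\|\nabla f(x^t)-g^t_{\xi^t}\|_1^2)^{1/2}\cdot(\mathbb E[\gamma^t/\sum_s\gamma^s]^2)^{1/2}$, and likewise to the curvature term. The first factor is $\leqslant\|\sigma\|_1$ by Assumption~\ref{as:go_stochastic_gradient}; the second is handled deterministically inside the expectation via $(\lambda^t)^2\leqslant 1/(t\cdot\min_i L_\infty^{i,\xi^{i+1}})$ and $\sum_s\gamma^s\gtrsim\sqrt{\Delta^0 T/L_\infty}$. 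No conditioning or $\max_t\gamma^t$ trick is needed, and this same H\"older step applied to $\mathbb E\bigl[\sum_t(\gamma^t)^2L_\infty^{t,\xi^{t+1}}\big/\sum_s\gamma^s\bigr]$ is exactly where the $\bigl(\mathbb E(1/L_\infty^{0,\xi^1})^2\bigr)^{1/2}$ factor in the statement originates.
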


\begin{remark}\label{rem:adaptation_stochastic_main}
Under the conditions of Theorem \ref{th:adaptation_stochastic_main}, Algorithm \ref{alg:sign-sgd_adaptation} with $\lambda^t = \frac{1}{\sqrt{L_\infty + \sum\limits_{i=0}^{t-1}\frac{\left\|g_{\xi^{i+1}}^{i+1} - g_{\xi^{i}}^{i}\right\|_1}{\left\|x^{i+1} - x^i\right\|_\infty}}}$, Option II and mini-batch of the size $t+1$ at $t$-th iteration, to reach $\varepsilon$-accuracy needs
    \begin{eqnarray*}
        \mathcal{\widetilde{O}}\biggl(\frac{\Delta^* L_{\infty}}{\varepsilon^2} + \frac{\left\|\sigma\right\|_1^2 L_\infty}{\varepsilon^2}\biggl(\mathbb E \frac{1}{\underset{0\leqslant t\leqslant T-1}{\min} L_\infty^{t, \xi^{t+1}}}\biggr) \biggr) ~~ \text{iterations,}
    \end{eqnarray*}
    where $\varepsilon \geqslant \frac{1}{T}\sum\limits_{t=0}^{T-1}\left\|\nabla f(x^t)\right\|_1, L_\infty^{t, \xi^{t+1}} = \frac{\left\|g_{\xi^{t+1}}^{t+1} - g_{\xi^{t}}^{t}\right\|_1}{\left\|x^{t+1} - x^t\right\|_\infty}$.
\end{remark}

\paragraph{Discussion of the results.}\label{sec:discussion_stochastic}
With Assumption \ref{as:stochastic_smoothness}, a more stringent version of Assumption \ref{as:smoothness}, we approximate the smoothness constant via stochastic gradients. The key point is to measure the gradient at the current point while considering the stochastic realization from the next iteration. Since $x^t$, $\xi^t$, and $\xi^{t+1}$ are independent, we can provide a theoretical analysis. Thus, we surpass works such as \citep{defazio2023learning, mishchenko2023prodigy, mishkin2024directional}, which employed a similar idea of the adaptation to the Lipschitz constant but lacked a stochastic analysis. Notably, the result of Theorem \ref{th:adaptation_stochastic_main} achieves convergence only to a neighborhood, the size of which is determined by the variance. This rate fully aligns with the original \textsc{Sign-SGD} convergence \citep{bernstein2018signsgd}. To address it in theory, we introduce increasing mini-batches analogously to \citep{bernstein2018signsgd} in Remark \ref{rem:adaptation_stochastic_main}.
We note that mini-batching enables convergence guarantees concerning the variance term; however, the method remains parameter-free even without it. In our experiments, we do not employ mini-batching.

We develop an analysis not only for the stochastic setting, but also for the distributed one. A full description of Algorithm \ref{alg:sign-sgd_adaptation} in the distributed setup, along with theoretical statements and proofs, is presented in Appendix \ref{section:distributed_adaptation_proofs}.

\noindent Above, we present an algorithm that can be easily applied to practical tasks. It does not require multiple restarts or additional search procedures. However, Algorithm \ref{alg:sign-sgd_adaptation} lacks the main advantage of the original \textsc{Sign-SGD} method. Indeed, performing a step on the $t$-th iteration requires storing the entire gradient $\nabla f(x^{t-1})$ instead of just its sign. To address this limitation, we propose a memory-efficient modification in the next section.

\subsection{Memory-efficient \textsc{ALIAS}} \label{sec:memory-efficient-algorithm}

In Algorithm \ref{alg:sign-sgd_adaptation}, memory efficiency is sacrificed to achieve a parameter-free stepsize. Indeed, 
\begin{equation*}
    \gamma^t = \lambda^t\sqrt{d^t} = \sqrt{\frac{d^t}{\eta^t}} = \sqrt{\frac{\sum\nolimits_{i=0}^{t-1} \gamma^i\left\langle\nabla f(x^{i+1}), \text{sign}\left(\nabla f(x^i)\right)\right\rangle}{\sum\nolimits_{i=0}^{t-1} \frac{\left\|\nabla f(x^{i+1}) - \nabla f(x^i)\right\|_1}{\left\|x^{i+1} - x^i\right\|_\infty}}}.
\end{equation*}

To compute $d^t$, it is sufficient to store only $\text{sign}\left(\nabla f(x^{t-1})\right)$, incurring no additional memory costs. Regarding $\lambda^t$, we calculate $\left\|\nabla f(x^{t}) - \nabla f(x^{t-1})\right\|_1$ and $\left\|x^{t+1} - x^t\right\|_\infty$ at each step. The last term does not present an issue since $\left\|x^{t} - x^{t-1}\right\|_\infty = \left\|\gamma^{t-1}\text{sign}\left(\nabla f(x^{t-1})\right)\right\|_\infty$. However, to find $\left\|\nabla f(x^{t}) - \nabla f(x^{t-1})\right\|_1$, it is necessary to store the entire gradient $\nabla f(x^{t-1})$. 

We address this concern by modifying $\lambda^t$ in Algorithm \ref{alg:sign-sgd_adaptation}:
\begin{equation}
\label{eq:me_l_adaptation}
    \eta^t = \eta^{t-1} + \frac{\left\|\nabla f(x^{t}) - \nabla f(x^{t-1})\right\|_{\infty}}{\left\|x^{t} - x^{t-1}\right\|_1} \text{~~followed by~~} \lambda^t = \frac{1}{\sqrt{\sum\limits_{i=0}^{t-1}\frac{\left\|\nabla f(x^{i+1}) - \nabla f(x^{i})\right\|_{\infty}}{\left\|x^{i+1} - x^{i}\right\|_1}}}.
\end{equation}
To approximate the smoothness constant, we interchange the $l_\infty$-norm and $l_1$-norm in the expression, leveraging their duality relationship. Thus, we approximate the constant $L_1$, not $L_\infty$, as indicated in Algorithm \ref{alg:sign-sgd_adaptation}. Theoretically, this approach still requires memorizing $\nabla f(x^{t-1})$. For this reason, we consider a practical option by the approximation $\left\|\nabla f(x^{t}) - \nabla f(x^{t-1})\right\|_{\infty} \approx \max\bigl(\bigl|\max_j[\nabla f(x^t)]_j- \min_j[\nabla f(x^{t-1})]_j\bigr|, \bigl|\max_j[\nabla f(x^{t-1})]_j - \min_j[\nabla f(x^{t})]_j\bigr|\bigr).$ It necessitates storing only two additional constants: $\max_j\left[\nabla f(x^{t-1})\right]_j$ and $\min_j\left[\nabla f(x^{t-1})\right]_j$. In the theoretical analysis, we provide convergence guarantees only for the $\lambda^t$ choice, as in \eqref{eq:me_l_adaptation}. However, we additionally validate the methods empirically with the approximation of the $l_\infty$-norm and provide an ablation study that shows a small deviation of the approximate solution from the exact one. More precisely, this approximation provides an upper bound on the initial $l_\infty$-norm, while remaining close to it (see Section \ref{sec:experiments} and Appendix \ref{section:additional_plots_appendix}).

We present a theoretical analysis of a memory-efficient approach, utilizing an additional assumption on the $L_1$-smoothness.

\begin{assumption}\label{as:me_smoothness}
    The function $f$ is $L_1$-smooth, i.e., it satisfies $\left\|\nabla f(x) - \nabla f(y)\right\|_\infty \hspace{-1mm}\leqslant \hspace{-1mm}L_1\|x - y\|_1$ for any $x, y \in \mathbb{R}^d$.
\end{assumption}

Now we present the convergence guarantees of Algorithm \ref{alg:sign-sgd_adaptation} with $\lambda^t$ as in \eqref{eq:me_l_adaptation}.

\begin{theorem}\label{th:adaptation_memory_efficient_main}
    Suppose Assumptions \ref{as:me_smoothness}, \ref{as:convexity}, \ref{as:func_optimum}, \ref{as:go_exact_gradient} hold. We denote $\varepsilon \geqslant \frac{1}{T}\sum\nolimits_{t=0}^{T-1} \|\nabla f(x^t)\|_1, L_1^{0} = \frac{\left\|\nabla f(x^{1}) - \nabla f(x^0)\right\|_\infty}{\left\|x^{1} - x^0\right\|_1}$. Then Algorithm \ref{alg:sign-sgd_adaptation} with $d^0 < \Delta^*$ and $d\cdot\lambda^t$ as in \eqref{eq:me_l_adaptation}, to reach $\varepsilon$-accuracy needs
    \begin{align*}
  \mathcal{\widetilde{O}}\left(\frac{\left(\Delta^*\right)^2 \left(L_{1}\right)^3 d^2}{d^0 \left(L_1^0\right)^2 \varepsilon^2}\right) ~~ \text{and}~~ \mathcal{\widetilde{O}}\left(\frac{\Delta^* \left(L_{1}\right)^3 d^2}{\left(L_1^0\right)^2 \varepsilon^2}\right) ~~\text{iterations with Options I and II, respectively.}
    \end{align*}
\end{theorem}

The result of Theorem \ref{th:adaptation_memory_efficient_main} deteriorates the rate established in Theorem \ref{th:adaptation_main}. Indeed, we can derive the $L_\infty \leqslant d L_1$ inequality. However, the proposed approach offers significant advantages in terms of memory efficiency.

Nevertheless, the theoretical convergence rates presented in this section are not optimal. In the stochastic case, we aim to achieve $\mathcal{\widetilde{O}}\left(\frac{\Delta^* L + \left\|\sigma\right\|_1^2}{\varepsilon^2}\right)$ rate. 
This issue is discussed in detail in Appendix \ref{sec:sos}, where we present an algorithm that attains this rate.

\subsection{ALIAS with momentum}

\begin{wrapfigure}{r}{0.6\textwidth}
\begin{minipage}{0.6\textwidth}
\vspace{-8mm}
\begin{algorithm}[H]
   \caption{\textsc{ALIAS} Adam version}
   \label{alg:sign-sgd-prodigy}
\begin{algorithmic}[1]
    \State {\bfseries Input:} Start points $x^{-1}, x^0\in\mathbb R^d$, $r^0, m^0, v^0 = 0$, $d^{-1} > 0$, number of iterations $T$
    \State {\bfseries Parameters:} $\gamma^t, \beta_1, \beta_2 > 0$
    \For{$t=0,\ldots, T-1$}
    \State $r^{t+1} \!=\! \sqrt{\beta_2}r^t + \left(1 - \sqrt{\beta_2}\right) d^{t-1}\bigl\langle g_{\xi^t}^t, \text{sign}(g_{\xi^{t-1}}^{t-1}) \bigr\rangle$
    \State $d^t = \max\left\{d^{t-1}, r^{t+1}\right\}$
    \State $m^{t+1} = \beta_1 m^t + (1 - \beta_1) d^t g_{\xi^t}^{t}$
    \State $v^{t+1} = \beta_2 v^t + (1-\beta_2)\left(d^t\right)^2 \left(g_{\xi^t}^{t}\right)^2$
    \State $x^{t+1} = x^t - \gamma^t\sqrt{\frac{\left(d^t\right)^2}{1 + \frac{v^{t+1} - \left(m^{t+1}\right)^2}{\left(m^{t+1}\right)^2}}} \odot \text{sign}(m^{t+1})$
    \EndFor
\end{algorithmic}
\end{algorithm}
\vspace{-8mm}
\end{minipage}
\end{wrapfigure}
In previous sections, we presented methods that do not utilize the momentum parameter \citep{polyak1987introduction, nesterov2018lectures}. However, many modern optimizers, such as \textsc{Adam} \citep{kingma2014adam}, \textsc{Prodigy} \citep{mishchenko2023prodigy}, \textsc{Muon} \citep{jordan2024muon}, and \textsc{Mars} \citep{yuan2024mars}, employ this technique. We address this gap in the current section and present Algorithm \ref{alg:sign-sgd-prodigy}, which incorporates the momentum parameter into Algorithm \ref{alg:sign-sgd_adaptation} in a manner similar to \citep{mishchenko2023prodigy}. Specifically, we include exponential moving averages of the first and second statistics, as in \textsc{Adam} to aggregate past gradients and provide coordinate-wise normalization that mitigates sharp directions and gradient noise.

\section{Experiments}\label{sec:experiments}

In this section, we present empirical results for the LLM pre-training task. In Appendix \ref{section:additional_plots_appendix}, we validate our approach on vision tasks, specifically by fine-tuning the \textsc{Swin} Transformer architecture \citep{swin}. Our code is open-sourced\footnote{\url{https://github.com/brain-lab-research/ALIAS}}.

\paragraph{Language model pre-training.} 

Following the protocol of \citep{relora}, we train a LLaMA-based architecture~\citep{LLaMA} with 130M parameters on the C4 dataset~\citep{c4}. A detailed description of the experimental setup is provided in Appendix \ref{sec:experimental_setup}. We compare several optimization methods: \textsc{Sign-SGD} with a tuned constant learning rate (lr), and three methods using a tuned learning rate with a cosine scheduler (cosine sc) – namely, \textsc{Sign-SGD}, \textsc{Steepest Descent}, and \textsc{Normalized SGD}. All of these methods are compared against \textsc{ALIAS} (Algorithm \ref{alg:sign-sgd_adaptation}), which is used without any tuning. Additionally, we evaluate all methods with weight decay (wd).
We provide final validation loss and perplexity in Table \ref{tab:sign_llm}.

\noindent In Table \ref{tab:sign_momentum_llm}, we present the results for methods incorporating momentum ($\beta$) (all methods with weight decay). \textsc{ALIAS} Adam version utilizes sign descent with momentum and an additional scaling factor (see Algorithm \ref{alg:sign-sgd-prodigy} for details). We consider two options for this method: with and without a cosine scheduler. We provide a comparison with \textsc{AdamW} \citep{loshchilov2017decoupled} and \textsc{Prodigy} \citep{mishchenko2023prodigy}. We test \textsc{Prodigy} with and without a learning rate scheduler. We present the pre-training dynamic in Figure \ref{fig:llm_plots}. These results coincides with those in Tables \ref{tab:sign_llm}, \ref{tab:sign_momentum_llm}.

\begin{table}[H]
\centering
\vspace{-2mm}
\begin{minipage}{0.48\textwidth}
\centering
\tiny
\caption{\textsc{Sign} methods on \textsc{LLaMA} pre-training.}
\label{tab:sign_llm}
\resizebox{\columnwidth}{!}{
\begin{tabular}{p{3.4cm} | c | c }
\toprule 
Algorithm & 
Validation Loss ($\downarrow$) &
Perplexity ($\downarrow$) \\
\midrule
\textsc{Sign-SGD} (lr) & 3.041 & 20.923\\
\textsc{Sign-SGD} (lr, cosine sc) & 2.992 & 19.923  \\
\textsc{Steepest Descent} (lr, cosine sc) & 3.035 & 20.791 \\
\textsc{Normalized SGD} (lr, cosine sc) & 3.135 & 22.982 \\
\rowcolor{mygreen}\textsc{ALIAS} (\textbf{ours}) & 3.017 & 20.422 \\
\textsc{Sign-SGD} (wd, lr) & 3.041 & 20.923  \\
\textsc{Sign-SGD} (wd, lr, cosine sc) & 2.980 & 19.693 \\
\textsc{Steepest Descent} (wd, lr, cosine sc) & 3.022 & 20.537 \\
\textsc{Normalized SGD} (wd, lr, cosine sc) & 3.006 & 20.169 \\
\rowcolor{mygreen}\textsc{ALIAS} (wd) (\textbf{ours}) & 3.006 &  20.169\\
\bottomrule
\end{tabular}
}
\end{minipage}
\hfill
\begin{minipage}{0.48\textwidth}
\centering
\scriptsize
\caption{\textsc{Sign-SGD} methods with added momentum parameter $(\beta)$, \textsc{AdamW} and \textsc{Prodigy} on \textsc{LLaMA} pre-training.}
\label{tab:sign_momentum_llm}
\vspace{-3mm}
\resizebox{\columnwidth}{!}{
\begin{tabular}{p{4.8cm} | c | c }
\toprule 
Algorithm & 
Validation Loss ($\downarrow$) &
Perplexity ($\downarrow$) \\
\midrule
\textsc{Sign-SGD} (wd, $\beta$, lr) & 2.968 & 19.459 \\
\textsc{Sign-SGD} (wd, $\beta$, lr, cosine sc) & 2.923 & 18.596 \\
\textsc{Steepest Desc.} (wd, $\beta$, lr, cosine sc) & 2.932 & 18.765 \\
\textsc{Norm. SGD} (wd, $\beta$, lr, cosine sc) & 2.934 & 18.803 \\
\textsc{AdamW} (wd, $\beta$, lr, cosine sc) & 2.929 & 18.698 \\
\textsc{Prodigy} (wd, $\beta$) & 3.003 & 20.145 \\
\textsc{Prodigy} (wd, $\beta$, cosine sc) & 2.930 & 18.727 \\
\rowcolor{mygreen}\textsc{ALIAS} Adam version (wd, $\beta$) (\textbf{ours}) & 2.976 & 19.609 \\
\rowcolor{mygreen}\textsc{ALIAS} Adam version (wd, $\beta$, cosine sc) (\textbf{ours}) & 2.918 & 18.504 \\
\bottomrule
\end{tabular}
}
\end{minipage}
\end{table}
\vspace{-4mm}

\begin{figure}[ht]
\centering
\begin{minipage}[b]{\textwidth}
    \centering
    \includegraphics[width=0.31\columnwidth]{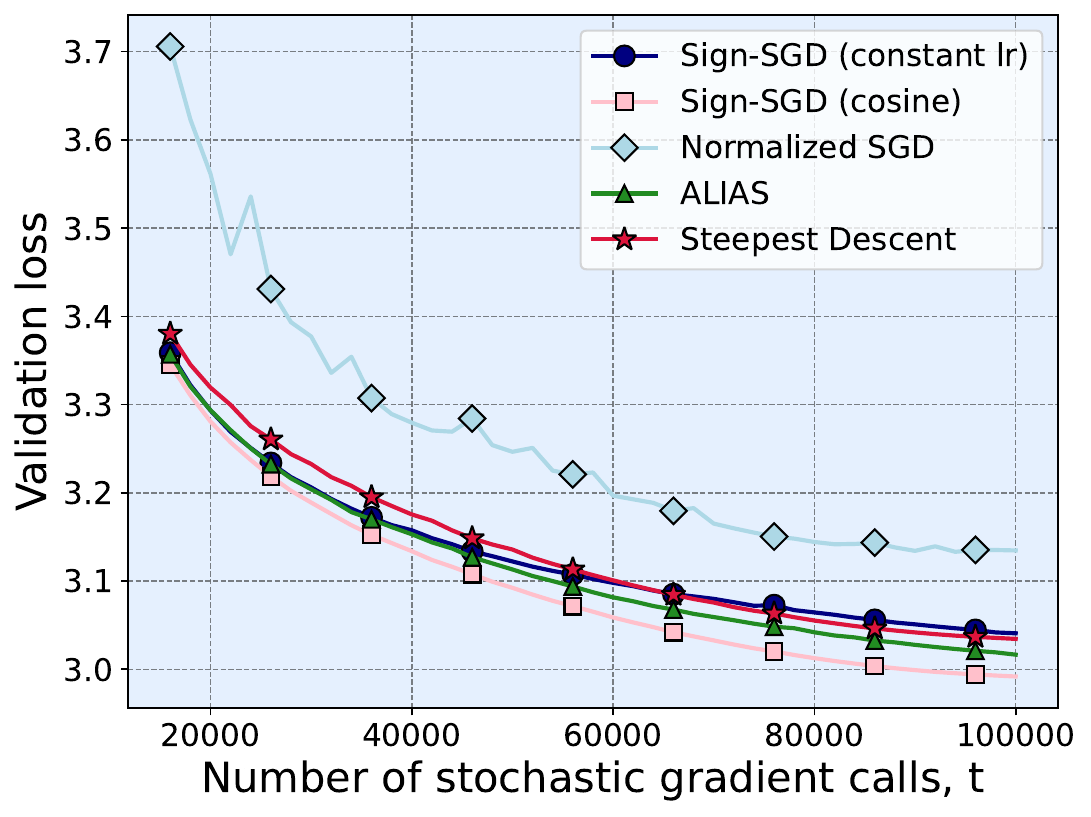}
    \includegraphics[width=0.31\columnwidth]{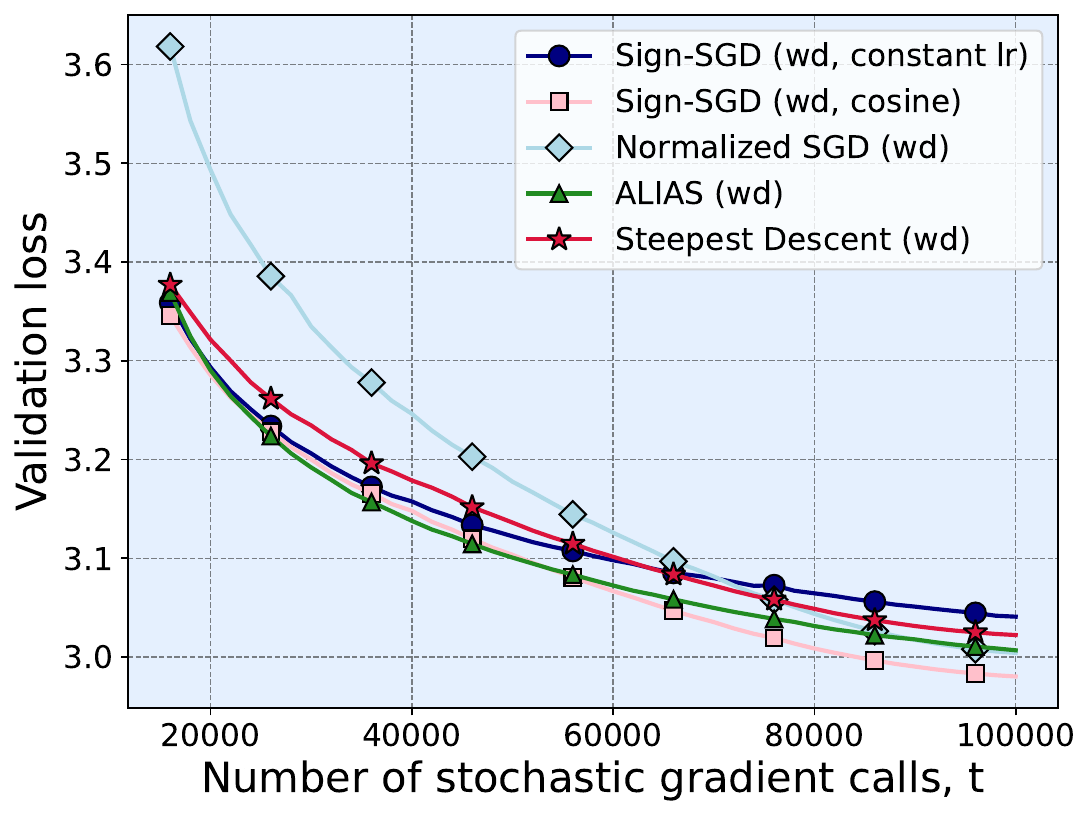}
    \includegraphics[width=0.312\columnwidth]{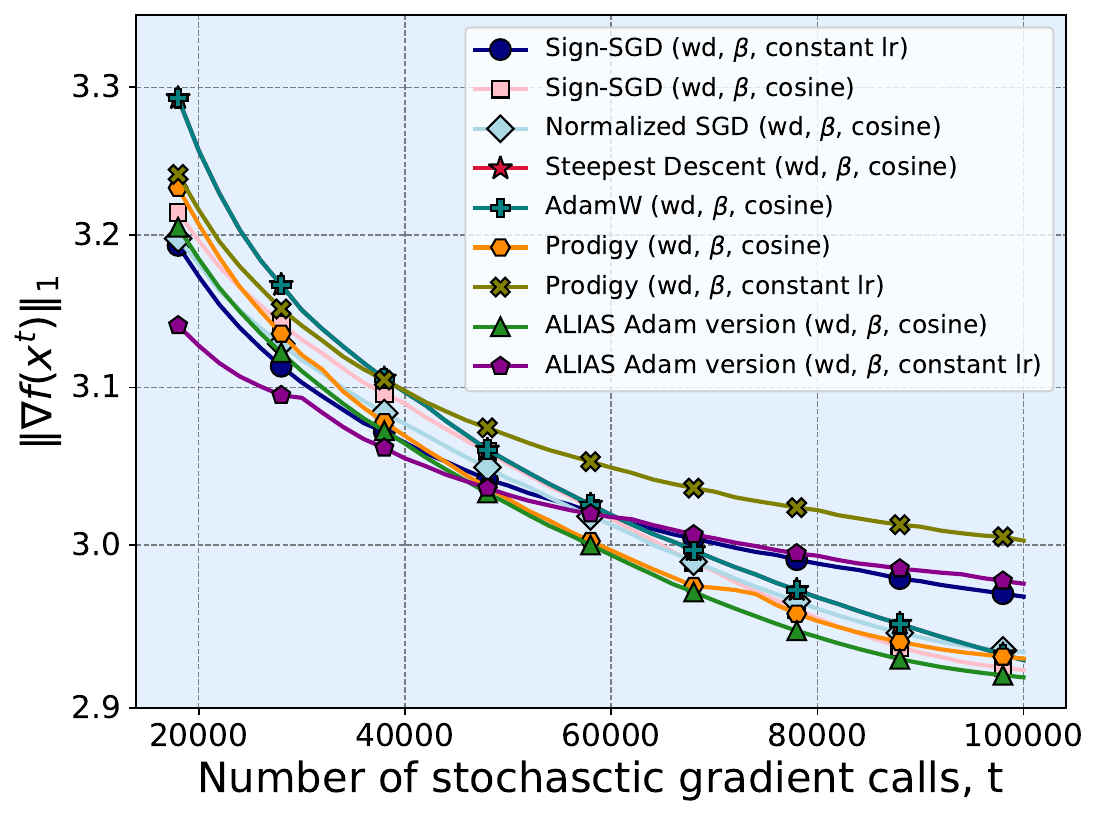} 
\end{minipage}
\vspace{5mm}
\begin{minipage}[b]{\textwidth}
    \centering
    \includegraphics[width=0.31\columnwidth]{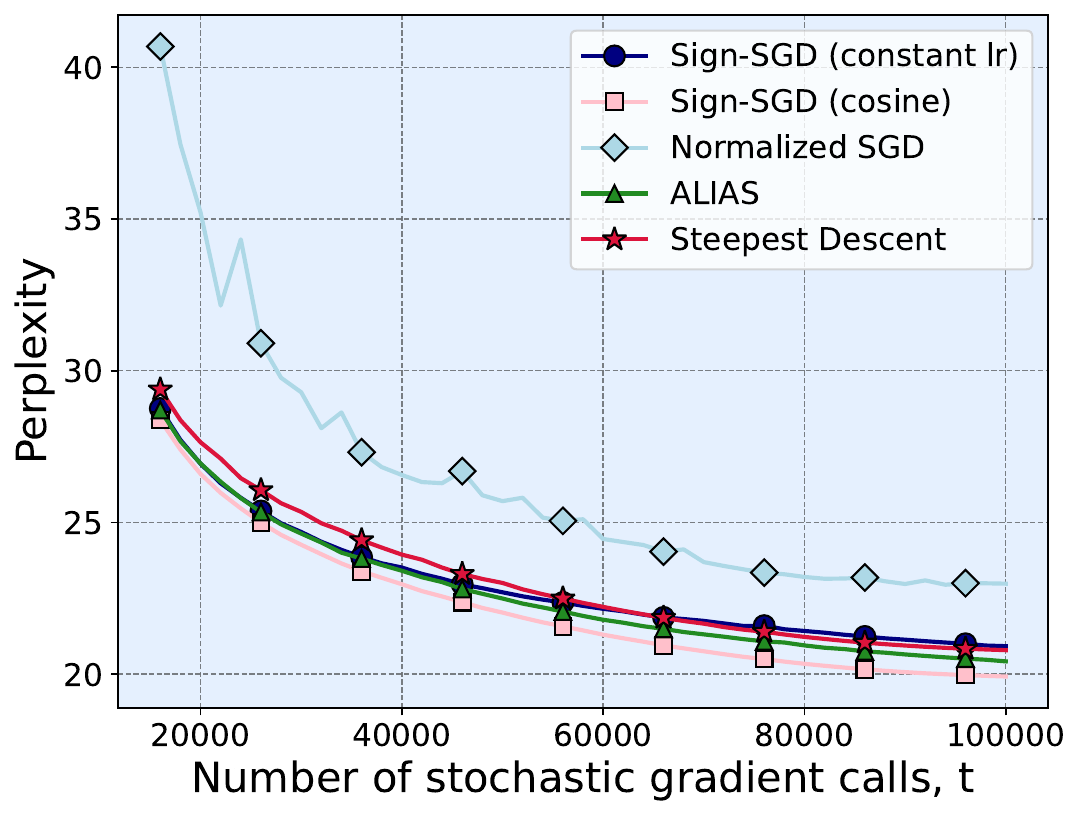}
    \includegraphics[width=0.315\columnwidth]{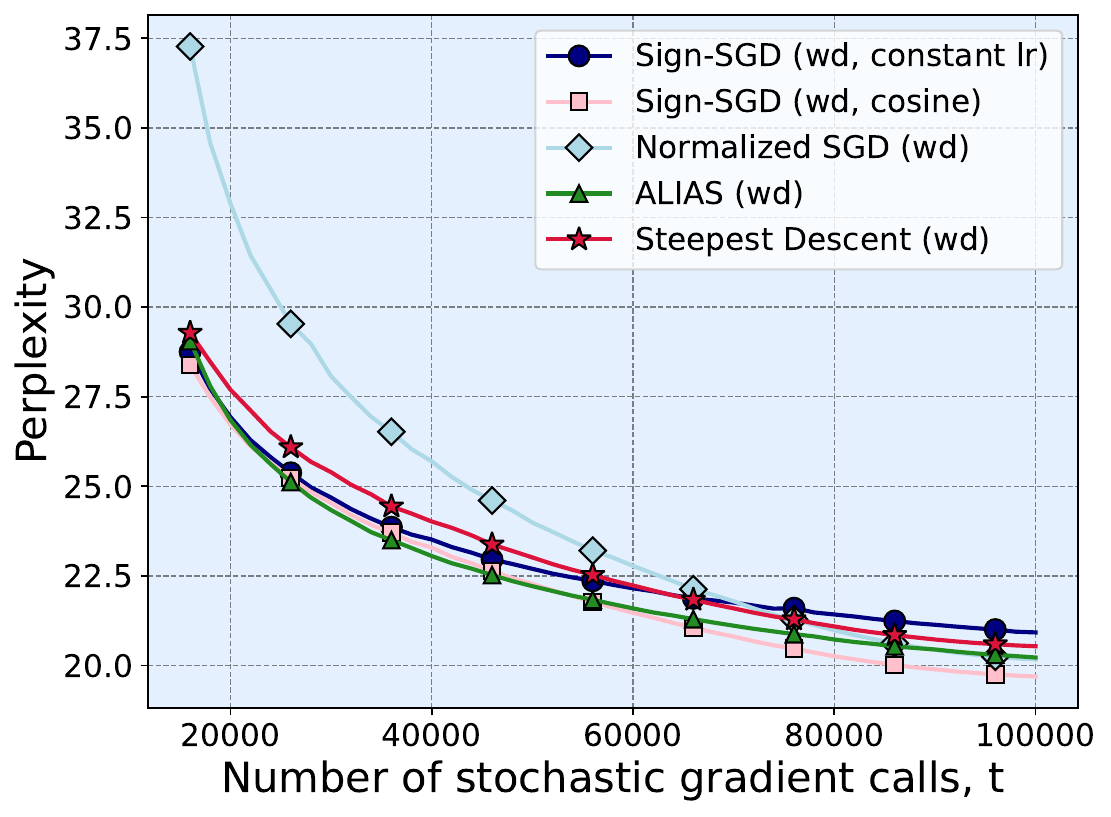}
    \includegraphics[width=0.31\columnwidth]{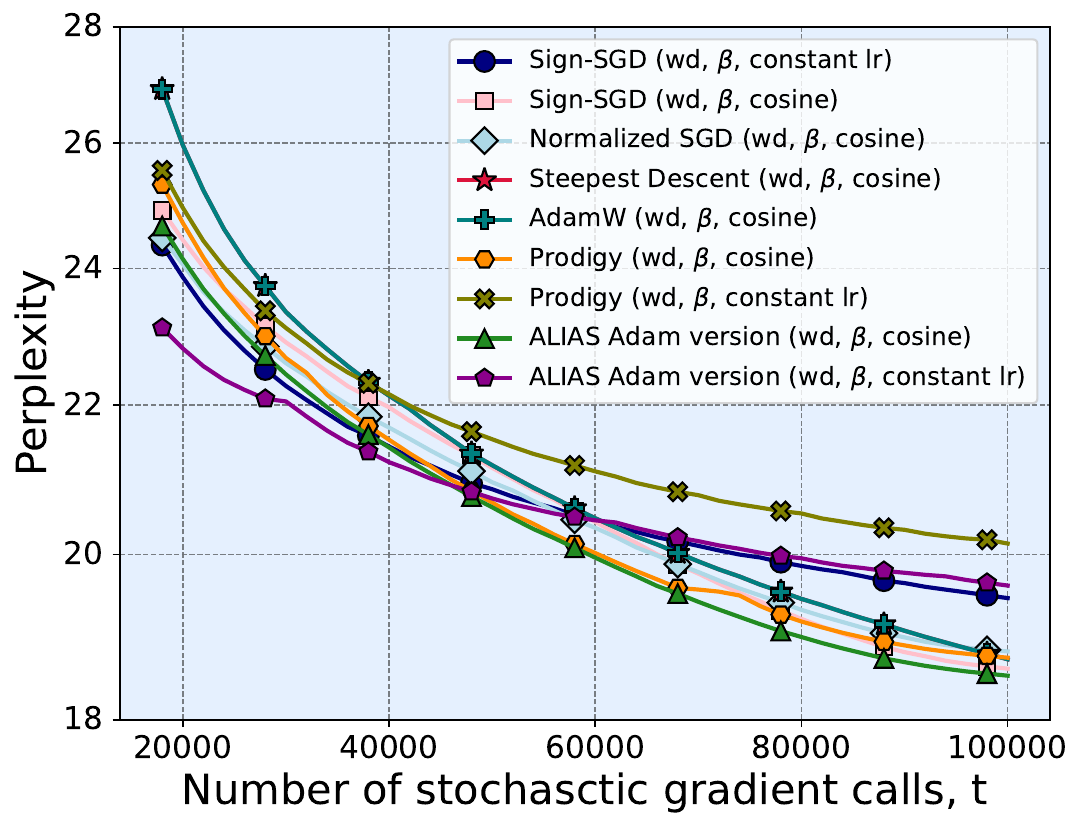} 
\end{minipage}
\vspace{-10mm}
\caption{Comparison of \textsc{Sign-SGD} methods in \textsc{LLaMA} pre-training. The left column shows results without weight decay, the central column presents results with weight decay (wd), and the right column displays results with weight decay (wd) and momentum parameter ($\beta$).}
\label{fig:llm_plots}
\vspace{-2mm}
\end{figure}

We highlight that our basic \textsc{ALIAS} achieves performance only slightly inferior to that of \textsc{Sign-SGD} with a tuned cosine scheduler. The Adam-based version of \textsc{ALIAS} outperforms all competing methods, including tuned \textsc{AdamW} and the state-of-the-art parameter-free optimizer \textsc{Prodigy} with a tuned cosine scheduler. These results are particularly competitive given that our approach eliminates the need for learning rate tuning -- a significant practical advantage. This feature enhances the method’s usability, making it appealing for large-scale applications.

\paragraph{Memory-efficient version of Algorithm \ref{alg:sign-sgd_adaptation}.} 

We proceed with testing the memory-efficient approach, presented in Section \ref{sec:memory-efficient-algorithm}. 
Recall that we approximate $\left\|\nabla f(x^{t}) - \nabla f(x^{t-1})\right\|_{\infty} \approx \max\bigl(\bigl|\max_j[\nabla f(x^t)]_j- \min_j[\nabla f(x^{t-1})]_j\bigr|, \bigl|\max_j[\nabla f(x^{t-1})]_j - \min_j[\nabla f(x^{t})]_j\bigr|\bigr).$
We compare the performance of \textsc{ALIAS} with $\lambda^t$ as in \eqref{eq:me_l_adaptation}, considering exact and approximated $l_\infty$-norm (me), \textsc{Sign-SGD} with a constant (tuned) stepsize, and \textsc{Sign-SGD} with a (tuned) cosine scheduler. The results of the 130M \textsc{LLaMA}-based model pre-training are presented in Table \ref{tab:memory-efficient-alias}. We provide an ablation comparing exact and approximated values of $l_\infty$-norms during training in Appendix \ref{section:additional_plots_appendix}.

\begin{wrapfigure}{r}{0.5\textwidth}
\begin{minipage}{0.5\textwidth}
\vspace{-8mm}
\begin{table}[H]
\centering
\tiny
\caption{\textsc{Sign-SGD} methods and memory-efficient version of \textsc{ALIAS} on \textsc{LLaMA} pre-training.}
\vspace{-2mm}
\label{tab:memory-efficient-alias}
\resizebox{\columnwidth}{!}{
\begin{tabular}{p{4cm} | c | c }
\toprule 
Algorithm & 
Validation Loss ($\downarrow$) &
Perplexity ($\downarrow$) \\
\midrule
\textsc{Sign-SGD} (wd, lr) & 3.041 & 20.923 \\
\textsc{Sign-SGD} (wd, lr, cosine sc) & 2.980 & 19.693 \\
\rowcolor{mygreen}\textsc{ALIAS} (wd, $\lambda^t$ as in \eqref{eq:me_l_adaptation}) (\textbf{ours}) & 3.015 & 20.389 \\
\rowcolor{mygreen}\textsc{ALIAS} (wd, $\lambda^t$ as in \eqref{eq:me_l_adaptation}, me) (\textbf{ours}) & 3.019 & 20.471 \\
\bottomrule
\end{tabular}
}
\end{table}
\vspace{-8mm}
\end{minipage}
\end{wrapfigure}

The results indicate a slight performance degradation of the memory-efficient version of \textsc{ALIAS} compared to \textsc{Sign-SGD} with a cosine scheduler baseline, as well as relative to the original \textsc{ALIAS} method. However, it is crucial to emphasize that this variant is a parameter-free algorithm that retains only the sign of the gradient from the previous iteration. Despite these simplifications, its performance remains competitive with significantly more memory-intensive methods. 
We report performance metrics, memory footprint, and runtime efficiency in Appendix~\ref{section:additional_plots_appendix}, along with detailed training configurations for full reproducibility.

\section{Conclusion}
In this work, we present a novel parameter-free \textsc{Sign-SGD} that eliminates manual stepsize selection. The method is analyzed in deterministic, stochastic, and distributed settings. Additionally, we introduce a memory-efficient variant that stores only gradient signs while maintaining adaptivity. We also explore a momentum-adapted version that demonstrates strong performance in practice. 

\section*{Acknowledgments}

The work was supported by the Ministry of Economic Development of the Russian Federation (agreement No. 139-15-2025-013, dated June 20, 2025, IGK 000000C313925P4B0002).

\bibliography{iclr_sign}
\bibliographystyle{iclr2026_conference}

\newpage
\appendix

\allowdisplaybreaks

\tableofcontents

\newpage
\section{Additional Experiments}\label{section:additional_plots_appendix}

This section supplements our experimental validation by examining the internal mechanisms of parameter-free sign-based optimizers across LLaMA pre-training and Tiny ImageNet classification. We analyze how step-size dynamics naturally emerge without manual scheduling, investigate memory consumption and computational time compared to established optimizers, and demonstrate robustness to hyperparameter choices. 

\subsection{LLaMA pre-training} \label{sec:experimental_setup}

\subsubsection{Experimental setup.} 

Our experiments use a LLaMA-based architecture~\citep{LLaMA} equipped with RMSNorm and SwiGLU~\citep{shazeer2020glu} activations, trained on the C4 dataset~\citep{c4}. 
The training consists of 100k steps. 
We use batch size of 512 sequences and sequence length of 256, as in~\citet{relora}, and T5 tokenizer with the dictionary size of 32k since it was originally trained on C4.

\noindent For all experiments, the respective optimization method is applied to the main model parameters, while the LM Head layer is optimized with AdamW. 
This design follows prior work \citet{zhao2024deconstructing} which showed that the LM Head layer requires more fine-grained learning rate adjustment.

\noindent The learning rate was selected through a grid search with multiplicative step of $10^{\frac{1}{4}}$.
We employ a cosine learning rate schedule with a warmup of 10\% of the total steps and decay to 10\% of the peak learning rate. For \textsc{ALIAS} Adam version (Algorithm \ref{alg:sign-sgd-prodigy}), we choose stepsize $\gamma^t = 10^{-3}$.

\noindent The weight decay value was selected from [0, 0.01, 0.1] through validation.
We also applied gradient clipping with threshold of 1.0 for all methods except \textsc{Steepest Descent} and \textsc{Normalized SGD}.
All methods with momentum utilize the Nesterov acceleration scheme with a momentum value of 0.9.
For AdamW we use the standard hyperparameters: $\beta_1 = 0.9, \beta_2 = 0.999, \varepsilon=1e-8$. 

\subsubsection{Additional results}

In this section, we explore key aspects of our method. We analyze the stepsize derived from our approach and compare it to the effective learning rate induced by the cosine scheduler. Next, we examine the memory and computational efficiency of all considered optimizers. We present an ablation study on the approximation used in the stepsize of the memory-efficient variant, demonstrating its close alignment with exact computation. We provide empirical evidence for the robustness of \textsc{ALIAS} to an additional constant $L_\infty$ term (see Remark \ref{rem:adaptation_main}). Finally, we discuss the question regarding the performance dependence on the choice of the initial value $d^0$ and the level of gradient noise.

\paragraph{Study on the stepsize.}
A question arises regarding how $\gamma^t\sqrt\frac{(d^t)^2}{1 + \frac{v^{t+1} - (m^{t+1})^2}{(m^{t+1})^2}}$ performs compared to the effective cosine scheduler when $\gamma^t$ remains constant. This pairing is presented in Figure \ref{fig:lr_comparison}.

\begin{figure}[ht]
    \centering
    \includegraphics[width=0.5\linewidth]{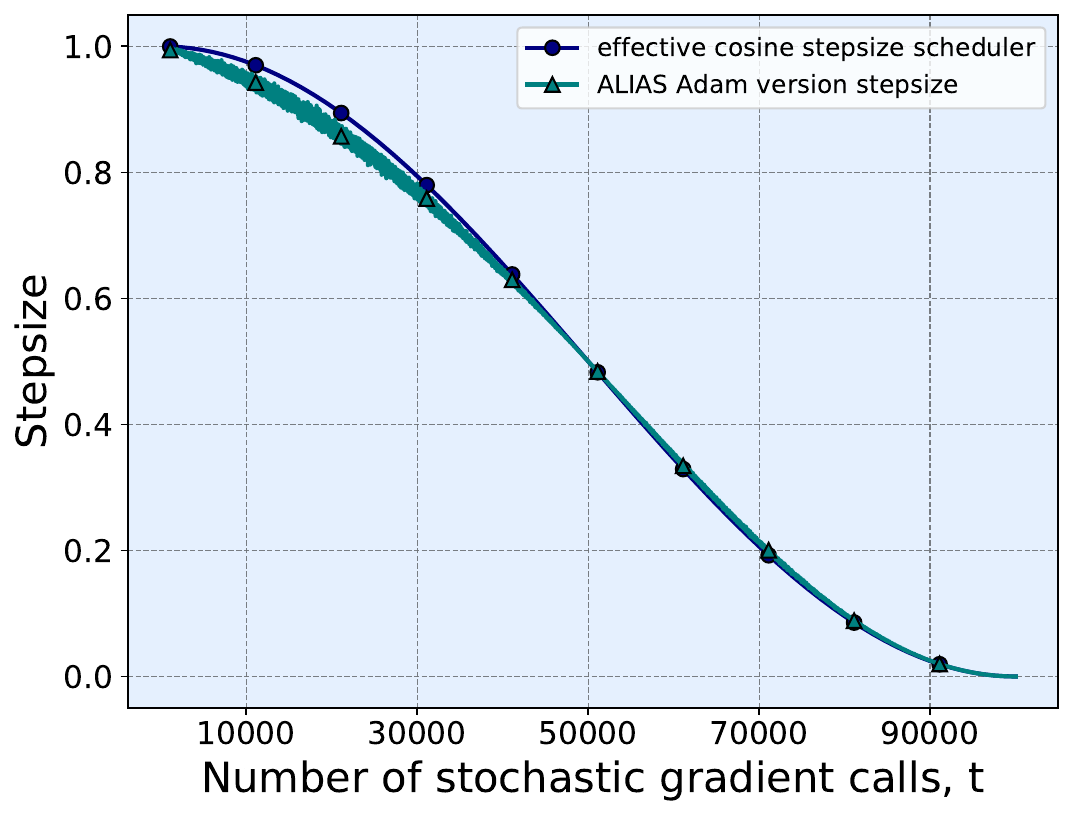}
    \caption{Comparison of \textsc{ALIAS} Adam version stepsize with constant $\gamma^t$ with effective cosine stepsize 
    scheduler.}
    \vspace{-4mm}
    \label{fig:lr_comparison}
\end{figure}

\noindent One can state that the cosine nature of the stepsize is automatically obtained. This feature highlights the distinctiveness of our parameter-free approach. 

\paragraph{Study on the time and memory consumption.} In Table \ref{tab:memory_time_consumption}, we present details of memory requirements and time consumption per-iteration.

\begin{table}[H]
\centering
\caption{Comparison of memory and time consumption.}
\vspace{-2mm}
\label{tab:memory_time_consumption}
\begin{tabular}{l|c|c}
\toprule
\hspace{-2mm}Algorithm &\hspace{-1mm}Memory consumption (gb)\hspace{-4mm}&\hspace{-1mm}Time consumption per-iteration (s)\\
\midrule
\hspace{-2mm}\textsc{Sign-SGD} & 0.41 & 0.004 \\
\hline
\hspace{-2mm}\textsc{Steepest Descent} & 0.41 & 0.01 \\
\hline
\hspace{-2mm}\textsc{Normalized SGD} & 0.41 & 0.01 \\
\hline
\hspace{-2mm}\textsc{AdamW} & 1.5 & 0.007 \\
\hline
\hspace{-2mm}\textsc{Prodigy} & 3.5 & 0.05 \\
\hline
\rowcolor{mygreen} \hspace{-2mm}\textsc{ALIAS} (\textbf{ours}) & 1.22 & 0.01 \\
\hline
\rowcolor{mygreen} \hspace{-2mm}\textsc{ALIAS} Adam version (\textbf{ours}) & 1.91 & 0.03 \\
\hline
\rowcolor{mygreen}\hspace{-2mm}memory-efficient \textsc{ALIAS} (\textbf{ours})\hspace{-2mm} & 0.41 & 0.007 \\
\bottomrule
\end{tabular}
\hfill
\vspace{-4mm}
\end{table}

Table \ref{tab:memory_time_consumption} shows a higher time per-iteration for \textsc{ALIAS} Adam version and \textsc{Prodigy}, which we adopt from the work \citep{mishchenko2023prodigy}. We attribute this to the suboptimal implementation of these algorithms, in contrast to others that have been utilized for an extended period. Simultaneously, our algorithms are comparable to \textsc{AdamW} in terms of required memory, while \textsc{Prodigy} occupies more GPU resources because it stores a vector of initial model parameters. Note that the memory-efficient version of \textsc{ALIAS} is superior to \textsc{AdamW} and comparable to the basic \textsc{Sign-SGD}.

\begin{wrapfigure}{r}{0.5\textwidth}
\begin{minipage}{0.5\textwidth}
\vspace{-8mm}
\begin{figure}[H]
\centering
    \includegraphics[width=\linewidth]{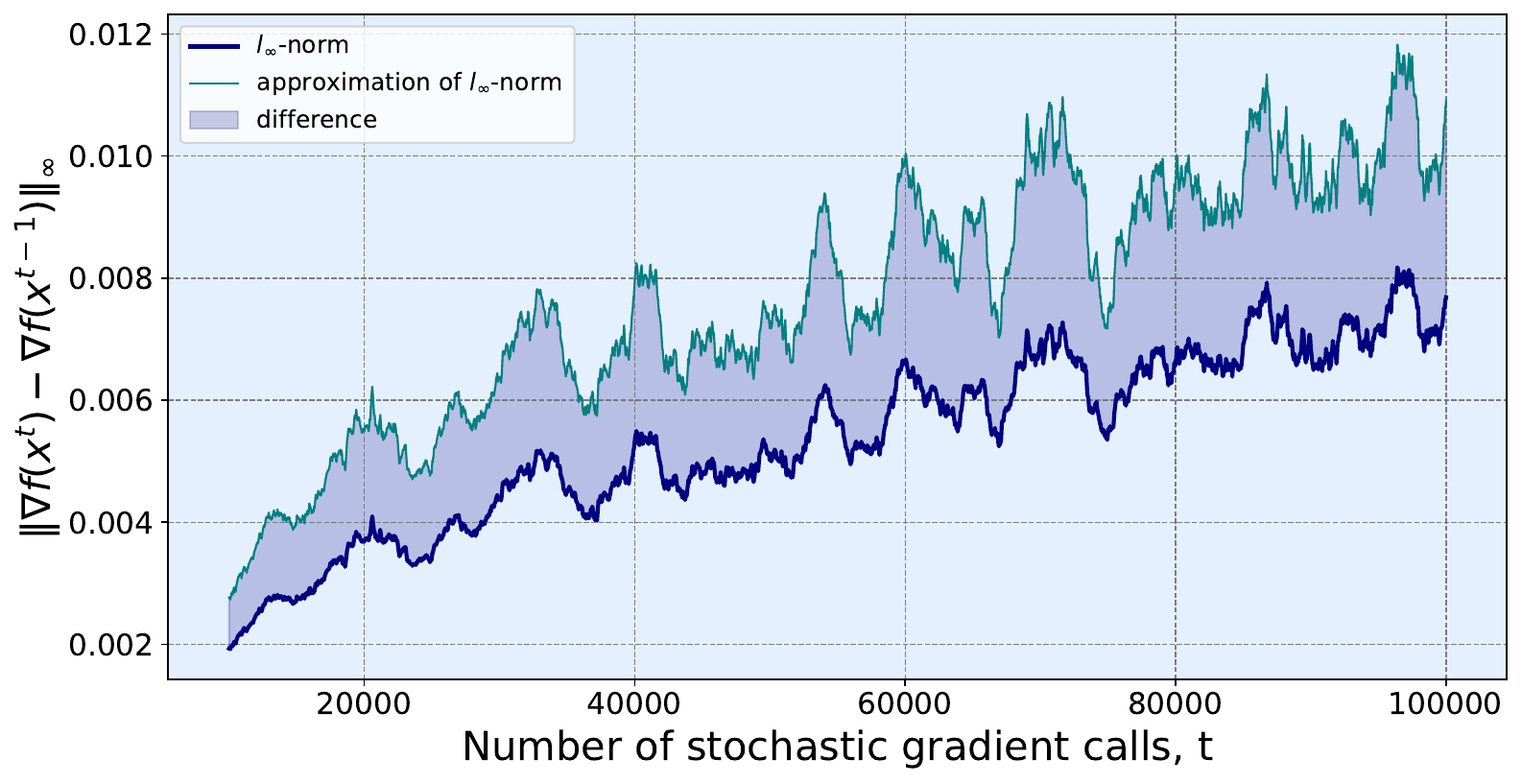}
    \caption{Ablation study on approximated $l_\infty$-norm deviation from the exact one in the memory-efficient version of \textsc{ALIAS}.}
    \label{fig:me_ablation}
\end{figure}
\vspace{-8mm}
\end{minipage}
\end{wrapfigure}

\paragraph{Study on the memory-efficient \textsc{ALIAS}.}
We now analyze the memory-efficient variant of \textsc{ALIAS}, focusing on the accuracy of the approximated $l_\infty$-norm used in its update rule. Figure~\ref{fig:me_ablation} shows the dynamics of $\left\|\nabla f(x^t) - \nabla f(x^{t-1})\right\|_\infty$ across iterations, along with the deviation range of its approximation (see Section~\ref{sec:memory-efficient-algorithm} for details on the approximation scheme). The ablation study reveals that the approximate norm deviates from the exact value by approximately 50\% on average. Notably, the approximation consistently exceeds the true norm -- as expected, since it constitutes an upper bound by design. This leads to smaller effective stepsizes, which explains the slightly degraded performance of the memory-efficient variant compared to the basic \textsc{ALIAS} algorithm (Algorithm~\ref{alg:sign-sgd_adaptation}).
\begin{wraptable}{r}{0.5\textwidth}
\begin{minipage}{0.5\textwidth}
\vspace{-8mm}
\begin{table}[H]
\centering
\caption{Robustness to $L_\infty$.}
\label{tab:l-robustness}
\begin{tabular}{c|c}
\toprule
$L_\infty$ value & Validation loss ($\downarrow$)\\
\midrule
0 & 3.006 \\
\hline
50 & 3.006  \\
\hline
100 & 3.007  \\
\hline
500 & 3.005  \\
\hline
1000 & 3.006  \\
\bottomrule
\end{tabular}
\end{table}
\vspace{-12mm}
\end{minipage}
\end{wraptable}

\paragraph{Study on the robustness to $L_\infty$.}
In Table \ref{tab:l-robustness}, we provide empirical evidence supporting the claim made in Section \ref{sec:discussion:exact} that the modification of \textsc{ALIAS} (Algorithm \ref{alg:sign-sgd_adaptation}) is robust concerning the $L_\infty$ parameter. Hence, although the version of the algorithm considered in Remark \ref{rem:adaptation_main} requires prior knowledge of $L_\infty$, this additive factor has negligible impact on the practical convergence of Algorithm \ref{alg:sign-sgd_adaptation}.

\paragraph{Performance dependence on $d^0$ choice.}
In this paragraph, we investigate the robustness of our \textsc{ALIAS} Adam version concerning the choice of the initial distance $d^0$. To this end, we compare the performance of Algorithm \ref{alg:sign-sgd-prodigy} on \textsc{LLaMA} pre-training using $d^0 = 1$ and $d^0 = 10^{-3}$. In both cases, we obtain the \textbf{same validation metric}: \textbf{validation loss = 2.918}. Based on these results, we conclude that our method is insensitive to the choice of $d^0$.

\paragraph{Performance dependence on gradient noise.}
We investigate the dependence of the performance of our \textsc{ALIAS} procedure (Algorithm~\ref{alg:sign-sgd_adaptation}) on the level of gradient noise. To simulate different noise levels, we vary the batch size. Indeed, decreasing the batch size increases the stochasticity and the variance of the gradient estimate, thereby leading to a higher level of gradient noise. While in previous experiments we used a batch size of 512 sequences, here we use 256, 128, and 64 sequences. Then we compare the validation loss on these runs. Table \ref{tab:noise_comparison} provides a pairwise comparison of \textsc{ALIAS} and \textsc{Sign-SGD} across these batch sizes.

\begin{table}[H]
\centering
\caption{\textsc{Sign-SGD} and \textsc{ALIAS} with different bath sizes on \textsc{LLaMA} pre-training.}
\label{tab:noise_comparison}
\begin{tabular}{l | c | c }
\toprule 
Batch Size ($\#$ of Sequences) &
Algorithm & 
Validation Loss ($\downarrow$) \\
\midrule
512 & \textsc{Sign-SGD} & 2.980 \\
\rowcolor{mygreen} 512 & \textsc{ALIAS} (\textbf{ours}) & 3.006 \\
256 & \textsc{Sign-SGD} & 2.986 \\
\rowcolor{mygreen} 256 & \textsc{ALIAS} (\textbf{ours}) & 3.013 \\
128 & \textsc{Sign-SGD} & 2.992 \\
\rowcolor{mygreen} 128 & \textsc{ALIAS} (\textbf{ours}) & 3.021 \\
64 & \textsc{Sign-SGD} & 2.999\\
\rowcolor{mygreen} 64 & \textsc{ALIAS} (\textbf{ours}) & 3.029 \\
\bottomrule
\end{tabular}
\end{table}

The experimental results demonstrate that, when the batch size is reduced -- thereby increasing the level of gradient noise -- both \textsc{Sign-SGD} and \textsc{ALIAS} exhibit a comparable decline in performance. This suggests that ALIAS is not disproportionately affected by the increased stochasticity in gradient estimates, underscoring its robustness to gradient noise.

\subsubsection{Comparison with parameter-free approaches}

In this section, we present an experimental comparison of our \textsc{ALIAS} Adam version algorithm with competing parameter-free optimization methods. For this additional evaluation, we selected the following approaches: \textsc{DoG} \citep{ivgi2023dog}, \textsc{D-Adaptation} \citep{defazio2023learning}, and \textsc{MoMo} \citep{schaipp2023momo}. These methods are chosen based on their performance reported in the work \citep{kasimbeg2025far} on the \textsc{AlgoPerf} benchmark \citep{dahl2023benchmarking}. Our validation results for pre-training the \textsc{LLaMA}-based architecture are summarized in Table \ref{tab:parameter-free_130m_LLaMA}.

\begin{table}[H]
\centering
\caption{Parameter-free methods on \textsc{LLaMA} pre-training.}
\label{tab:parameter-free_130m_LLaMA}
\begin{tabular}{l | c | c }
\toprule 
Algorithm & 
Validation Loss ($\downarrow$) &
Perplexity ($\downarrow$) \\
\midrule
\textsc{DoG} & 2.939 & 18.897 \\
\textsc{D-Adaptation} (with Adam) & 2.927 & 18.672  \\
\textsc{MoMo} (with Adam) & 2.925 & 18.634 \\
\textsc{Prodigy} & 2.930 & 18.727 \\
\rowcolor{mygreen} \textsc{ALIAS} Adam version (wd) (\textbf{ours}) & 2.918 &  18.504 \\
\bottomrule
\end{tabular}
\hfill
\end{table}

These results complement our comparison against sign-based methods and \textsc{AdamW}. They demonstrate that our approach achieves stronger performance than prior parameter-free methods.

\subsubsection{Experiments on big model}

We evaluate the methods on \textsc{LLaMA} with 350M parameters. The training setup remains consistent with the previous experiment. However, the number of layers in the model increases, leading to a total parameter count that rises from 130M to 350M. This experiment is essential to demonstrate the sustainability of our approaches to increasing dimensionality. We conduct experiments comparing methods with and without the momentum parameter $\beta$ along with weight decay. The results are presented Tables \ref{tab:sign_llm_350m_LLaMA}, \ref{tab:sign_momentum_llm_350m_LLaMA}, and Figure \ref{fig:llm_plots_350m_LLaMA}.

\begin{table}[H]
\centering
\caption{\textsc{Sign-SGD} methods on \textsc{LLaMA} pre-training.}
\label{tab:sign_llm_350m_LLaMA}
\begin{tabular}{l | c | c }
\toprule 
Algorithm & 
Validation Loss ($\downarrow$) &
Perplexity ($\downarrow$) \\
\midrule
\textsc{Sign-SGD} (wd, lr, cosine sc) & 2.819 & 16.760 \\
\textsc{Steepest Descent} (wd, lr, cosine sc) & 2.828 & 16.912  \\
\textsc{Normalized SGD} (wd, lr, cosine sc) & 3.510 & 33.448 \\
\rowcolor{mygreen} \textsc{ALIAS} (wd) (\textbf{ours}) & 2.821 &  16.793\\
\bottomrule
\end{tabular}
\hfill
\end{table}
\begin{table}[H]
\centering
\caption{\textsc{Sign-SGD} methods with added momentum parameter $(\beta)$, \textsc{AdamW} (wd) and \textsc{Prodigy} on \textsc{LLaMA} pre-training.}
\label{tab:sign_momentum_llm_350m_LLaMA}
\begin{tabular}{l | c | c }
\toprule 
Algorithm & 
Validation Loss ($\downarrow$) &
Perplexity ($\downarrow$) \\
\midrule
\textsc{Sign-SGD} (wd, $\beta$, lr, cosine sc) & 2.717 & 15.135 \\
\textsc{Steepest Descent} (wd, $\beta$, lr, cosine sc) & 2.711 & 15.044 \\
\textsc{Normalized SGD} (wd, $\beta$, lr, cosine sc) & 3.460 & 31.817 \\
\textsc{AdamW} (wd, $\beta$, lr, cosine sc) & 2.719 & 15.165\\
\textsc{Prodigy} (wd, $\beta$, cosine sc) & 2.715 & 15.105 \\
\rowcolor{mygreen} \textsc{ALIAS} Adam version (wd, $\beta$, cosine sc) (\textbf{ours}) & 2.707 & 14.984 \\
\bottomrule
\end{tabular}
\end{table}

\begin{figure}[ht]
\centering
\begin{minipage}[b]{\textwidth}
    \centering
    \includegraphics[width=0.4\columnwidth]{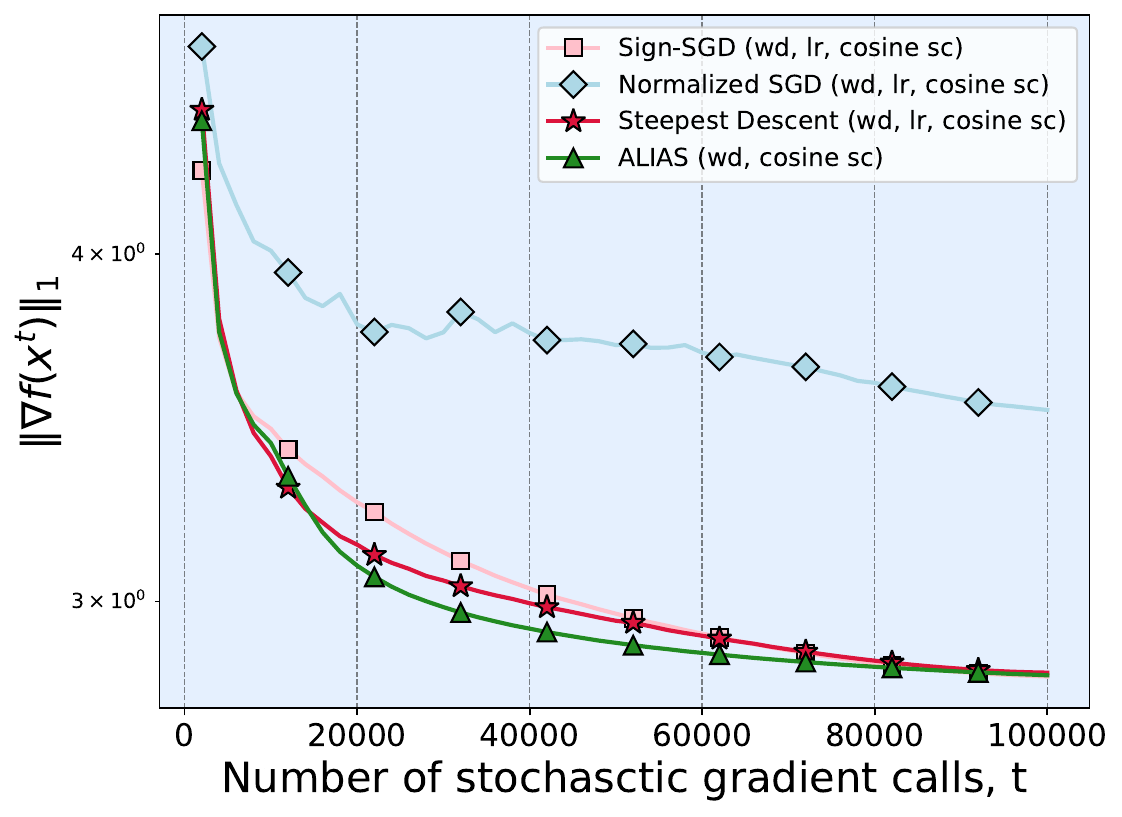}
    \includegraphics[width=0.4\columnwidth]{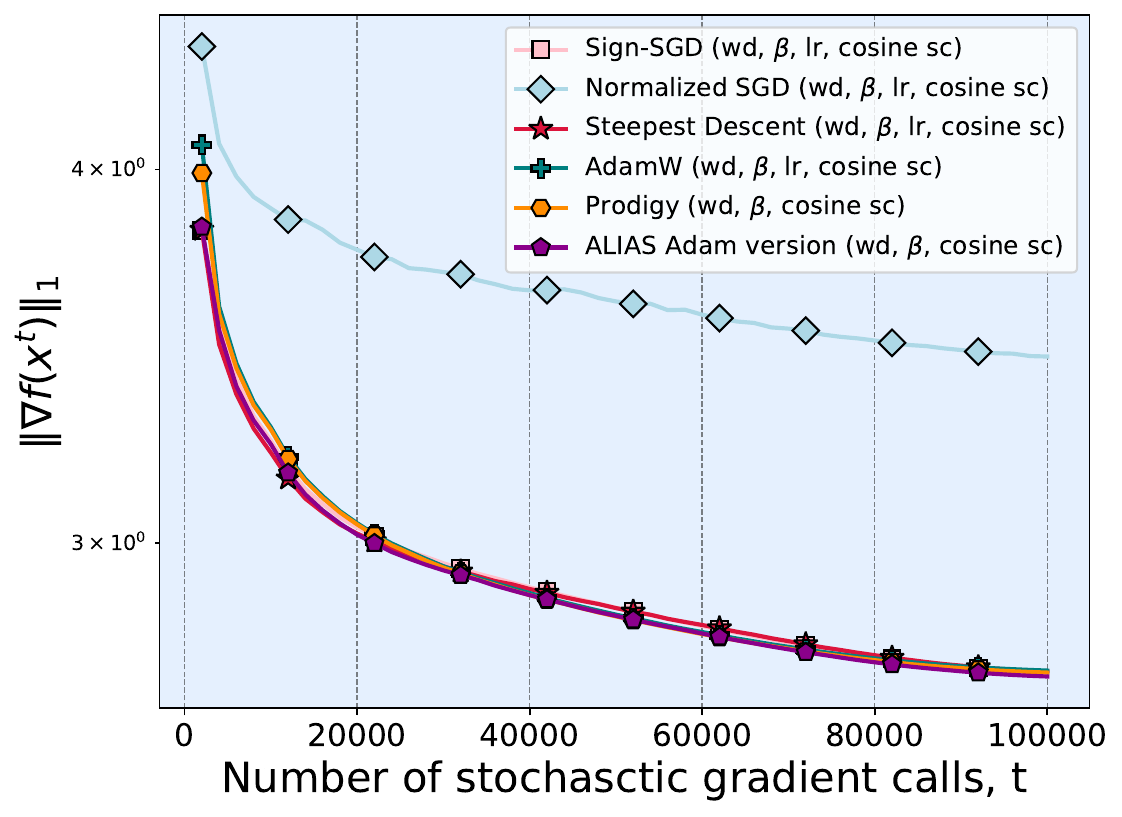}
\end{minipage}
\vspace{5mm}
\begin{minipage}[b]{\textwidth}
    \centering
    \includegraphics[width=0.4\columnwidth]{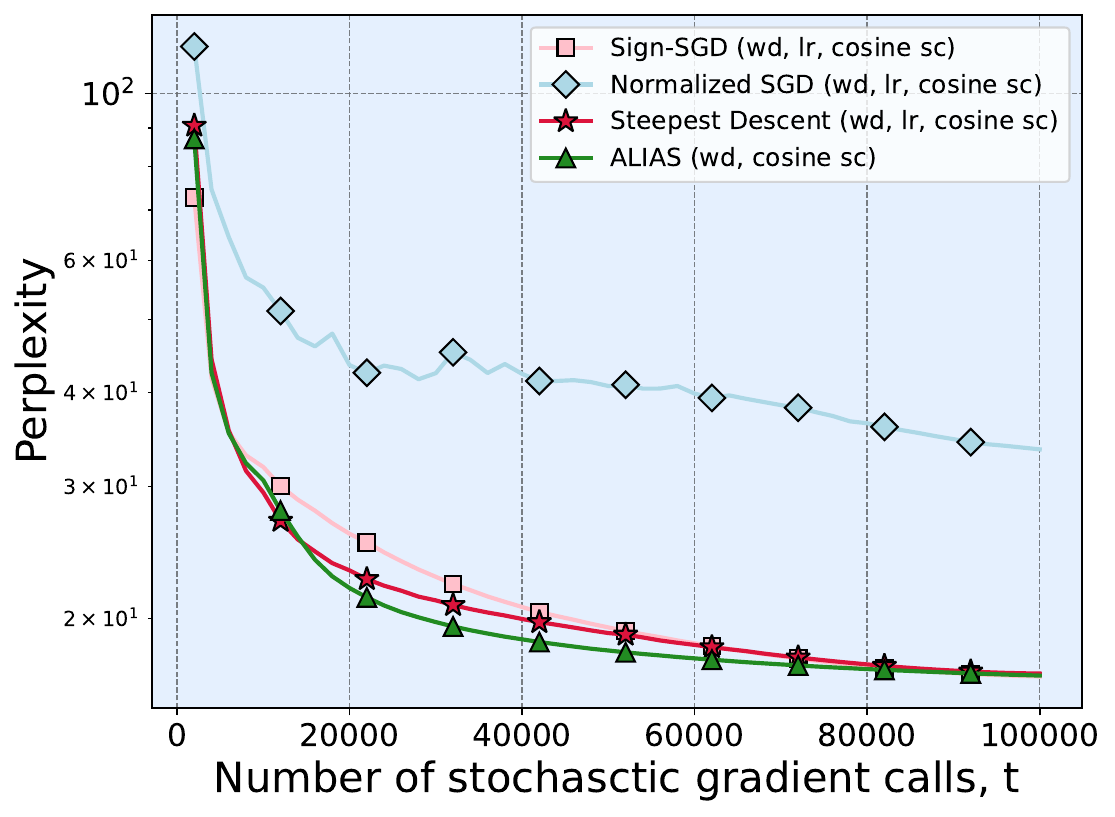}
    \includegraphics[width=0.4\columnwidth]{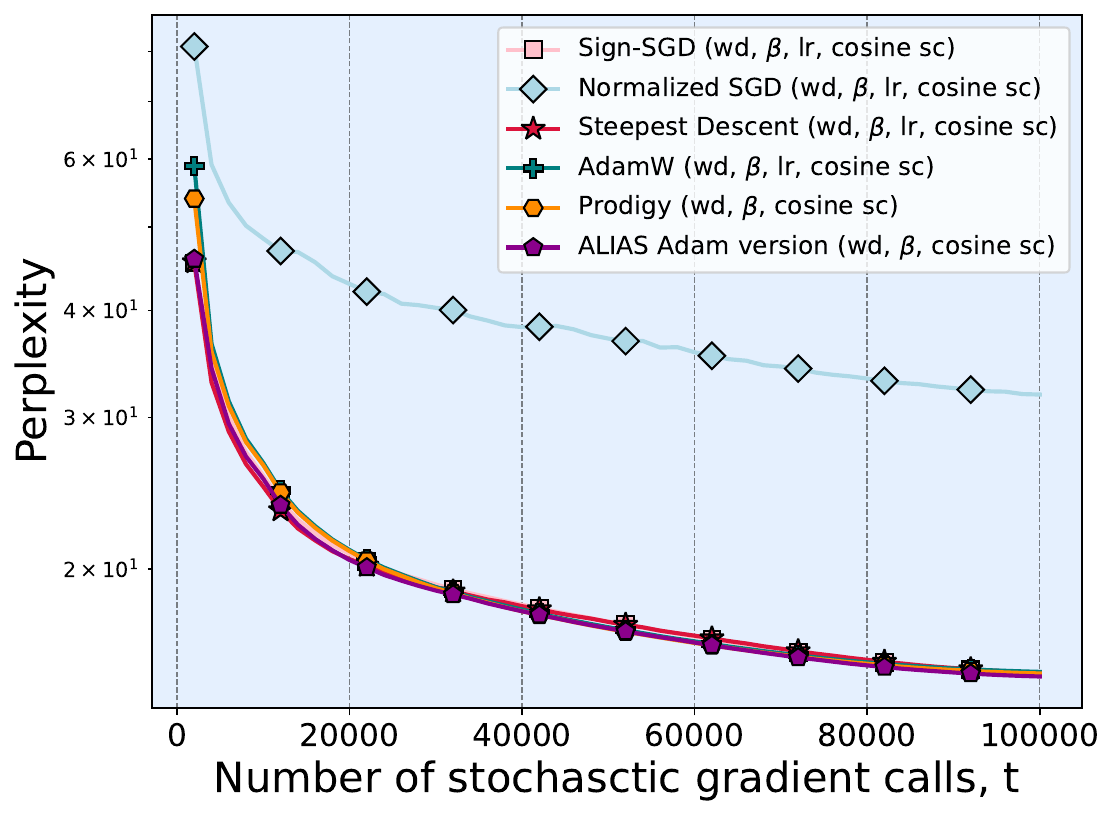}
\end{minipage}

\caption{Comparison of \textsc{Sign-SGD} methods on 350M parameters \textsc{LLaMA} pre-training. Left column is results for methods with weight decay and without momentum parameter $\beta$, right column -- methods with momentum $\beta$.}
\label{fig:llm_plots_350m_LLaMA}
\end{figure}

The results are consistent with those obtained for the smaller model. Among the momentum-based methods, \textsc{ALIAS} Adam version demonstrates the best performance, while among the methods without momentum, \textsc{ALIAS} exhibits comparable performance to other solutions.

\subsubsection{Compute resources.}

We conducted all experiments described in Section \ref{sec:experimental_setup} on the cluster equipped with 4$\times$NVIDIA A100 GPUs. A complete run of 100,000 steps took approximately 12 hours using a full node.

\subsection{Tiny ImageNet classification with Swin Transformer Fine-Tuning}\label{sec:vit_exp}

\subsubsection{Experimental setup}
Our image classification experiments on the Tiny ImageNet dataset \citep{tiny_imagenet} employed the Tiny Swin Transformer architecture \citep{swin}. This lightweight variant of the Swin Transformer is characterized by its hierarchical design and the use of shifted windows for efficient self-attention computation. The specific configuration utilized involved non-overlapping $4 \times 4$ input patches and a $7 \times 7$ window size for local self-attention.

\noindent We initialized the model using pretrained weights from ImageNet-1K \citep{imagenet}, specifically the \texttt{swin\_T\_patch4\_window7\_224} checkpoint provided in the official Swin Transformer repository\footnote{\url{https://github.com/microsoft/Swin-Transformer/blob/main/MODELHUB.md}}. The model was then fine-tuned on Tiny ImageNet.

\noindent The Tiny ImageNet dataset comprises 200 classes with images of $64 \times 64$ resolution. To meet the model's input requirements, all images were upsampled to $224 \times 224$. A standard ImageNet-style data augmentation pipeline was implemented, including random resized cropping and horizontal flipping.

\noindent Training spanned 50 epochs, with a batch size of 256. The learning rate was determined via a grid search, employing a multiplicative step of $10^{\frac{1}{4}}$. A cosine learning rate schedule was adopted, featuring a linear warm-up phase for the initial 10\% of total training steps, followed by decay to 10\% of the peak learning rate.
Weight decay was selected from $\{0, 0.01, 0.1\}$ based on validation performance. All optimization methods incorporated gradient clipping with a threshold of 1.0. When momentum was applied, Nesterov acceleration with a coefficient of 0.99 was used. For AdamW, the standard configuration of $\beta_1 = 0.9$, $\beta_2 = 0.999$, and $\varepsilon = 10^{-8}$ was maintained.

\subsubsection{Performance on Image Classification}

Further results and training curves for the Tiny Swin Transformer on the Tiny ImageNet classification task are presented in Figure \ref{fig:swin_plots_mom} and Table \ref{tab:vit}. We provide plots for the same methods with the incorporated momentum parameter as for the \textsc{LLaMA} pre-training task.

\begin{figure}[H]
\centering
    \includegraphics[width=0.8\linewidth]{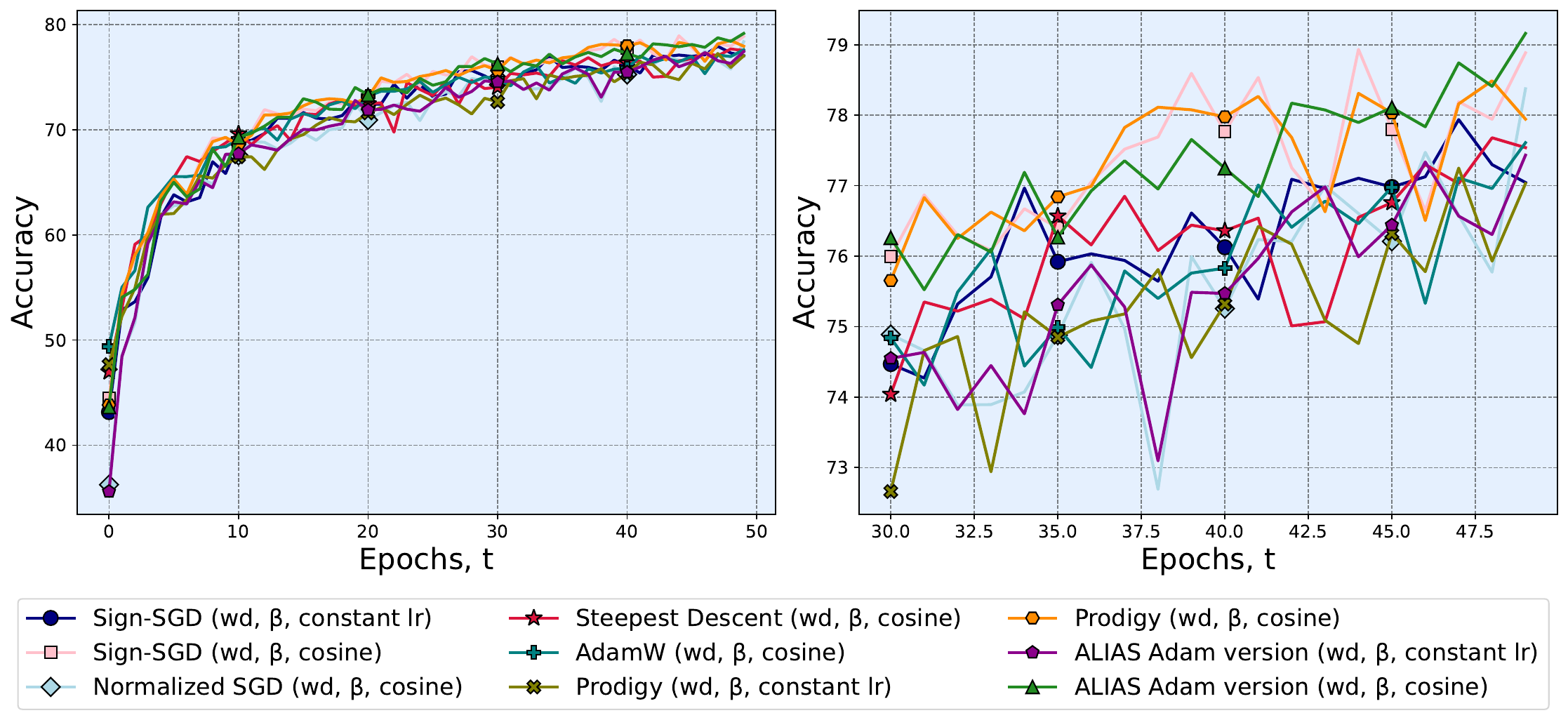}
\caption{\textsc{Sign-SGD} methods with added momentum parameter $(\beta)$, \textsc{AdamW} (wd) and \textsc{Prodigy} on \textsc{Swin} fine-tuning. Left plot represents full process of training, right plot demonstrates accuracy on last 20 epoch.}
\label{fig:swin_plots_mom}
\end{figure}

\begin{table}[ht]
\centering
\caption{Final accuracy of \textsc{Sign-SGD} methods with added momentum parameter $(\beta)$, \textsc{AdamW} (wd) and \textsc{Prodigy} on \textsc{Swin} fine-tuning.}
\label{tab:vit}
\begin{tabular}{l|c}
\toprule
Algorithm & Final accuracy ($\uparrow$)\\
\midrule
\textsc{Sign-SGD} (wd, $\beta$, lr) & 77.045  \\
\hline
\textsc{Sign-SGD} (wd, $\beta$, lr, cosine sc) & 78.885 \\
\hline
\textsc{Normalized SGD} (wd, $\beta$, lr, cosine sc) & 78.375 \\
\hline
\textsc{Steepest Descnet} (wd, $\beta$, lr, cosine sc) & 77.547 \\
\hline
\textsc{AdamW} (wd, $\beta$, lr, cosine sc) & 77.612 \\
\hline
\textsc{Prodigy} (wd, $\beta$) & 77.035 \\
\hline
\textsc{Prodigy} (wd, $\beta$, cosine sc) & 77.944 \\
\hline
\rowcolor{mygreen} \textsc{ALIAS} Adam version (wd, $\beta$) (\textbf{ours}) & 77.433 \\
\hline
\rowcolor{mygreen} \textsc{ALIAS} Adam version (wd, $\beta$, cosine sc) (\textbf{ours}) & 79.161\\
\bottomrule
\end{tabular}
\end{table}

\noindent The results demonstrate the superiority of our algorithms over both tuned sign-based methods and advanced optimizers, such as \textsc{Prodigy} and \textsc{AdamW}. 

\subsubsection{Compute Resources}

We conducted all experiments described in Section \ref{sec:vit_exp} using a single NVIDIA A100 GPU. A complete run of 50 epochs required approximately 3 hours.

\subsection{AlgoPerf benchmark}

In this section, we evaluate our method on some tasks from the \textsc{AlgoPerf} benchmark \citep{dahl2023benchmarking}. To test our approach across different modalities, we chose the MRI reconstruction and molecular property prediction (MPP) tasks. We preserve the original setups from the benchmark implementation\footnote{\url{https://github.com/mlcommons/algorithmic-efficiency/blob/main/docs/GETTING_STARTED.md}}. Specifically, for the MRI reconstruction task, we use the fastMRI dataset and a U-Net model; for molecular property prediction, we utilize the OGBG dataset with a GNN model. We validate only our \textsc{ALIAS} Adam version algorithm. The results for the other methods are taken from Table 4 in \citep{kasimbeg2025far}, which reports comparisons between parameter-free optimizers and tuned \textsc{AdamW}. For our method, we fix $\gamma^t = 10^{-3}$. The results are presented in Table~\ref{tab:algo_perf}.

\begin{table}[H]
\centering
\caption{Parameter-free methods and \textsc{AdamW} on MRI reconstruction and molecular property prediction tasks.}
\label{tab:algo_perf}
\begin{tabular}{l | c | c }
\toprule 
Algorithm & 
MRI, SSIM ($\uparrow$) &
MPP, mAP ($\uparrow$) \\
\midrule
\textsc{AdamW} & 0.723 & 0.254 \\
\textsc{DoG} & 0.714 & 0.231 \\
\textsc{D-Adaptation} (with Adam) & 0.722 & 0.221 \\
\textsc{MoMo} (with Adam) & 0.723 & 0.221 \\
\textsc{Prodigy} & 0.723 & 0.212 \\
\rowcolor{mygreen} \textsc{ALIAS} Adam version (wd) (\textbf{ours}) & 0.724 &  0.242 \\
\bottomrule
\end{tabular}
\hfill
\end{table}

The results demonstrate that our approach improves upon the metrics of prior parameter-free methods on the evaluated tasks. We also note that, for the MRI reconstruction task, the performance of our method surpasses that of the tuned \textsc{AdamW}.

\section{\textsc{Sign-SGD} with Additional Stepsize Search Procedure} \label{sec:sos}

In this section, we present an algorithm that achieves near-optimal convergence rates for \textsc{Sign-SGD} -- $\mathcal{\widetilde{O}}\left(\frac{\Delta^* L_\infty}{\varepsilon^2}\right)$ in the deterministic case, and $\mathcal{O}\left(\frac{\Delta^* L_\infty}{\varepsilon^2} + \left\|\sigma\right\|_1^2\right)$ in the stochastic case. The method does not utilize prior knowledge about the parameters of the problem and incorporates an additional automatic stepsize search procedure.

\paragraph{Exact gradients.}

To design the necessary algorithm, we should provide a stepsize $\gamma$ in Algorithm \ref{alg:sign-sgd} that yields an estimate as in \eqref{eq:optimal_sign_step}. Let us begin with the description of the approximation of the stepsize \ref{eq:optimal_sign_step} that we utilize. We establish that the desired value is $\gamma = \frac{\mathfrak{N}_T}{\mathfrak{D}_T}$, where $\mathfrak{N}_T = \widetilde{\Delta}_T = f(x^0) - \min_{0\leqslant t\leqslant T} f(x^t)$ is the numerator and $\mathfrak{D}_T = \sum\nolimits_{t=0}^{T-1} \|\nabla f(x^{t+1}) - \nabla f(x^t)\|_1$ is the denominator. The intuition behind this choice is that due to $L_\infty$-smoothness, we have $\mathfrak{D}_T\sim L_\infty\sum_{t=0}^{T-1}\left\|x^{t+1} - x^t\right\|_\infty = \gamma L_\infty\sum_{t=0}^{T-1}\left\|\text{sign}\left(\nabla f(x^t)\right)\right\|_\infty = \gamma L_\infty T$; then $\gamma$ has $\frac{\sqrt{\widetilde{\Delta}_T}}{\sqrt{L_\infty T}}$ magnitude. However, we face a more complex situation compared to the regret minimization paradigm: in our case, $\widetilde{\Delta}_T$ can be non-negative (in regret minimization, the analog of $\Delta_T$ is the norm of the points' difference $\left\|x^0 - x^T\right\|$ \citep{carmon2022making} which is always positive). To address this, we add an extra step to the \textsc{Sign-SGD} algorithm. Define $e = \text{sign}\left(\nabla f(x^{-1})\right)$. Let $\tau$ be a small parameter. The update is:
\begin{align}
    \label{eq:bisection_first_step}f(x^0) = \min\left\{f(x^{-1} + \tau e), f(x^{-1} - \tau e)\right\},
\end{align}
The rationale behind selecting the step is as follows. Due to the smoothness of the objective function, there exists a small neighborhood around any point within which moving in any direction decreases the objective value. The exception arises when $x^{-1}$ is the minimum itself. In this case, the sign descent algorithm would not take any steps, and we return this point as the solution. Since the neighborhood size $\tau$ depends on $L_\infty$, we iteratively decrease $\tau$ until it is sufficiently small. The choice of $\tau$ and the guarantee $f(x^0) < f(x^{-1})$ are discussed in Lemma \ref{lemma_first_step}. In this manner, we ensure that $\mathfrak{N}_T = \widetilde{\Delta}_T = f(x^{-1}) - \min_{-1\leqslant t\leqslant T} f(x^t) > 0$. To prevent the denominator from being zero, we introduce a small constant $\zeta$, which represents the minimum gradient norm encountered during learning. This leads to $\mathfrak{D}_T = \sum\nolimits_{t=0}^{T-1} \|\nabla f(x^{t+1}) - \nabla f(x^t)\|_1 + \zeta$ (see Lemma \ref{lemma_bisection_entry} for details). However, determining these values necessitates completing all $T$ iterations. To address this, we employ the \textsc{Bisection} procedure from \citep{carmon2022making}, which is outlined in Algorithm \ref{alg:bisection_procedure}.

\begin{wrapfigure}{r}{0.7\textwidth}
\begin{minipage}{0.7\textwidth}
\vspace{-8mm}
\begin{algorithm}[H]
   \caption{\textsc{Bisection} procedure}
   \label{alg:bisection_procedure}
\begin{algorithmic}[1]
    \State {\bfseries Input:} Optimal stepsize value $\phi(\gamma)$, lower stepsize bound $\gamma_{\text{lo}}$, upper stepsize bound $\gamma_{\text{hi}}$, $x^{-1}\in\mathbb R^d$, number of iterations $T$
    \State$\phi(\gamma) \text{~$\bigl($it is always in the form~} \phi(\gamma) = \frac{\mathfrak{N}_T(\gamma)}{\mathfrak{D}_T(\gamma)}\bigr)$
    \If{$\gamma_{\text{hi}} \leqslant \phi(\gamma_{\text{hi}})$} \label{alg:bisection_line3} 
    \Return $\infty$~~\Comment{Early infinite termination}
    \EndIf
    \If{$\gamma_{\text{lo}} > \phi(\gamma_{\text{lo}})$} \label{alg:bisection_line6} 
    \Return $\gamma_{\text{lo}}^* = \gamma_{\text{lo}}$~~\Comment{Early non-infinite termination}
    \EndIf
    \While{$\gamma_{\text{hi}} > 2\gamma_{\text{lo}}$} \label{alg:bisection_line9} 
    \State $\gamma_{\text{mid}} = \sqrt{\gamma_{\text{lo}}\gamma_{\text{hi}}}$ \label{alg:bisection_line10} 
    \State $\mathfrak{N}_T(\gamma_{\text{mid}}), \mathfrak{D}_T(\gamma_{\text{mid}})\leftarrow \textsc{Sign-SGD}(x^{-1}, T, \gamma_{\text{mid}})$ ~~\Comment{First step in Sign-SGD is made by \eqref{eq:bisection_first_step}}\label{alg:bisection_line11}
    \If{$\gamma_{\text{mid}} \leqslant \phi(\gamma_{\text{mid}})$} \label{alg:bisection_line12}
    \State $\gamma_{\text{lo}} = \gamma_{\text{mid}}$
    \Else
    \State $\gamma_{\text{hi}} = \gamma_{\text{mid}}$
    \EndIf ~~\Comment{Bisection invariants: $\gamma_{\text{lo}} < \phi(\gamma_{\text{lo}})$, $\gamma_{\text{hi}} > \phi(\gamma_{\text{hi}})$} \label{alg:bisection_line16}
    \EndWhile~~\Comment{Bisection stop condition: $\gamma_{\text{hi}} \leqslant 2 \gamma_{\text{lo}}$}
    \If{$\mathfrak{N}_T(\gamma_{\text{hi}})\leqslant \mathfrak{N}_T(\gamma_{\text{lo}})\frac{\phi(\gamma_{\text{hi}})}{\gamma_{\text{hi}}}$} \label{alg:bisection_line18}
    \Return $\gamma_{\text{hi}}^* = \gamma_{\text{hi}}$ ~~\Comment{ $\gamma_{\text{hi}}$ return condition}
    \Else
    \Return $\gamma_{\text{lo}}^* = \gamma_{\text{lo}}$ ~~\Comment{ $\gamma_{\text{lo}}$ return condition}
    \EndIf \label{alg:bisection_line22}
\end{algorithmic}
\end{algorithm}
\vspace{-7mm}
\end{minipage}

\begin{minipage}{0.7\textwidth}
\begin{algorithm}[H]
   \caption{\textsc{SOS Sign-SGD}}
   \label{alg:sign-sgd_bisection}
\begin{algorithmic}[1]
    \State {\bfseries Input:} Initial stepsize bound $\gamma_s$, initial bound step $k$, start point $x^{-1}\in\mathbb R^d$, number of iterations $T$
    \State $\gamma_0 = \textsc{Bisection}\left(\phi(\gamma), \gamma_s, 2^{2^{k}}\gamma_s, T\right)$
    \State $x^T = \textsc{Sign-SGD}(x^{-1}, T, \gamma_0)$
\end{algorithmic}
\end{algorithm}
\vspace{-8mm}
\end{minipage}
\end{wrapfigure}
\noindent Our goal is to have $\gamma = \phi(\gamma) = \frac{\mathfrak{N}_T(\gamma)}{\mathfrak{D}_T(\gamma)}$. To find such $\gamma$, we take an initial interval $[\gamma_{\text{lo}}, \gamma_{\text{hi}}]$ and, iteratively narrowing it, obtain a small enough interval $[\gamma_{\text{lo}}^*, \gamma_{\text{hi}}^*]$ that contains the $\gamma - \phi(\gamma) = 0$ point. To perform this, we firstly have to make sure that the initial interval contains the desired point. For this purpose, we require $\gamma_{\text{hi}} > \phi(\gamma_{\text{hi}})$ and $\gamma_{\text{lo}} < \phi(\gamma_{\text{lo}})$. We designate the group of these two requirements as the bisection start condition (Lines \ref{alg:bisection_line3}, \ref{alg:bisection_line6}). Note that we can always satisfy the first condition, as shown in Lemma \ref{lemma_bisection_entry}. Regarding the second requirement, we can choose a sufficiently small initial $\gamma{_\text{lo}}$ value. Even if $\gamma{_\text{lo}}$ is still greater than $\phi(\gamma{_\text{lo}})$, we can select this $\gamma_{\text{lo}}$ value as the desired stepsize without performing the \textsc{Bisection} procedure, thereby obtaining optimal convergence guarantees. This is demonstrated in \textbf{Step 2} of the proof of Theorem \ref{th:exact_gradient_bisection_main} (Theorem \ref{th:exact_gradient_bisection_appendix}). This enables us to avoid early infinite termination (non-compliance with the first condition) and prevents convergence from being compromised by early non-infinite termination (non-compliance with the second condition). Additionally, we ensure that, by entering the procedure with the desired point between $\gamma{_\text{lo}}$ and $\gamma{_\text{hi}}$, it remains invariant throughout the procedure. Indeed, at each iteration we compute $\gamma_{\text{mid}}$ as the geometric average of the bounds and perform $T$ iterations of the \textsc{Sign-SGD} method with this stepsize to find $\phi(\gamma_{\text{mid}})$ (Lines \ref{alg:bisection_line10}, \ref{alg:bisection_line11}).
It remains for us to choose such a part of the segment ($[\gamma_{\text{lo}}, \gamma_{\text{mid}}]$ or $[\gamma_{\text{mid}}, \gamma_{\text{hi}}]$) in which $\phi(\gamma_{\text{mid}})$ lies (Lines \ref{alg:bisection_line12} - \ref{alg:bisection_line16}). We perform this bisection, until $\gamma_{\text{hi}}$ exceeds $\gamma_{\text{lo}}$ by more than 2 times (Line \ref{alg:bisection_line9}). In the end, by utilizing return conditions, the procedure returns $\gamma_{\text{lo}}^*$ or $\gamma_{\text{hi}}^*$ (Lines \ref{alg:bisection_line18} - \ref{alg:bisection_line22}). They satisfy the specific bounds explored in Lemma \ref{lemma_bisection_invariants}.

\noindent Using this procedure, we present a description of the \textsc{SOS} (Search of the Optimal Stepsize) \textsc{Sign-SGD} (Algorithm \ref{alg:sign-sgd_bisection}).
Before we pass to the convergence rate, we discuss the number of iterations required by Algorithm \ref{alg:bisection_procedure}. Since we calculate the average geometric at each iteration, we need $\log\log \frac{\gamma_{\text{hi}}}{\gamma_{\text{lo}}}$ steps, where $\gamma_{\text{lo}}$ and $\gamma_{\text{hi}}$ are the boundaries of the initial segment.
Thus, according to Algorithm \ref{alg:sign-sgd_bisection}, it requires $\log\log \frac{2^{2^{k}}\gamma_s}{\gamma_s} = k$ iterations. We establish a lower bound on $k$ by requiring that the initial $\gamma_{\text{hi}}$ is greater than $\phi(\gamma_{\text{hi}})$. According to Lemma \ref{lemma_bisection_entry}, $\gamma_{\text{hi}}$ should be at least $\frac{\Delta^*}{\|\nabla f(x^0)\|_1}$. In this way, $k = \log\log\frac{\Delta^*}{\gamma_s\|\nabla f(x^0)\|_1}$. Therefore, allowing Algorithm \ref{alg:sign-sgd_bisection} to perform $T$ iterations, the total number of iterations (considering Algorithm \ref{alg:bisection_procedure} performance time) is $T\log\log\frac{\Delta^*}{\gamma_s\|\nabla f(x^0)\|_1}$. We regard this additional double-logarithmic factor as negligible, as it aligns with the results in \citep{carmon2022making}. We now present the main theoretical result of this section.

\begin{theorem}\label{th:exact_gradient_bisection_main}
    Suppose Assumptions \ref{as:smoothness}, \ref{as:convexity}, \ref{as:func_optimum}, \ref{as:go_exact_gradient} hold. Then for Algorithm \ref{alg:sign-sgd_bisection} after obtaining the stepsize $\gamma_0$ the following estimate is valid:  
    \begin{align*}
            \frac{1}{T}\sum\limits_{t=0}^{T-1} \|\nabla f(x^t)\|_1 &\leqslant 6\frac{\sqrt{\Delta^* L_\infty}}{\sqrt{T}} + \frac{3\left\|\nabla f(x^0)\right\|_1}{T}.
    \end{align*}
    
    \noindent Moreover, taking into account the complexity of Algorithm \ref{alg:bisection_procedure} in relation to the initial stepsize bound $\gamma_s$, to reach $\varepsilon$-accuracy, where $\varepsilon \geqslant \frac{1}{T}\sum\nolimits_{t=0}^{T-1} \|\nabla f(x^t)\|_1$, Algorithm \ref{alg:sign-sgd_bisection} needs
    \begin{align*}        
        \mathcal{\widetilde{O}}\left(\frac{\Delta^* L_{\infty}}{\varepsilon^2}\right) \text{~~iterations.}
    \end{align*}
\end{theorem}
\paragraph{Discussion of the results.} We obtain the near-optimal convergence rate \
\ref{eq:optimal_sign_step}. Our method retains a dependency on the initial approximation. Indeed, we should take $\gamma_s$ to be less than $\frac{\Delta^*}{L_\infty T}$, according to \textbf{Step 2} in the proof of Theorem \ref{th:exact_gradient_bisection_main} (Theorem \ref{th:exact_gradient_bisection_appendix}). An analogous requirement was established in the work \citep{carmon2022making} and we do not consider this to be an issue. Nevertheless, despite the theoretical optimality of the proposed approach, its practical application is not promising. Launching multiple training sessions on large models does not appear effective. In this context, Algorithm \ref{alg:sign-sgd_adaptation} remains our main contribution. While proposing Algorithm \ref{alg:sign-sgd_bisection}, we demonstrate how to obtain the near-optimal rate for \textsc{Sign-SGD} in parameter-free optimization.

\paragraph{Stochastic gradients and distributed settings.} The description and analysis of Algorithm \ref{alg:sign-sgd_bisection} in stochastic and distributed setups can be found in Appendix \ref{section:stochastic_bisection_proofs}, \ref{section:distributed_bisection_proofs}.

\subsection{\textsc{SOS Sign-SGD} experiments} 
\subsubsection{Logistic regression.} We present toy experiments on logistic regression. We provide a comparison of \textsc{Sign-SGD} with the theoretical stepsize $\frac{1}{\sqrt{T}}$ (Algorithm \ref{alg:sign-sgd}), \textsc{SOS Sign-SGD} (Algorithm \ref{alg:sign-sgd_bisection}), \textsc{ALIAS} (Algorithm \ref{alg:sign-sgd_adaptation}) and \textsc{Steepest Descent} (Algorithms \ref{alg:steepest_descent}, \ref{alg:sos_steepest_descent}). We validate the criteria $\left\|\nabla f(x^t)\right\|_1$ on four datasets sourced from the \textsc{LIBSVM} library \citep{chang2011libsvm}: \texttt{a9a}, \texttt{w8a}, \texttt{ijcnn1} and \texttt{skin-nonskin}. The results are presented in Figure \ref{fig:logloss_plots}.

\begin{figure}[ht]
  \centering
  \begin{subfigure}[b]{0.24\columnwidth}
    \includegraphics[width=\textwidth]{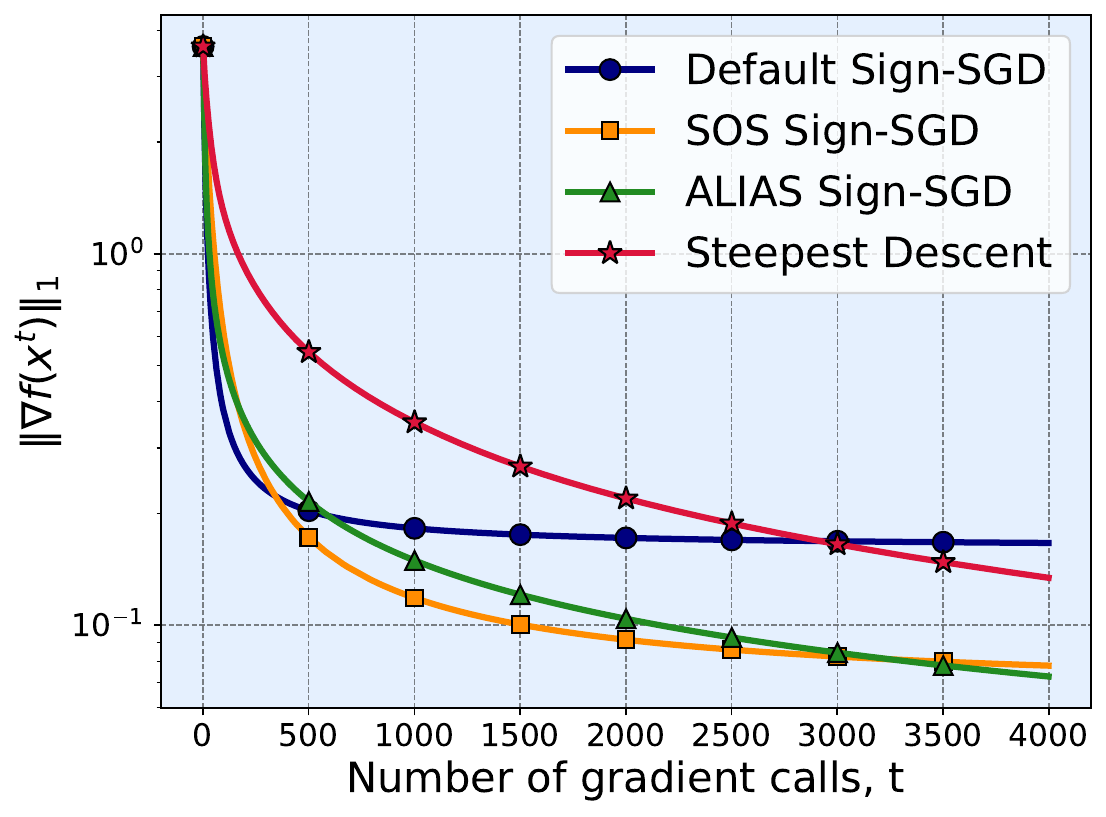}
    \caption{\texttt{a9a}}
  \end{subfigure}
  \hfill
  \begin{subfigure}[b]{0.24\columnwidth}
    \includegraphics[width=\textwidth]{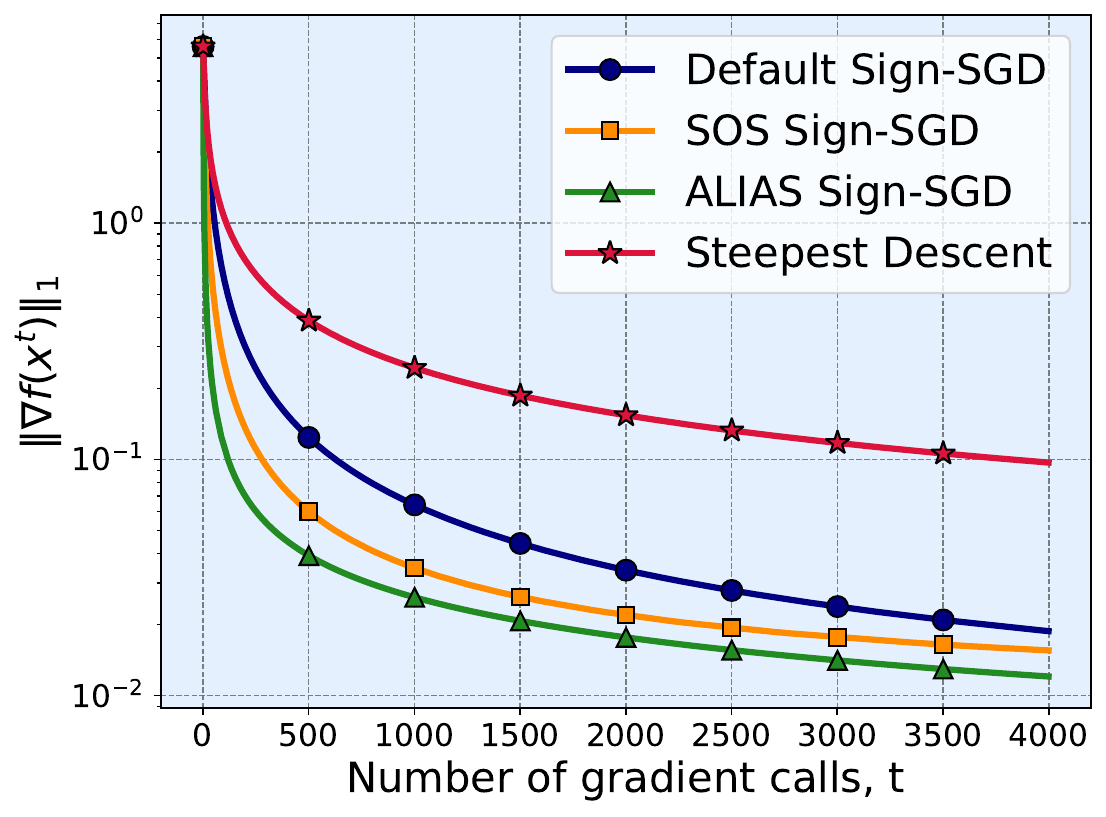}
    \caption{\texttt{w8a}}
  \end{subfigure}
  \hfill
  \begin{subfigure}[b]{0.24\columnwidth}
    \includegraphics[width=\textwidth]{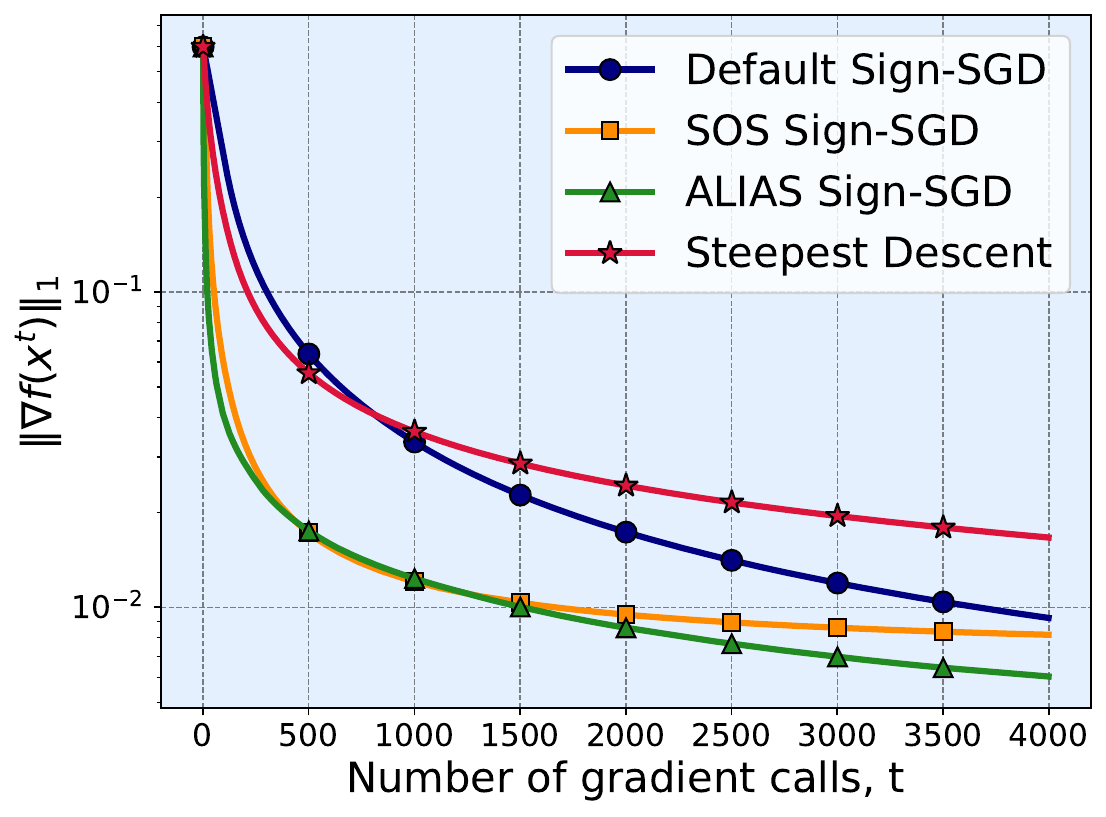}
    \caption{\texttt{ijcnn1}}
  \end{subfigure}
  \hfill
  \begin{subfigure}[b]{0.24\columnwidth}
    \includegraphics[width=\textwidth]{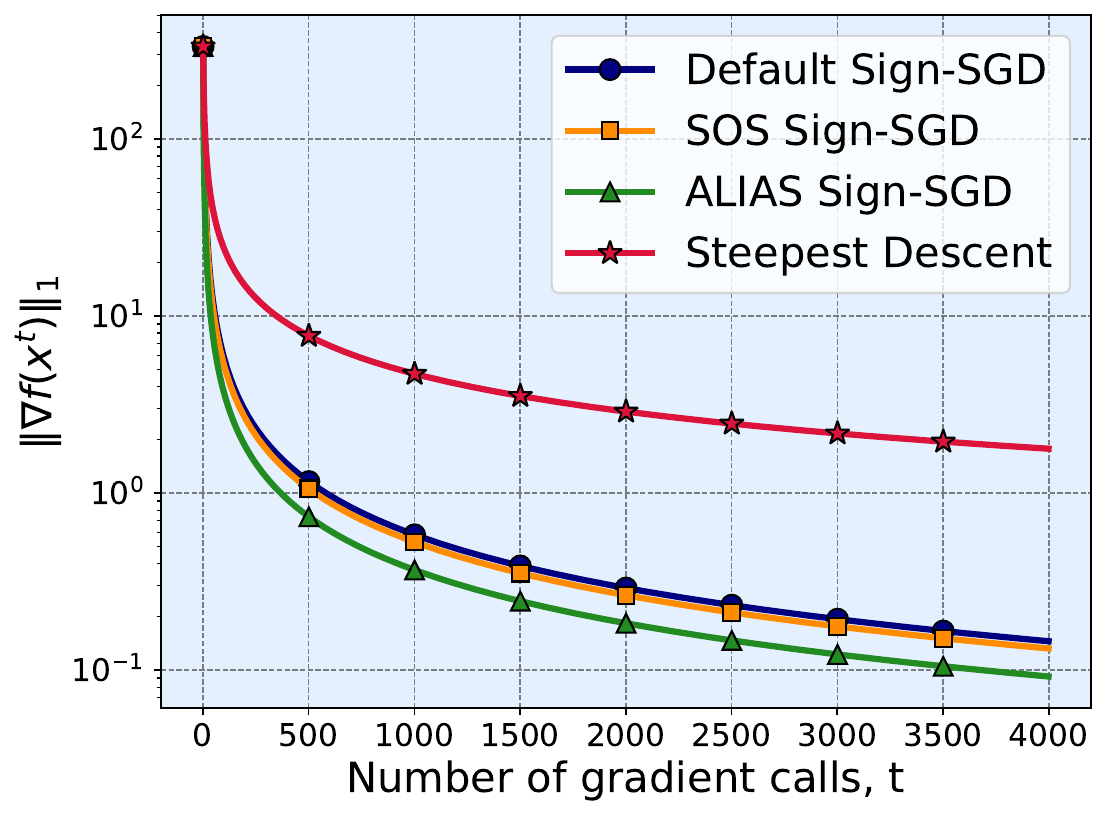}
    \caption{\texttt{skin-nonskin}}
  \end{subfigure}
  \caption{\textsc{Sign-SGD} methods on logistic regression.}
  \label{fig:logloss_plots}
\end{figure}

The plots show that even on the convex problems, \textsc{SOS Sign-SGD} performs worse than \textsc{ALIAS}. This was expected, however, testing this method on a real non-convex problem, such as training LLMs, lacks justification. Additionally, it is noteworthy that \textsc{Steepest Descent} performs worse compared to \textsc{Sign-SGD}, highlighting the limited practical applicability of this approach. Consequently, we provide analysis for \textsc{Steepest Descent} only with incorporated Algorithm \ref{alg:bisection_procedure} in Appendix \ref{sec:steepest_descent}. We do not focus on the analysis and development of efficient parameter-free methods based on this approach.

\subsubsection{Non-convex problem}

We provide the comparison of \textsc{Sign-SGD} with theoretical stepsize $\frac{1}{\sqrt{T}}$ (Algorithm \ref{alg:sign-sgd}), \textsc{SOS Sign-SGD} (Algorithm \ref{alg:sign-sgd_bisection}), \textsc{ALIAS} (Algorithm \ref{alg:sign-sgd_adaptation}) and \textsc{Steepest Descent} (Algorithms \ref{alg:steepest_descent}, \ref{alg:sos_steepest_descent}). We validate criteria $\left\|\nabla f(x^t)\right\|_1$ on four datasets, sourced from the \textsc{LIBSVM} library \citep{chang2011libsvm}: \texttt{a9a}, \texttt{w8a}, \texttt{ijcnn1} and \texttt{skin-nonskin}. In the main part we presented the results for the convex problem. Now we consider the non-convex objective, namely the non-linear least squares loss:
\begin{align}
    \label{exp:nllsq}
    f(x) = \frac{1}{n}\sum\limits_{i=1}^n \left(y_i - \frac{1}{1 + \exp\left(-a_i^T x\right)}\right)^2.
\end{align}
There we denote $a_i \in \mathbb R^{1\times d}$ as the sample and $y_i\in\{0, 1\}$ as the target. The results are presented in Figure \ref{fig:nllsq_plots}.

\begin{figure}[ht]
  \centering
  \begin{subfigure}[b]{0.24\textwidth}
    \includegraphics[width=\textwidth]{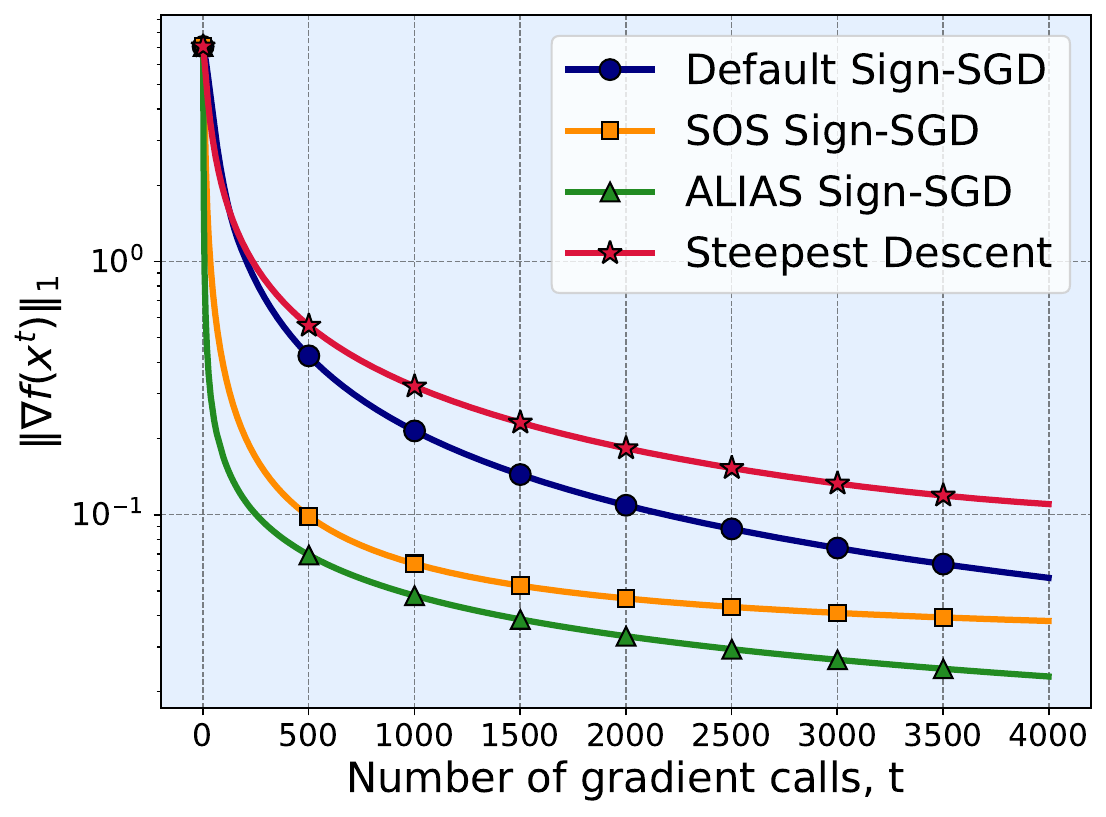}
    \caption{\texttt{a9a}}
  \end{subfigure}
  \hfill
  \begin{subfigure}[b]{0.24\textwidth}
    \includegraphics[width=\textwidth]{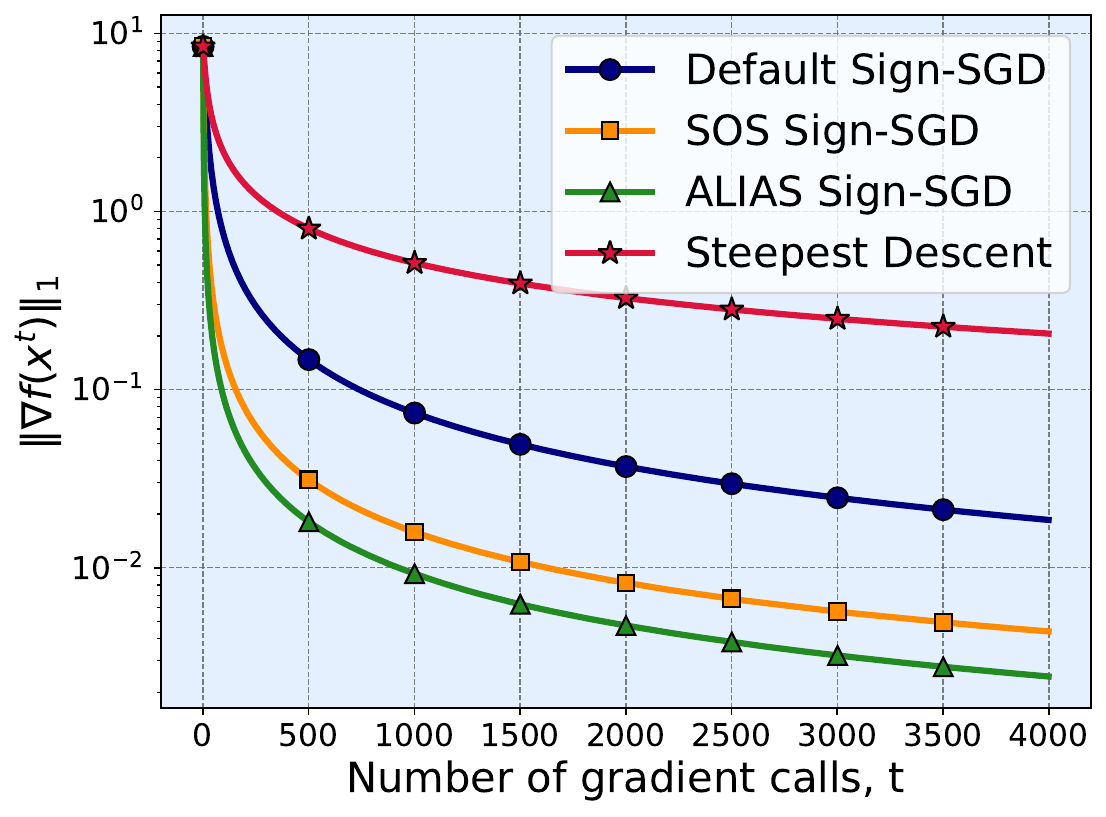}
    \caption{\texttt{w8a}}
  \end{subfigure}
  \hfill  
  \begin{subfigure}[b]{0.24\textwidth}
    \includegraphics[width=\textwidth]{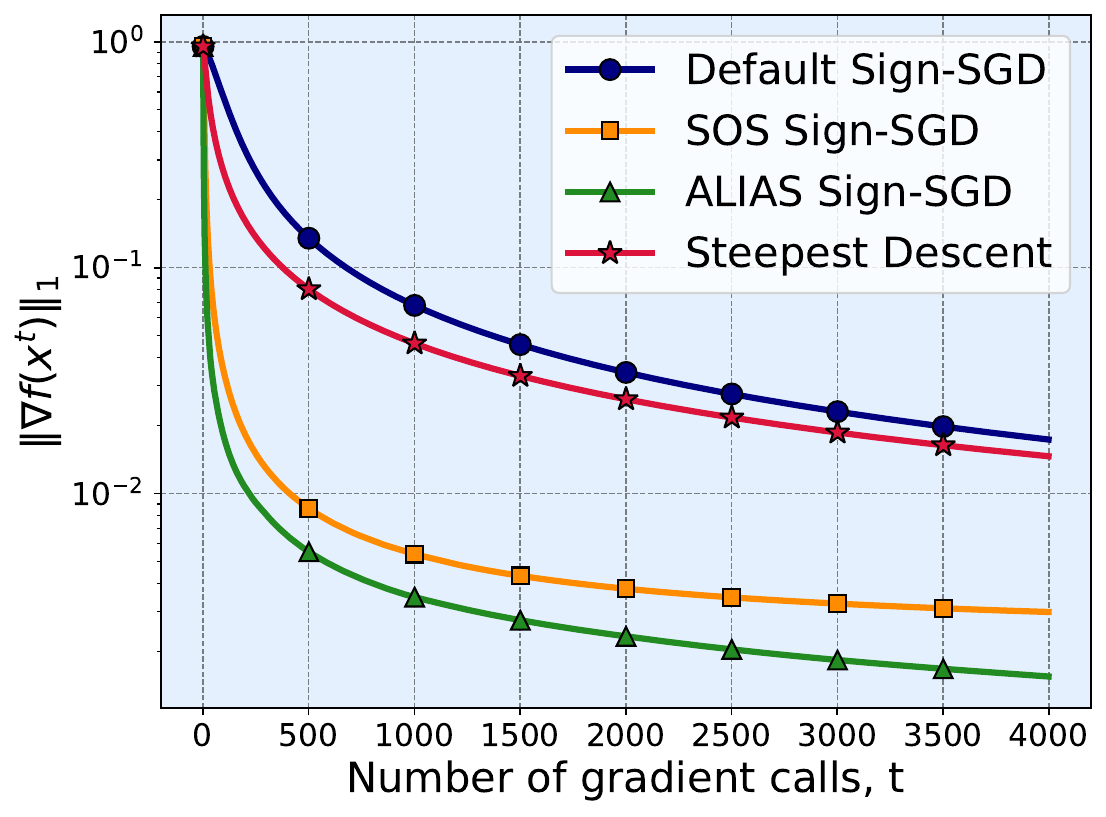}
    \caption{\texttt{ijcnn1}}
  \end{subfigure}
  \hfill
  \begin{subfigure}[b]{0.24\textwidth}
    \includegraphics[width=\textwidth]{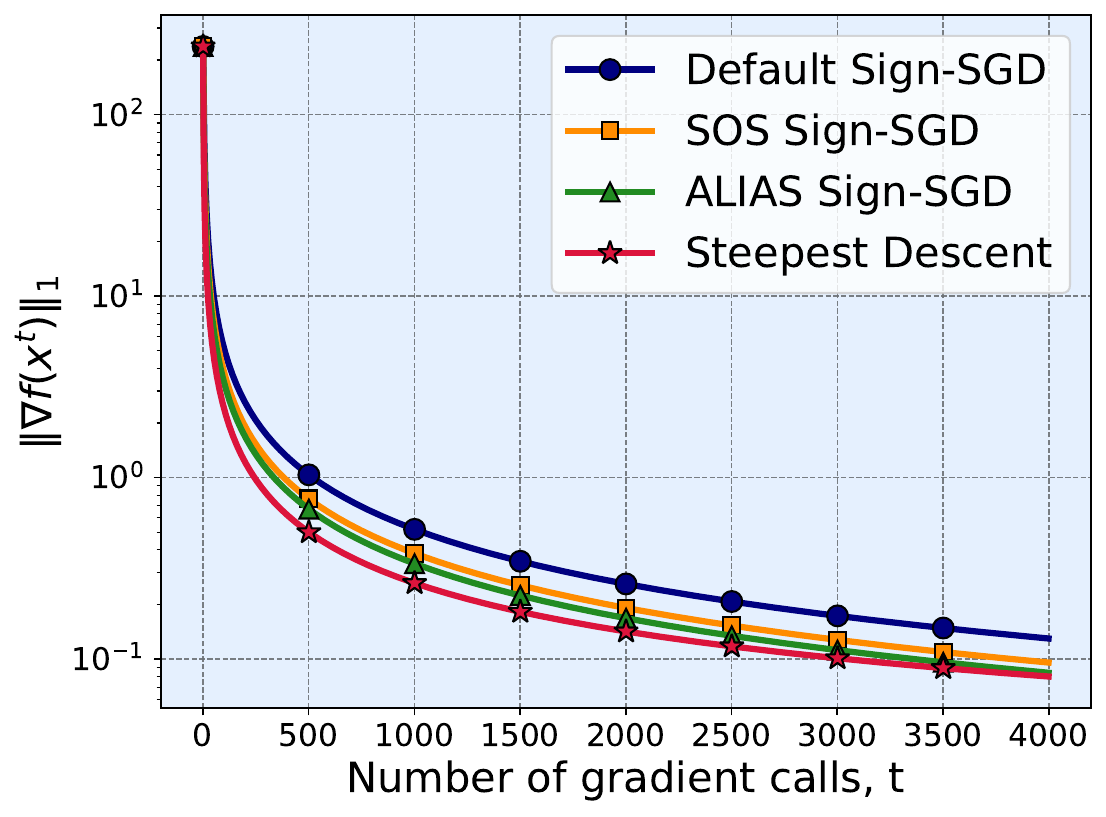}
    \caption{\texttt{skin-nonskin}}
  \end{subfigure}
  \caption{Comparison of \textsc{Sign-SGD} methods on problem \eqref{exp:nllsq}.}
  \label{fig:nllsq_plots}
\end{figure}

\noindent The plots demonstrate the superiority of our methods, \textsc{SOS Sign-SGD} and \textsc{ALIAS}, over classical \textsc{Sign-SGD} with the vanilla step size choice $\frac{1}{\sqrt{T}}$. This highlights the importance of adapting the stepsize to the problem structure and hyperparameters. However, \textsc{SOS Sign-SGD} still underperforms compared to \textsc{ALIAS}.

\section{Additional Notation and General Inequalities}\label{section:additoinal_notation_basic_inequalitites}

\paragraph{Notation.} Here we present the full list of notation, used in our paper.\\
$\bullet$ We denote $d$ as the dimension of the problem; $T$ as the total number of iterations in the algorithms; $x^{-1}$ as the starting point in the \textsc{SOS Sign-SGD} algorithm, $x^{0}$ as the starting point in the \textsc{ALIAS} algorithm; $x^t$ as the point at $t$-th iteration in the algorithms; $x^*$ as the optimal solution of the problem; $\widetilde{\Delta}_T = f(x^{-1}) - \underset{-1\leqslant t\leqslant T}{\min}~ f(x^t)$; $\Delta^* = f(x^{-1}) - f(x^*)$ for the \textsc{SOS Sign-SGD} method, $\Delta^* = f(x^0) - f(x^*)$ for the \textsc{ALIAS} method.\\
$\bullet$ We denote $\nabla f(x^t)$ as the honest full gradient of the objective function at the point $x^t$; $g^t$ (or $g_{\xi^t}^t$) as the stochastic gradient of the objective function at the point $x^t$, according to the stochastic realization $\xi^t$ (we add lower index only when we use different stochastic realizations in the method); $g_j^t$ (or $g_{j,\xi^t}^t$) as the stochastic gradient of the objective function at the point $x^t$ on $j$-th device in the distributed setup, according to the stochastic realization $\xi^t$.\\
$\bullet$ For vectors $x, y\in\mathbb R^d$ we denote $\text{sign}(x)$ as the vector of the dimension $d$, where the $i$-th coordinate defines as
\begin{align*}
\left[\text{sign}(x)\right]_i = \text{sign}(x_i) = 
\begin{cases}
1, &\text{~if~} x_i > 0\\
0, &\text{~if~} x_i = 0\\
-1, &\text{~if~} x_i < 0
\end{cases};
\end{align*}
$\langle x, y\rangle = \sum\limits_{i=1}^d x_i y_i$ is the scalar product; $\|x\|_1 = \sum\limits_{i=1}^d |x_i|$ is $l_1$-norm; $\|x\|_2 = \sqrt{\sum\limits_{i=1}^d x_i^2}$ is $l_2$-norm; $\|x\|_\infty = \underset{i\in\overline{1,d}}{\max} |x_i|$ is $l_\infty$-norm. \\
$\bullet$ For a random vector $\xi\in\mathbb R^d$ and fixed vector $\psi\in\mathbb R^d$ we denote $\mathbb E\left[\xi\right]$ is the expected value with respect to a random vector $\xi$ and $\mathbb E\left[\xi | \psi\right]$ as the expected value with the respect to a random vector $\xi$, conditioned on the fixed vector $\psi$.\\
\paragraph{General inequalities.} Suppose $x, y \in\mathbb R^d$, $a, b \in\mathbb R$, $f(\cdot)$ is under Assumption \ref{as:smoothness} and $\xi, \psi\in \mathbb R_{+}$ are random variables. Then,
\begin{align}
    \label{bi:Lip}\tag{Lip} \|\nabla f(x) - \nabla f(y)\|_1 &\quad\leqslant \quad L_\infty \|x-y\|_\infty\\
    \label{bi:CS}\tag{CS} \|x+y\|_1 &\quad\leqslant \quad \|x\|_1 + \|y\|_1 \text{~~or~~} \sqrt{a+b}\leqslant\sqrt{a}+\sqrt{b}\\
    \label{bi:CS_conj}\tag{Conj} \langle x, y\rangle &\quad\leqslant \quad \|x\|_1\|y\|_\infty\\
    \label{bi:CS_holder}\tag{H\"ol} \mathbb E\left[\xi\psi\right] &\quad\leqslant \quad \left(\mathbb E\left[\xi\right]^{p}\right)^{\frac{1}{p}}\left(\mathbb E\left[\psi\right]^{q}\right)^{\frac{1}{q}}, \text{~~where~~} \frac{1}{p} + \frac{1}{q} = 1
\end{align}

\section{Lemmas for \textsc{SOS Sign-SGD}}\label{section:general_lemmas}
\begin{lemma}[Quadratic inequality]\label{lemma_quadratic_inequality}
    Let $x\in\mathbb R_{+}$ be a variable and $u, v \in\mathbb R_{+}$ be constants. Then $x^2 - ux - v  \leqslant 0$ implies $x \leqslant u + \sqrt{v}$. Additionally, $x^2 + ux - v  \leqslant 0$ implies $x \leqslant \sqrt{v}$.
    \begin{proof}
        Since $u, v$ are non-negative constants, the plain algebra involves $x_{\text{s.p.}} = \frac{u \pm \sqrt{u^2 + 4v}}{2}$ being stationary points of $x^2 - 2ux - v  \leqslant 0$ inequality. Since $x$ is the positive variable, the boundary $x \leqslant x_{\text{s.p.}+}$ is the appropriate area of the solution. It remains for us to say that 
        \begin{eqnarray*}
            x \leqslant \frac{1}{2}u + \frac{1}{2}\sqrt{u^2 + 4v} \overset{\ref{bi:CS}}{\leqslant} u + \sqrt{v},
        \end{eqnarray*}
        which finishes the proof of the first statement. Proceeding analogically for the second part, we obtain $x \leqslant -\frac{1}{2}u + \frac{1}{2}\sqrt{u^2 + 4v} \leqslant -\frac{1}{2}u + \frac{1}{2}u + \sqrt{v} = \sqrt{v}$.
    \end{proof}
\end{lemma}

\begin{lemma}[Bisection entry]\label{lemma_bisection_entry}
    Let $\gamma_{\text{max}} = \frac{\Delta^*}{\|\text{Grad}(f(x^0))\|_1} \Biggl(\text{or~~}\gamma_{\text{max}} = \frac{\Delta^*}{\frac{1}{M}\sum\limits_{j=1}^M\|\text{Grad}(f(x^0))\|_1}$ for distributed setting $\Biggr)$, where $\Delta^* = f(x^{-1}) - f(x^*)$ and the gradient oracle $\text{Grad}(f(\cdot))$ can be specified as $\nabla f(\cdot)$ or $g(\cdot)$ or $g_j(\cdot)$, that depends on the algorithm setting (exact gradient, stochastic gradient or gradient on the $i$-th node in distributed setting). Then we can always entry the bisection procedure without infinite early terminations taking $\gamma_{\text{hi}} \geqslant \gamma_{\text{max}}$.
\end{lemma}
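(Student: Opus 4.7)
The plan is to unpack what ``no infinite early termination'' means for Algorithm \ref{alg:bisection_procedure} and then verify it. The early-$\infty$ branch at Line \ref{alg:bisection_line3} is triggered precisely when $\gamma_{\text{hi}}\leqslant\phi(\gamma_{\text{hi}})$, so the goal is to establish $\gamma_{\text{hi}}>\phi(\gamma_{\text{hi}})=\mathfrak{N}_T(\gamma_{\text{hi}})/\mathfrak{D}_T(\gamma_{\text{hi}})$, equivalently $\gamma_{\text{hi}}\,\mathfrak{D}_T(\gamma_{\text{hi}})>\mathfrak{N}_T(\gamma_{\text{hi}})$. I will bound the numerator from above by $\Delta^*$ and the denominator from below by $\|\nabla f(x^0)\|_1$; plugging in the hypothesis $\gamma_{\text{hi}}\geqslant\gamma_{\text{max}}=\Delta^*/\|\nabla f(x^0)\|_1$ then closes the loop.

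The numerator half is immediate from Assumption \ref{as:func_optimum}: since $\min_{-1\leqslant t\leqslant T}f(x^t)\geqslant f(x^*)$, we have $\mathfrak{N}_T=\widetilde{\Delta}_T\leqslant f(x^{-1})-f(x^*)=\Delta^*$. The core obstacle is the denominator lower bound. Recall $\mathfrak{D}_T=\sum_{t=0}^{T-1}\|\nabla f(x^{t+1})-\nabla f(x^t)\|_1+\zeta$, where $\zeta=\min_{0\leqslant t\leqslant T}\|\nabla f(x^t)\|_1$ is attained at some index $t^*$. I split on $t^*$. If $t^*=0$, then $\zeta=\|\nabla f(x^0)\|_1$, and non-negativity of the sum gives $\mathfrak{D}_T\geqslant\|\nabla f(x^0)\|_1$ trivially. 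Otherwise, a telescoping triangle inequality followed by the reverse triangle inequality gives
\begin{align*}
    \sum_{t=0}^{t^{*}-1}\|\nabla f(x^{t+1})-\nabla f(x^t)\|_1\;\geqslant\;\|\nabla f(x^{t^{*}})-\nabla f(x^0)\|_1\;\geqslant\;\|\nabla f(x^0)\|_1-\zeta,
\end{align*}
and adding $\zeta$ together with the non-negative tail $\sum_{t=t^*}^{T-1}\|\nabla f(x^{t+1})-\nabla f(x^t)\|_1$ yields $\mathfrak{D}_T\geqslant\|\nabla f(x^0)\|_1$ in this case as well.

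Chaining the two estimates with the hypothesis produces
$\gamma_{\text{hi}}\,\mathfrak{D}_T(\gamma_{\text{hi}})\geqslant\gamma_{\text{max}}\|\nabla f(x^0)\|_1=\Delta^*\geqslant\mathfrak{N}_T(\gamma_{\text{hi}})$, i.e.\ $\gamma_{\text{hi}}\geqslant\phi(\gamma_{\text{hi}})$. The strict inequality needed to avoid Line \ref{alg:bisection_line3} follows either by taking $\gamma_{\text{hi}}$ strictly above $\gamma_{\text{max}}$, or from strictness in the reverse triangle step (the degenerate case $\nabla f(x^0)=0$ means $x^0$ is already optimal, a situation handled separately by the $\tau$-selection step \eqref{eq:bisection_first_step} discussed in Lemma \ref{lemma_first_step}). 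For the stochastic and distributed variants, the same blueprint carries through with $\nabla f(x^0)$ replaced by $g(x^0)$ or by the per-node terms $g_j(x^0)$: Assumption \ref{as:stochastic_smoothness} keeps the telescoping sum well defined in the stochastic case, and in the distributed case the per-node argument is averaged over $j=1,\dots,M$ to deliver the lower bound $\tfrac{1}{M}\sum_{j=1}^M\|g_j(x^0)\|_1$. I expect the main obstacle to be the careful bookkeeping around $\zeta$ and the case split on $t^*$, since it is precisely this interplay that upgrades a naive bound (which would leave only $\zeta$ in the denominator) into the tight estimate $\mathfrak{D}_T\geqslant\|\nabla f(x^0)\|_1$ required to match $\gamma_{\text{max}}$.
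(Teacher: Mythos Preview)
Your proposal is correct and follows essentially the same route as the paper: bound the numerator by $\Delta^*$ via Assumption~\ref{as:func_optimum}, bound the denominator below by $\|\text{Grad}(f(x^0))\|_1$ via a telescoping triangle inequality up to the index $t^*$ realizing $\zeta$ (with the same case split on $t^*=0$), and conclude. The only cosmetic difference is that the paper phrases it as a proof by contradiction while you argue directly, and you are slightly more careful about the strict-versus-nonstrict boundary case.
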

\begin{proof}
    We can entry the \textsc{Bisection} procedure, when $\gamma_{\text{hi}} \geqslant \phi(\gamma_{\text{hi}})$. Thus, to proof the lemma statement we can show that $\gamma_{\text{hi}} < \phi(\gamma_{\text{hi}})$ is impossible, when $\gamma_{\text{hi}} \geqslant \gamma_{\text{max}} = \frac{\Delta^*}{\|\text{Grad}(f(x^0))\|_1}$. Using $\widetilde{\Delta}_T = f(x^{-1}) - \underset{-1\leqslant t\leqslant T}{\min}~ f(x^t)$ notation, we consider
    \begin{eqnarray}
    \label{lbe:ineq1}
        \frac{\widetilde{\Delta}_T(\gamma_{\text{hi}})}{\mathfrak{D}_T(\gamma_{\text{hi}})} = \frac{\mathfrak{N}_T(\gamma_{\text{hi}})}{\mathfrak{D}_T(\gamma_{\text{hi}})} = \phi(\gamma_{\text{hi}}) > \gamma_{\text{hi}} \geqslant \gamma_{\text{max}} = \frac{\Delta^*}{\|\text{Grad}(f(x^0))\|_1}.
    \end{eqnarray}
    Let us look at the numerators of the fractions in the obtained inequality. According to Assumption \ref{as:func_optimum}, $f(x^*) \leqslant \underset{-1\leqslant t\leqslant T}{\min} f(x^t)$. In that way,
    \begin{eqnarray}
        \label{lbe:ineq2}
        \widetilde{\Delta}_T(\gamma_{\text{hi}}) \leqslant \Delta^*.
    \end{eqnarray}
    Now we consider denominators in \eqref{lbe:ineq1}. $\mathfrak{D}_T(\gamma_{\text{hi}})$ has the following form in any setting: $\sum\limits_{t=0}^{T-1} \|\text{Grad}(f(x^{t+1}(\gamma_{\text{hi}})) - \text{Grad}(f(x^t(\gamma_{\text{hi}}))\|_1 + \zeta(\gamma_{\text{hi}})$, where $\zeta(\gamma)$ is defined as the minimum of gradients norm over the training: $\zeta(\gamma) = \underset{0\leqslant t\leqslant T}{\min}\|\text{Grad}(f(x^t(\gamma_{\text{hi}}))\|_1$. Using \eqref{bi:CS}, we obtain
    \begin{eqnarray}
        \notag\|\text{Grad}(f(x^0))\|_1 &\overset{(i)}{\leqslant }&\sum\limits_{t=0}^{\overline{t}-1}\|\text{Grad}(f(x^{t+1}(\gamma_{\text{hi}})) - \text{Grad}(f(x^t(\gamma_{\text{hi}}))\|_1 + \|\text{Grad}(f(x^{\overline{t}}(\gamma_{\text{hi}}))\|_1\\
        \notag&\leqslant& \sum\limits_{t=0}^{T-1}\|\text{Grad}(f(x^{t+1}(\gamma_{\text{hi}})) - \text{Grad}(f(x^t(\gamma_{\text{hi}}))\|_1 + \|\text{Grad}(f(x^{\overline{t}}(\gamma_{\text{hi}}))\|_1\\
        \notag&\overset{(ii)}{=}& \sum\limits_{t=0}^{T-1}\|\text{Grad}(f(x^{t+1}(\gamma_{\text{hi}})) - \text{Grad}(f(x^t(\gamma_{\text{hi}}))\|_1 \\
        \notag& &+ \underset{0\leqslant t\leqslant T}{\min}\|\text{Grad}(f(x^t(\gamma_{\text{hi}}))\|_1\\
        \notag&\overset{(iii)}{=}& \sum\limits_{t=0}^{T-1}\|\text{Grad}(f(x^{t+1}(\gamma_{\text{hi}})) - \text{Grad}(f(x^t(\gamma_{\text{hi}}))\|_1 + \zeta(\gamma_{\text{hi}}) \\
        \label{lbe:ineq3}& & = \mathfrak{D}_T (\gamma_{\text{hi}})),
    \end{eqnarray}
    where inequality (\textit{i}) holds for any $1 \leqslant \overline{t} \leqslant T$ and in (\textit{ii}) we choose $\overline{t} = \arg\underset{0\leqslant t\leqslant T}{\min}~\|\text{Grad}(f(x^t(\gamma_{\text{hi}}))\|_1$. One can note that we omit the case when the norm of the oracle reaches its minimum at iteration $t = 0$ in $\zeta$ definition, when use it in (\textit{iii}). However, it is a trivial case and it satisfies
    \begin{align*}
        \|\text{Grad}(f(x^0))\|_1 \leqslant \zeta(\gamma_{\text{hi}}) \leqslant\sum\limits_{t=0}^{T-1}\|\text{Grad}(f(x^{t+1}(\gamma_{\text{hi}})) - \text{Grad}(f(x^t(\gamma_{\text{hi}}))\|_1 + \zeta(\gamma_{\text{hi}}) = \mathfrak{D}_T (\gamma_{\text{hi}}).
    \end{align*}
    In that way, combining it with \eqref{lbe:ineq3} and \eqref{lbe:ineq2}, we obtain
    \begin{align*}
        \frac{\widetilde{\Delta}_T(\gamma_{\text{hi}})}{\mathfrak{D}_T(\gamma_{\text{hi}})} \leqslant \frac{\Delta^*}{\|\text{Grad}(f(x^0))\|_1},
    \end{align*}
    which contradicts to \eqref{lbe:ineq1}. Thus, we can entry the Algorithm \ref{alg:bisection_procedure} without infinite early termination if take initial $\gamma_{\text{hi}}$ at least $\frac{\Delta^*}{\|\text{Grad}(f(x^0))\|_1}$. Note that for the distributed case we can obtain $\frac{1}{M}\sum\limits_{j=1}^M \left\|\text{Grad}\left(f(x^0)\right)\right\|_1 \leqslant \mathfrak{D}_T(\gamma_{\text{hi}})$ in the same way as in \eqref{lbe:ineq3}.
\end{proof}

\begin{lemma}[Bisection invariants]\label{lemma_bisection_invariants}
    If The \textsc{Bisection} procedure (Algorithm \ref{alg:bisection_procedure}) has no early termination at all, it returns $\gamma_0$ such that
    \begin{align}
    \label{lbi:statement1}
        \frac{\mathfrak{N}_T(\gamma_0)}{2\mathfrak{D}_T(\gamma_{\text{hi}}^*)} \leqslant \gamma_0 \leqslant \frac{\mathfrak{N}_T(\gamma_{\text{lo}}^*)}{\mathfrak{D}_T(\gamma_0)},
    \end{align}
    where $\gamma_{\text{lo}}^*$ and $\gamma_{\text{hi}}^*$ are values, from which $\gamma_0$ is chosen in the end of Algorithm \ref{alg:bisection_procedure}. Moreover,
    \begin{align}
        \label{lbi:statement2} \mathfrak{N}_T(\gamma_0) &\leqslant \mathfrak{N}_T(\gamma_{\text{lo}}^*),\\
        \label{lbi:statement3} \mathfrak{D}_T(\gamma_0) &\leqslant \mathfrak{D}_T(\gamma_{\text{hi}}^*).
    \end{align}
    \begin{proof}
        Consider the case procedure returns $\gamma_0 = \gamma_{\text{lo}}^*$. Then
        \begin{eqnarray}
            \notag\frac{\mathfrak{N}_T(\gamma_{\text{lo}}^*)}{2\mathfrak{D}_T(\gamma_{\text{hi}}^*)} = \frac{\mathfrak{N}_T(\gamma_{\text{lo}}^*)}{2\mathfrak{N}_T(\gamma_{\text{hi}}^*)}\cdot\frac{\mathfrak{N}_T(\gamma_{\text{hi}}^*)}{\mathfrak{D}_T(\gamma_{\text{hi}}^*)} = \frac{\mathfrak{N}_T(\gamma_{\text{lo}}^*)}{2\mathfrak{N}_T(\gamma_{\text{hi}}^*)} \phi(\gamma_{\text{hi}}^*) &\overset{(i)}{\leqslant}& \frac{1}{2}\gamma_{\text{hi}}^*\overset{(ii)}{\leqslant}\gamma_{\text{lo}}^*\\
            \label{lbi:ineq1}&\overset{(iii)}{\leqslant}&\phi(\gamma_{\text{lo}}^*) = \frac{\mathfrak{N}_T(\gamma_{\text{lo}}^*)}{\mathfrak{D}_T(\gamma_{\text{lo}}^*)},
        \end{eqnarray}
        where (\textit{i}) is correct due to the $\gamma_{\text{lo}}$ return condition, (\textit{ii}) -- bisection stop condition, (\textit{iii}) -- bisection invariant.
        Consider the case when procedure returns $\gamma_0 = \gamma_{\text{hi}}^*$. Then
        \begin{align}
            \label{lbi:ineq2}
            \frac{\mathfrak{N}_T(\gamma_{\text{hi}}^*)}{2\mathfrak{D}_T(\gamma_{\text{hi}}^*)} = \frac{1}{2}\phi(\gamma_{\text{hi}}^*)\overset{(i)}{\leqslant} \frac{1}{2} \gamma_{\text{hi}}^* \leqslant \gamma_{\text{hi}}^* \overset{(ii)}{\leqslant} \frac{\mathfrak{N}_T(\gamma_{\text{lo}}^*)}{\mathfrak{D}_T(\gamma_{\text{hi}}^*)},
        \end{align}
        where (\textit{i}) is correct due to the bisection invariant and (\textit{ii}) -- $\gamma_{\text{hi}}$ the return condition. Combining \eqref{lbi:ineq1} with \eqref{lbi:ineq2}, we obtain the first claim of the lemma whether Algorithm \ref{alg:bisection_procedure} returns $\gamma_0 = \gamma_{\text{lo}}^*$ or $\gamma_0 = \gamma_{\text{hi}}^*$.
        It remains to notice that \eqref{lbi:ineq1} is followed by $\mathfrak{D}_T(\gamma_{\text{lo}}^*) \leqslant \mathfrak{D}_T(\gamma_{\text{hi}}^*)$ when $\gamma_0 = \gamma_{\text{lo}}^*$, and, consequently, $\mathfrak{D}_T(\gamma_0) \leqslant \mathfrak{D}_T(\gamma_{\text{hi}}^*)$ since  $\mathfrak{D}_T(\gamma_{\text{hi}}^*)\leqslant \mathfrak{D}_T(\gamma_{\text{hi}}^*)$ is trivial. Analogically, \eqref{lbi:ineq2} is followed by $\mathfrak{N}_T(\gamma_{\text{hi}}^*) \leqslant \mathfrak{N}_T(\gamma_{\text{lo}}^*)$ when $\gamma_0 = \gamma_{\text{hi}}^*$, and, consequently, $\mathfrak{N}_T(\gamma_0) \leqslant \mathfrak{N}_T(\gamma_{\text{lo}}^*)$. This finishes the proof.
    \end{proof}
\end{lemma}

\begin{lemma}[Extra step]\label{lemma_first_step}
    Suppose Assumptions \ref{as:smoothness}, \ref{as:convexity}, \ref{as:func_optimum} hold. Then, considering update of the following form:
    \begin{eqnarray*}
        f(x^0) = \min\left\{f(x^{-1} + \tau e), f(x^{-1} - \tau e)\right\},
    \end{eqnarray*}
    where $e$ is the random vector from the unit basis, and we can guarantee $f(x^0) < f(x^{-1})$, when $\tau < \frac{\left\|\nabla f(x^{-1})\right\|_1}{L_\infty}$. Moreover, Algorithm \ref{alg:bisection_procedure}, starting with $\tau = \tau_s$ and performing $\tau = \frac{\tau}{2}$, needs at least $\log\left(\frac{\tau_s L_\infty}{\left\|\nabla f(x^{-1})\right\|_1}\right)$ extra iterations to find efficient value of $\tau$.
    \begin{proof}
        We choose $f(x^0) = \min\left\{f(x^{-1} + \tau e), f(x^{-1} - \tau e)\right\}$. We use convexity to show
        \begin{eqnarray*}
            f(x^{-1} + \tau e) &\leqslant& f(x^{-1}) + \left\langle \nabla f(x^{-1} + \tau e), \tau e\right\rangle \\
            &=& f(x^{-1}) + \tau\left\langle \nabla f(x^{-1}), e\right\rangle + \tau\left\langle \nabla f(x^{-1} + \tau e) - \nabla f(x^{-1}), e\right\rangle\\
            &\overset{\ref{bi:CS_conj}}{\leqslant}& f(x^{-1}) + \tau\left\langle \nabla f(x^{-1}), e\right\rangle + \tau\left\|\nabla f(x^{-1} + \tau e) - \nabla f(x^{-1})\right\|_1 \|e\|_\infty\\
            &\overset{\ref{bi:Lip}}{\leqslant}&  f(x^{-1}) + \tau\left\langle \nabla f(x^{-1}), e\right\rangle + \tau^2 L_\infty \|e\|_\infty^2,\\
            f(x^{-1} - \tau e) &\leqslant& f(x^{-1}) - \left\langle \nabla f(x^{-1} - \tau e), \tau e\right\rangle \\
            &=& f(x^{-1}) - \tau\left\langle \nabla f(x^{-1}), e\right\rangle - \tau\left\langle \nabla f(x^{-1} - \tau e) - \nabla f(x^{-1}), e\right\rangle\\
            &\overset{\ref{bi:CS_conj}}{\leqslant}& f(x^{-1}) - \tau\left\langle \nabla f(x^{-1}), e\right\rangle + \tau\left\|\nabla f(x^{-1} - \tau e) - \nabla f(x^{-1})\right\|_1 \|e\|_\infty\\
            &\overset{\ref{bi:Lip}}{\leqslant}&  f(x^{-1}) - \tau\left\langle \nabla f(x^{-1}), e\right\rangle + \tau^2 L_\infty \|e\|_\infty^2.
        \end{eqnarray*}
        Utilizing $e = \text{sign}\left(\nabla f(x^{-1})\right)$, we take expectation and obtain
        \begin{eqnarray*}
            f(x^0) &\leqslant& f(x^{-1}) - \tau\left|\left\langle\nabla f(x^{-1}), e\right\rangle\right| + \tau^2 L_\infty \left\|e\right\|_\infty^2\\
            &=& f(x^{-1}) - \tau \left|\sum\limits_{i=1}^d\left[\left|\nabla f(x^{-1})\right|\right]_i\right| + \tau^2 L_\infty \left\|\text{sign}\left(\nabla f(x^{-1})\right)\right\|_\infty^2\\
            &\leqslant& f(x^{-1}) - \tau\left\|\nabla f(x^{-1})\right\|_1 + \tau^2 L_\infty\\
            &=& f(x^{-1}) - \tau\left(\left\|\nabla f(x^{-1})\right\|_1 - \tau L_\infty\right).
        \end{eqnarray*}
        In that way, if we have $\tau < \frac{\left\|\nabla f(x^{-1})\right\|_1}{L_\infty}$, we derive
        \begin{eqnarray*}
            f(x^0) < f(x^{-1}).
        \end{eqnarray*}
        Since in the algorithm we start with $\tau = \tau_s$ and divide it by 2, after $l$ divisions, we have
        \begin{eqnarray*}
            \frac{\tau_s}{2^l} < \frac{\left\|\nabla f(x^{-1})\right\|_1}{L_\infty}.
        \end{eqnarray*}
        Thus, we need at least $l = \log \left(\frac{\tau_s L_\infty}{\left\|\nabla f(x^{-1})\right\|_1}\right)$ iterations.
    \end{proof}
\end{lemma}

\section{Main Proofs and Details for \textsc{SOS Sign-SGD}}\label{section:bisection_proofs}

\subsection{Exact gradient setting}\label{section:exact_gradient_bisection_proofs}

\begin{lemma}[Descent lemma]\label{descent_lemma_exact_gradient}
     For Algorithm \ref{alg:sign-sgd_bisection} under Assumptions \ref{as:smoothness}, \ref{as:convexity}, \ref{as:func_optimum}, \ref{as:go_exact_gradient}, the following estimate is valid:
    \begin{eqnarray*}
            \notag\sum\limits_{t=0}^{T-1} \|\nabla f(x^t)\|_1 &\leqslant& \frac{f(x^{-1}) - f(x^{T})}{\gamma_0} + \sum\limits_{t=0}^{T-1} \|\nabla f(x^{t+1}) - \nabla f(x^t)\|_1.
        \end{eqnarray*}
    \begin{proof}
        Starting from the convexity of the objective,
        \begin{eqnarray}
            \notag f(x^{t+1}) - f(x^t) &\leqslant& \langle\nabla f(x^{t+1}), x^{t+1} - x^t\rangle = -\gamma^t\left\langle \nabla f(x^{t+1}), \text{sign}\left(\nabla f(x^t)\right)\right\rangle\\
            \notag &=& -\gamma^t\left\langle \nabla f(x^{t}), \text{sign}\left(\nabla f(x^t)\right)\right\rangle \\
            \notag& & - \gamma^t\left\langle \nabla f(x^{t+1}) - \nabla f(x^{t}), \text{sign}\left(\nabla f(x^t)\right)\right\rangle\\
            \notag & \overset{\ref{bi:CS_conj}}{\leqslant}& -\gamma^t\|\nabla f(x^t)\|_1 + \gamma^t\|\nabla f(x^{t+1}) - \nabla f(x^{t})\|_1 \|\text{sign}\left(\nabla f(x^t)\right)\|_{\infty}\\
            \notag &\leqslant& -\gamma^t\|\nabla f(x^t)\|_1 + \gamma^t\|\nabla f(x^{t+1}) - \nabla f(x^{t})\|_1.
        \end{eqnarray}
        Now we express the gradient norm and sum over all iterations to obtain
        \begin{eqnarray*}
            \sum\limits_{t=0}^{T-1} \gamma^t \|\nabla f(x^t)\|_1 & \leqslant& \sum\limits_{t=0}^{T-1}\left[f(x^t) - f(x^{t+1})\right] + \sum\limits_{t=0}^{T-1} \gamma^t\|\nabla f(x^{t+1}) - \nabla f(x^t)\|_1\\
            &=& f(x^0) - f(x^T) + \sum\limits_{t=0}^{T-1} \gamma^t\|\nabla f(x^{t+1}) - \nabla f(x^t)\|_1.
        \end{eqnarray*}
        Using Lemma \ref{lemma_first_step} to consider the extra step, we get
        \begin{eqnarray*}
            \sum\limits_{t=0}^{T-1} \gamma^t \|\nabla f(x^t)\|_1 & \leqslant& f(x^{-1}) - f(x^T) + \sum\limits_{t=0}^{T-1} \gamma^t\|\nabla f(x^{t+1}) - \nabla f(x^t)\|_1.
        \end{eqnarray*}
        Since Algorithm \ref{alg:sign-sgd_bisection} performs all the steps with the constant rate $\gamma_0$ which we define later, we can rewrite the result in the following form:
         \begin{eqnarray*}
            \notag\sum\limits_{t=0}^{T-1} \|\nabla f(x^t)\|_1 &\leqslant& \frac{f(x^{-1}) - f(x^{T})}{\gamma_0} + \sum\limits_{t=0}^{T-1} \|\nabla f(x^{t+1}) - \nabla f(x^t)\|_1,
        \end{eqnarray*}
        which ends the proof of the lemma.
    \end{proof}
\end{lemma}

\begin{theorem}[\textbf{Theorem \ref{th:exact_gradient_bisection_main}}]\label{th:exact_gradient_bisection_appendix}
    Suppose Assumptions \ref{as:smoothness}, \ref{as:convexity}, \ref{as:func_optimum}, \ref{as:go_exact_gradient} hold. Then for Algorithm \ref{alg:sign-sgd_bisection} after obtaining the stepsize $\gamma_0$, the following estimate is valid:
    \begin{align*}
            \frac{1}{T}\sum\limits_{t=0}^{T-1} \|\nabla f(x^t)\|_1 \leqslant 6\frac{\sqrt{\Delta^* L_\infty}}{\sqrt{T}} + \frac{3\left\|\nabla f(x^0)\right\|_1}{T}.
        \end{align*}
    Moreover, taking into account the complexity of Algorithm \ref{alg:bisection_procedure} in relation to the initial stepsize bound $\gamma_s$, to reach $\varepsilon$-accuracy, where $\varepsilon \geqslant \frac{1}{T}\sum\limits_{t=0}^{T-1} \|\nabla f(x^t)\|_1$, Algorithm \ref{alg:sign-sgd_bisection} needs
    \begin{align*}
        \mathcal{O}\left(\frac{\Delta^* L_{\infty}}{\varepsilon^2}\log\log\frac{\Delta^*}{\gamma_s\|\nabla f(x^0)\|_1}\right) ~~\text{iterations.}
    \end{align*}
    \begin{proof}
        Let us start with the result of Lemma \ref{descent_lemma_exact_gradient}:
        \begin{eqnarray}
            \notag\sum\limits_{t=0}^{T-1} \|\nabla f(x^t)\|_1 &\leqslant& \frac{f(x^{-1}) - f(x^{T})}{\gamma_0} + \sum\limits_{t=0}^{T-1} \|\nabla f(x^{t+1}) - \nabla f(x^t)\|_1\\
            \label{thegb:ineq1} &\leqslant& \frac{\widetilde{\Delta}_T}{\gamma_0} + \sum\limits_{t=0}^{T-1} \|\nabla f(x^{t+1}) - \nabla f(x^t)\|_1,
        \end{eqnarray}
        where $\widetilde{\Delta}_T = f(x^{-1}) - \underset{-1\leqslant t \leqslant T}{\min}~ f(x^t)$. Now, we accurately estimate the last term in \eqref{thegb:ineq1}, which is additionally denoted as $F_T = \sum\limits_{t=0}^{T-1} \|\nabla f(x^{t+1}) - \nabla f(x^t)\|_1$. Thus,
        \begin{eqnarray}
            F\notag_T &=& \sum\limits_{t=0}^{T-1} \|\nabla f(x^{t+1}) - \nabla f(x^t)\|_1 \overset{\ref{bi:Lip}}{\leqslant} L_{\infty} \sum\limits_{t=0}^{T-1} \|x^{t+1} - x^t\|_{\infty}\\
            \label{thegb:ineq2}&=& L_{\infty} \sum\limits_{t=0}^{T-1} \gamma^t\|\text{sign}\left(\nabla f(x^t)\right)\|_{\infty}\leqslant L_{\infty} \sum\limits_{t=0}^{T-1} \gamma^t.
        \end{eqnarray}
        Now let us choose $\phi(\gamma)$, which we push into the \textsc{Bisection} procedure (Algorithm \ref{alg:bisection_procedure}): $\phi(\gamma) = \frac{\mathfrak{N}(\gamma)}{\mathfrak{D}(\gamma)} = \frac{\widetilde{\Delta}_T(\gamma)}{F_T(\gamma) + \zeta(\gamma)}$, where $\widetilde{\Delta}_T = f(x^{-1}) - \underset{-1\leqslant t \leqslant T}{\min}~ f(x^t)$ and $\zeta = \underset{0\leqslant t\leqslant T}{\min}\left\|\nabla f(x^t)\right\|_1$. In that way, we obtain some $\gamma_0$, which can be equal to $\gamma_{\text{lo}}^*$ or $\gamma_{\text{hi}}^*$ (see Lemma \ref{lemma_bisection_entry}, Lemma \ref{lemma_bisection_invariants}) and use it as a constant stepsize for our method. Thus, \eqref{thegb:ineq2} transforms into
        \begin{align}
        \label{thegb:ineq3}
            F_T(\gamma_0) \leqslant \gamma_0 L_{\infty}T.
        \end{align}
        Mention that, according to Lemma \ref{lemma_bisection_entry}, we can always entry to the procedure without infinite early termination. In that way, we have two situations: when we have no early terminations at all and we are under the activity of Lemma \ref{lemma_bisection_invariants}, and when we have early termination with initial $\gamma_{\text{lo}}^*$. We divide the following proof into two steps, where we separately show the convergence guarantees in this two situations. \\
        \textbf{Step 1: no early terminations.}\\
        Since we have only two cases: $\gamma_0 = \gamma_{\text{lo}}^*$ or $\gamma_0 = \gamma_{\text{hi}}^*$, let us consider them separately.\\
        $\bullet \quad\gamma_0 = \gamma_{\text{hi}}^*:$ \eqref{thegb:ineq3} transforms into
        \begin{align*}      
            F_T(\gamma_{\text{hi}}^*) \leqslant \gamma_{\text{hi}}^* L_{\infty}T \overset{\text{Lemma~}\ref{lemma_bisection_invariants},  \ref{lbi:statement1}}{\leqslant} \frac{\mathfrak{N}_T(\gamma_{\text{lo}}^*)}{\mathfrak{D}_T(\gamma_{\text{hi}}^*)}L_{\infty}T \overset{(i)}{=} \frac{\widetilde{\Delta}_T(\gamma_{\text{lo}}^*)}{F_T(\gamma_{\text{hi}}^*) + \zeta(\gamma_{\text{hi}}^*)}L_{\infty}T,
        \end{align*}
        where (\textit{i}) is correct due to the $\phi(\gamma)$ choice. Solving this quadratic inequality with respect to $F_T(\gamma_{\text{hi}}^*)$ (Lemma \ref{lemma_quadratic_inequality}), we obtain
        \begin{align}
            \label{thegb:ineq4}  
            F_T(\gamma_{\text{hi}}^*) \leqslant \sqrt{\widetilde{\Delta}_T(\gamma_{\text{lo}}^*) L_{\infty}T} \leqslant \sqrt{\Delta^*L_{\infty}T},
        \end{align}
        where $\Delta^* = f(x^{-1}) - f(x^*)$. Plugging it into \eqref{thegb:ineq1}, we obtain
        \begin{eqnarray}
            \notag\frac{1}{T}\sum\limits_{t=0}^{T-1} \|\nabla f(x^t)\|_1 &\leqslant&  \frac{1}{T}\frac{\widetilde{\Delta}_T(\gamma_{\text{hi}}^*)}{\gamma_{\text{hi}}^*} + \frac{1}{T} F_T(\gamma_{\text{hi}}^*) \\
            \notag&\overset{\text{Lemma~}\ref{lemma_bisection_invariants},  \ref{lbi:statement1}}{\leqslant}& \frac{1}{T}\frac{2\mathfrak{D}_T(\gamma_{\text{hi}}^*)}{\mathfrak{N}_T(\gamma_{\text{hi}}^*)}\widetilde{\Delta}_T(\gamma_{\text{hi}}^*) + \frac{1}{T} F_T(\gamma_{\text{hi}}^*) \\
            \notag&=& \frac{2}{T}\frac{\left[F_T(\gamma_{\text{hi}}^*) + \zeta(\gamma_{\text{hi}}^*)\right]\widetilde{\Delta}_T(\gamma_{\text{hi}}^*)}{\widetilde{\Delta}_T(\gamma_{\text{hi}}^*)} + \frac{1}{T}F_T(\gamma_{\text{hi}}^*)\\
            \notag&=& \frac{3}{T} F_T(\gamma_{\text{hi}}^*) + \frac{2\zeta(\gamma_{\text{hi}}^*)}{T}\\
            \label{thegb:ineq5} &\overset{\ref{thegb:ineq4}}{\leqslant}& 3\frac{\sqrt{\Delta^* L_{\infty}}}{\sqrt{T}} + \frac{2\left\|\nabla f(x^0)\right\|_1}{T}.
        \end{eqnarray}
        In that way, \eqref{thegb:ineq5} is the final estimate when \textsc{Bisection} procedure returns $\gamma_{\text{hi}}^*$.\\
        $\bullet \quad\gamma_0 = \gamma_{\text{lo}}^*:$ \eqref{thegb:ineq3} transforms into
        \begin{align*}      
            F_T(\gamma_{\text{lo}}^*) \leqslant \gamma_{\text{lo}}^* L_{\infty}T \overset{\text{Lemma~}\ref{lemma_bisection_invariants}, \ref{lbi:statement1}}{\leqslant} \frac{\mathfrak{N}_T(\gamma_{\text{lo}}^*)}{\mathfrak{D}_T(\gamma_{\text{lo}}^*)}L_{\infty}T \overset{(i)}{=} \frac{\widetilde{\Delta}_T(\gamma_{\text{lo}}^*)}{F_T(\gamma_{\text{lo}}^*) + \zeta(\gamma_{\text{lo}}^*)}L_{\infty}T ,
        \end{align*}
        where (\textit{i}) is correct due to the $\phi(\gamma)$ choice. Solving this quadratic inequality with respect to $F_T(\gamma_{\text{lo}}^*)$ (Lemma \ref{lemma_quadratic_inequality}), we obtain
        \begin{align}
            \label{thegb:ineq6}  
            F_T(\gamma_{\text{lo}}^*) \leqslant \sqrt{\widetilde{\Delta}_T(\gamma_{\text{lo}}^*)L_{\infty}T} \leqslant \sqrt{\Delta^* L_{\infty}T}.
        \end{align}
        Now we make an additional distinction and consider the estimates separately: one case when $\gamma_{\text{lo}}^* > \sqrt{\frac{\Delta^*}{L_{\infty}T}}$, and another when $\gamma_{\text{lo}}^* \leqslant \sqrt{\frac{\Delta^*}{L_{\infty}T}}$. We can do this without any limitations, since combining the intervals considered for $\gamma_{\text{lo}}^*$ returns all possible values.\\
        $\circ \quad \gamma_{\text{lo}}^* > \sqrt{\frac{\Delta^*}{L_{\infty}T}}:$ we straightforwardly move to the \eqref{thegb:ineq1} estimation:
        \begin{eqnarray}
            \notag\frac{1}{T}\sum\limits_{t=0}^{T-1} \|\nabla f(x^t)\|_1 &\leqslant& \frac{1}{T}\frac{\widetilde{\Delta}_T(\gamma_{\text{lo}}^*)}{\gamma_{\text{lo}}^*} + \frac{1}{T} F_T(\gamma_{\text{lo}}^*) \\
            \notag&\leqslant& \frac{\sqrt{L_{\infty}}}{\sqrt{\Delta^* T}}\widetilde{\Delta}_T(\gamma_{\text{lo}}^*) + \frac{1}{T}F_T(\gamma_{\text{lo}}^*)\\
            \label{thegb:ineq8}&\overset{\ref{thegb:ineq6}}{\leqslant}& \frac{\sqrt{\Delta^* L_\infty}}{\sqrt{T}} + \frac{\sqrt{\Delta^* L_\infty}}{\sqrt{T}} = 2\frac{\sqrt{\Delta^* L_\infty}}{\sqrt{T}}.
        \end{eqnarray}
        $\circ \quad \gamma_{\text{lo}}^* \leqslant \sqrt{\frac{\Delta^*}{L_{\infty}T}}:$ in this case, we start from the estimate that is followed by \eqref{thegb:ineq3}:
        \begin{align}
        \label{thegb:ineq9}
            F_T(\gamma_{\text{hi}}^*) \leqslant \gamma_{\text{hi}}^*L_\infty T \overset{(i)}{\leqslant} 2\gamma_{\text{lo}}^*L_\infty T\leqslant 2\sqrt{L_\infty  \Delta^* T},
        \end{align}
        where (\textit{i}) is done due to the bisection stop condition. Now we proceed with estimation of \eqref{thegb:ineq1}:
        \begin{eqnarray}
            \notag\frac{1}{T}\sum\limits_{t=0}^{T-1} \|\nabla f(x^t)\|_1 &\leqslant& \frac{1}{T}\frac{\widetilde{\Delta}_T(\gamma_{\text{lo}}^*)}{\gamma_{\text{lo}}^*} + \frac{1}{T} F_T(\gamma_{\text{lo}}^*)\\
            \notag&\overset{\text{Lemma~}\ref{lemma_bisection_invariants},  \ref{lbi:statement1}}{\leqslant}& \frac{1}{T}\frac{2\mathfrak{D}_T(\gamma_{\text{hi}}^*)}{\mathfrak{N}_T(\gamma_{\text{lo}}^*)} \widetilde{\Delta}_T(\gamma_{\text{lo}}^*) + \frac{1}{T} F_T(\gamma_{\text{lo}}^*) \\
            \notag&\overset{\text{Lemma~}\ref{lemma_bisection_invariants},  \ref{lbi:statement3}}{\leqslant}& \frac{2}{T}\frac{\left[F_T(\gamma_{\text{hi}}^*) + \zeta(\gamma_{\text{hi}}^*)\right]\widetilde{\Delta}_T(\gamma_{\text{lo}}^*)}{\widetilde{\Delta}_T(\gamma_{\text{lo}}^*)}  + \frac{F_T(\gamma_{\text{hi}}^*) + \zeta(\gamma_{\text{hi}}^*)}{T}\\
            \notag&=& \frac{3F_T(\gamma_{\text{hi}}^*)}{T} + \frac{3\zeta(\gamma_{\text{hi}}^*)}{T} \\ \notag  &\overset{\ref{thegb:ineq9}}{\leqslant} &6\frac{\sqrt{\Delta^* L_\infty}}{\sqrt{T}} + \frac{3\zeta(\gamma_{\text{hi}}^*)}{T}\\
            \label{thegb:ineq10}&\leqslant& 6\frac{\sqrt{\Delta^* L_\infty}}{\sqrt{T}} + \frac{3\left\|\nabla f(x^0)\right\|_1}{T}.
        \end{eqnarray}
        Combining \eqref{thegb:ineq8} and \eqref{thegb:ineq10}, we get that \eqref{thegb:ineq10} is the final estimate when \textsc{Bisection} procedure returns $\gamma_{\text{lo}}^*$. \\
        In the end, \eqref{thegb:ineq5} and \eqref{thegb:ineq10} give us the estimate in the case when \textsc{Bisection} procedure does not have early terminations at all and outputs any value of $\gamma_0$:
        \begin{align}
        \label{thegb:ineq11}
            \frac{1}{T}\sum\limits_{t=0}^{T-1} \|\nabla f(x^t)\|_1 \leqslant 6\frac{\sqrt{\Delta^* L_\infty}}{\sqrt{T}} + \frac{3\left\|\nabla f(x^0)\right\|_1}{T}.
        \end{align}
        \textbf{Step 2: early termination with $\gamma_{\text{lo}}$.}\\
        Now we consider the scenario when with initial $\gamma_{\text{lo}}$, there is $\gamma_{\text{lo}} \geqslant \phi(\gamma_{\text{lo}})$ and algorithm early returns $\gamma_{\text{lo}}^*$. To dissect this, we should choose an initial $\gamma_{\text{lo}} = \gamma_{\text{lo}}^* \leqslant \frac{\Delta^*}{L_\infty T}$. Thus, \eqref{thegb:ineq3} transforms into
        \begin{align}
        \label{thegb:ineq12}
            F_T(\gamma_{\text{lo}}^*) \leqslant \gamma_{\text{lo}} L_\infty T \leqslant \sqrt{L_\infty \Delta^*T}.
        \end{align}
        In that way, \eqref{thegb:ineq1} turns into
        \begin{eqnarray}
            \notag\frac{1}{T}\sum\limits_{t=0}^{T-1} \|\nabla f(x^t)\|_1 &\leqslant& \frac{1}{T}\frac{\widetilde{\Delta}_T(\gamma_{\text{lo}}^*)}{\gamma_{\text{lo}}^*} + \frac{1}{T} F_T(\gamma_{\text{lo}}^*) \\
            \notag&\leqslant& \frac{1}{T}\frac{\widetilde{\Delta}_T(\gamma_{\text{lo}}^*)}{\phi(\gamma_{\text{lo}}^*)} + \frac{1}{T} F_T(\gamma_{\text{lo}}^*)\\
            \notag&=& \frac{1}{T}\frac{\left[F_T(\gamma_{\text{lo}}^*) + \zeta(\gamma_{\text{lo}}^*)\right]\widetilde{\Delta}_T(\gamma_{\text{lo}}^*)}{\widetilde{\Delta}_T(\gamma_{\text{lo}}^*)} + \frac{1}{T} F_T(\gamma_{\text{lo}}^*) \\
            \label{thegb:ineq13}&=& \frac{2F_T(\gamma_{\text{lo}}^*)}{T} + \frac{\zeta(\gamma_{\text{lo}}^*)}{T} \overset{\ref{thegb:ineq12}}{\leqslant} 2\frac{\sqrt{ \Delta^*L_\infty}}{\sqrt{T}} + \frac{\left\|\nabla f(x^0)\right\|_1}{T}.
        \end{eqnarray}
        Hence, \eqref{thegb:ineq13} delivers the estimate, when Algorithm \ref{alg:bisection_procedure} makes an early termination.\\
        Combining \eqref{thegb:ineq11} with \eqref{thegb:ineq13}, we finally obtain the estimate for all possible cases of the \textsc{Bisection} procedure return:
        \begin{align*}
            \frac{1}{T}\sum\limits_{t=0}^{T-1} \|\nabla f(x^t(\gamma_0))\|_1 \leqslant 6\frac{\sqrt{\Delta^* L_\infty}}{\sqrt{T}} + \frac{3\left\|\nabla f(x^0)\right\|_1}{T}.
        \end{align*}
        Expressing the number of iterations and using $\varepsilon \geqslant \frac{1}{T}\sum\limits_{t=0}^{T-1} \|\nabla f(x^t)\|_1$ as a criterion, we obtain that algorithm needs $\mathcal{O}\left(\frac{\Delta^* L_{\infty}}{\varepsilon^2}\right)$ iterations to reach $\varepsilon$-accuracy. Note that we drop the term $\frac{\left\|\nabla f(x^0)\right\|_1}{T}$, since it is asymptotically smaller than the main one. However, we firstly need to find the step $\gamma_0$ with the bisection procedure which takes $T\log\log\left(\frac{\gamma_\varepsilon 2^{2^k}}{\gamma_\varepsilon}\right) = \mathcal{O}\left(\frac{\Delta^* L_{\infty}}{\varepsilon^2}k\right)$ iterations, where $2^{2^k}$ denotes the length of the initial interval for the stepsize. We have already discussed in the main part that, according to Lemma \ref{lemma_bisection_entry}, $k$ should be at least $k = \log\log\frac{\Delta^*}{\gamma_s \|\nabla f(x^0)\|_1}$. Thus, $\mathcal{O}\left(\frac{\Delta^* L_{\infty}}{\varepsilon^2}\log\log\frac{\Delta^*}{\gamma_s \|\nabla f(x^0)\|_1}\right)$ is the final iteration complexity.
    \end{proof}
\end{theorem}

\subsection{Stochastic gradient setting}\label{section:stochastic_bisection_proofs}

Let us start with the description of the stepsize choice for stochastic version of Algorithm \ref{alg:sign-sgd_bisection}. The main purpose of the \textsc{Bisection} procedure (Algorithm \ref{alg:bisection_procedure}) is to find stepsize $\gamma$ close enough to the $\phi(\gamma)$ desired value utilizing small number of sign descent launches. Recall we establish 
\begin{align*}
    \phi(\gamma) = \frac{\widetilde{\Delta}_T(\gamma)}{\sum\nolimits_{t=0}^{T-1} \|\nabla f(x^{t+1}) - \nabla f(x^{t})\|_1 + \zeta(\gamma)}
\end{align*}
for the exact gradient case. The numerator can remain unchanged. However, since we lack access to exact gradients, we cannot use the original denominator. Instead, we employ stochastic oracles: $\mathfrak{D}_T(\gamma) = \sum\nolimits_{t=0}^{T-1} \|g(x^{t+1}) - g(x^{t})\|_1 + \zeta(\gamma)$. Other details remain the same, and we can straightforwardly pass to the convergence results.

\begin{lemma}[Descent lemma]\label{descent_lemma_stochastic_bisection}
    For Algorithm \ref{alg:sign-sgd_bisection} under Assumptions \ref{as:smoothness}, \ref{as:convexity}, \ref{as:func_optimum}, \ref{as:go_stochastic_gradient}, the following estimate is valid:
    \begin{align*}
        \sum\limits_{t=0}^{T-1} \|\nabla f(x^t)\|_1 \leqslant \frac{f(x^{-1}) - f(x^{T})}{\gamma_0} + \sum\limits_{t=0}^{T-1} \|g^{t+1} - g^t\|_1 + 3\delta^t + \delta^{t+1},
    \end{align*}
    where $\delta^t = \sum\limits_{t=0}^{T-1} \|\nabla f(x^t) - g^t\|_1$.
    \begin{proof}
        Starting from the convexity of the objective,
        \begin{eqnarray}
            \notag f(x^{t+1}) - f(x^t) &\leqslant& \langle\nabla f(x^{t+1}), x^{t+1} - x^t\rangle  = - \gamma^t\langle \nabla f(x^{t+1}), \text{sign}(g^t)\rangle\\
            \notag &=& - \gamma^t\langle g^t, \text{sign}(g^t)\rangle - \gamma^t\langle \nabla f(x^{t+1}) - g^t, \text{sign}(g^t)\rangle\\
           \notag&=& - \gamma^t\|g^t\|_1 - \gamma^t\langle \nabla f(x^t) - g^t, \text{sign}(g^t)\rangle \\
           \notag& & - \gamma^t\langle \nabla f(x^{t+1}) - \nabla f(x^t), \text{sign}(g^t)\rangle\\
           \notag&\overset{\ref{bi:CS_conj}}{\leqslant}& - \gamma^t\|\nabla f(x^t)\|_1 \\
           \notag & &  + \gamma^t\|\nabla f(x^t) - g^t\|_1 + \gamma^t \|\nabla f(x^t) - g^t\|_1\|\text{sign}\left(g^t\right)\|_\infty \\           
           \notag& &+ \gamma^t \|\nabla f(x^{t+1}) - \nabla f(x^t)\|_1\|\text{sign}\left(g^t\right)\|_\infty\\
           \notag&\leqslant& - \gamma^t\|\nabla f(x^t)\|_1 + 3\gamma^t\|\nabla f(x^t) - g^t\|_1 + \gamma^t\|\nabla f(x^{t+1}) - g^{t+1}\|_1 \\
           \notag& & + \gamma^t\|g^{t+1} - g^t\|_1.
        \end{eqnarray}
        Now we rearrange terms and summarize over all iterations to obtain
        \begin{eqnarray*}
            \sum\limits_{t=0}^{T-1} \gamma^t\|\nabla f(x^t)\|_1 &\leqslant& \sum\limits_{t=0}^{T-1} \left[f(x^t) - f(x^{t+1})\right] + \sum\limits_{t=0}^{T-1} \gamma^t\|g^{t+1} - g^t\|_1 \\
            & & + 3\sum\limits_{t=0}^{T-1} \gamma^t\|\nabla f(x^t) - g^t\|_1 + \sum\limits_{t=0}^{T-1} \gamma^t\|\nabla f(x^{t+1}) - g^{t+1}\|_1.
        \end{eqnarray*} 
        Since Algorithm \ref{alg:sign-sgd_bisection} performs all the steps with the constant rate $\gamma_0$, which we define later, we can rewrite the result in the following form:
        \begin{eqnarray*}
            \sum\limits_{t=0}^{T-1} \|\nabla f(x^t)\|_1 &\leqslant& \sum\limits_{t=0}^{T-1} \frac{\left[f(x^t) - f(x^{t+1})\right]}{\gamma_0} + \sum\limits_{t=0}^{T-1} \|g^{t+1} - g^t\|_1 \\
            & & + 3\sum\limits_{t=0}^{T-1} \|\nabla f(x^t) - g^t\|_1 + \sum\limits_{t=0}^{T-1} \|\nabla f(x^{t+1}) - g^{t+1}\|_1.
        \end{eqnarray*} 
        In the obtained estimate the last two terms consist from differences between the honest and stochastic gradient at the $t$-th and $(t+1)$-th moments. One of our goals is to estimate them, however, we want perform analogically to Theorem \ref{th:exact_gradient_bisection_appendix} and continue the proof with the $\sum\limits_{t=0}^{T-1} \|g^{t+1} - g^t\|_1$ term estimate. In order to simplify our following writing we give additional notation and denote $\delta^t = \sum\limits_{t=0}^{T-1} \|\nabla f(x^t) - g^t\|_1$. In that way, additionally considering the extra step (Lemma \ref{lemma_first_step}), we derive
        \begin{align*}
            \sum\limits_{t=0}^{T-1} \|\nabla f(x^t)\|_1 &\leqslant \frac{f(x^{-1}) - f(x^{T})}{\gamma_0} + \sum\limits_{t=0}^{T-1} \|g^{t+1} - g^t\|_1 + 3\delta^t + \delta^{t+1},
        \end{align*}      
        which ends the proof of the lemma.
    \end{proof}
\end{lemma}

\begin{theorem}\label{th:stochastic_bisection_appendix}
    Suppose Assumptions \ref{as:smoothness}, \ref{as:convexity}, \ref{as:func_optimum}, \ref{as:go_stochastic_gradient} hold. Then for Algorithm \ref{alg:sign-sgd_bisection} using at $t$-th iteration mini-batches of sizes $t+1$, after obtaining the stepsize $\gamma_0$, the following estimate is valid:
    \begin{align*}
        \frac{1}{T}\sum\limits_{t=0}^{T-1} \mathbb E\|\nabla f(x^t)\|_1 \leqslant 6\frac{\sqrt{\Delta^* L_\infty}}{\sqrt{T}} + 10 \|\sigma\|_1 + \frac{3\mathbb E\left\|g^0\right\|_1}{T}.
    \end{align*}
    Moreover, taking into account the complexity of Algorithm \ref{alg:bisection_procedure} in relation to the initial stepsize bound $\gamma_s$, to reach $\varepsilon$-accuracy, where $\varepsilon \geqslant \frac{1}{T}\sum\limits_{t=0}^{T-1} \|\nabla f(x^t)\|_1$, Algorithm \ref{alg:sign-sgd_bisection} needs
    \begin{align*}
        \mathcal{O}\left(\left(\frac{\Delta^* L_{\infty}}{\varepsilon^2} + \|\sigma\|_1^2\right)\log\log\frac{\Delta^*}{\gamma_s\|g^0\|_1}\right) ~~ \text{iterations.}
    \end{align*}
    \begin{proof}
        Let us start with the result of Lemma \ref{descent_lemma_stochastic_bisection}. We transform it due to the fact that Algorithm \ref{alg:sign-sgd_bisection} performs all the steps with the constant rate $\gamma_0$, which we define later:
        \begin{eqnarray}
            \notag \sum\limits_{t=0}^{T-1} \|\nabla f(x^t)\|_1 &\leqslant& \frac{f(x^{-1}) - f(x^{T})}{\gamma_0} + \sum\limits_{t=0}^{T-1} \|g^{t+1} - g^t\|_1 + 3\delta^t + \delta^{t+1}\\
            \label{thsb:ineq1} &\leqslant& \frac{\widetilde{\Delta}_T}{\gamma_0} + \sum\limits_{t=0}^{T-1} \|g^{t+1} - g^t\|_1 + 3\delta^t + \delta^{t+1},
        \end{eqnarray}
        where $\widetilde{\Delta}_T = f(x^{-1}) - \underset{-1\leqslant t \leqslant T}{\min}~ f(x^t)$. Now, we focus on estimating $G_T = \sum\limits_{t=0}^{T-1} \|g^{t+1} - g^t\|_1$ term in \eqref{thsb:ineq1}. Thus,
        \begin{eqnarray}
            \notag G_T = \sum\limits_{t=0}^{T-1} \|g^{t+1} - g^t\|_1 &\leqslant& \sum\limits_{t=0}^{T-1} \|\nabla f(x^{t+1}) - g^{t+1}\|_1 + \sum\limits_{t=0}^{T-1} \|\nabla f(x^{t}) - g^{t}\|_1 \\
            \notag & & + \sum\limits_{t=0}^{T-1} \|\nabla f(x^{t+1}) - \nabla f(x^{t})\|_1 \\
            \notag&\overset{\ref{bi:Lip}}{\leqslant}& \delta^t + \delta^{t+1} + L_{\infty} \sum\limits_{t=0}^{T-1} \|x^{t+1} - x^t\|_{\infty} \\
            \notag &=& \delta^t + \delta^{t+1} + L_{\infty} \sum\limits_{t=0}^{T-1} \gamma^t\|\text{sign}\left(\nabla f(x^t)\right)\|_{\infty}\\
            \label{thsb:ineq2}&\leqslant& \delta^t + \delta^{t+1} + L_{\infty} \sum\limits_{t=0}^{T-1} \gamma^t.
        \end{eqnarray}
        Now let us choose $\phi(\gamma)$, which we push to the \textsc{Bisection} procedure (Algorithm \ref{alg:bisection_procedure}): $\phi(\gamma) = \frac{\mathfrak{N}(\gamma)}{\mathfrak{D}(\gamma)} = \frac{\widetilde{\Delta}_T(\gamma)}{G_T(\gamma) + \zeta(\gamma)}$, where $\widetilde{\Delta}_T = f(x^{-1}) - \underset{-1\leqslant t \leqslant T}{\min}~ f(x^t)$ and $\zeta = \underset{0\leqslant t\leqslant T}{\min}\left\|g^t\right\|_1$. In that way, we obtain some $\gamma_0$, which can be equal to $\gamma_{\text{lo}}^*$ or $\gamma_{\text{hi}}^*$ (see Lemma \ref{lemma_bisection_entry}, Lemma \ref{lemma_bisection_invariants}) and use it as a constant stepsize for our method. Thus, \eqref{thsb:ineq2} transforms into
        \begin{align}
        \label{thsb:ineq3}
            G_T(\gamma_0) \leqslant \delta^t + \delta^{t+1} + \gamma_0 L_{\infty}T.
        \end{align}
        Mention that, according to Lemma \ref{lemma_bisection_entry}, we can always entry to the procedure without infinite early termination. In that way we have two situations: when we have no early terminations at all and we are under the activity of Lemma \ref{lemma_bisection_invariants}, and when we have an early termination with the initial $\gamma_{\text{lo}}^*$. We divide the following proof into two steps, where we separately show the convergence guarantees in these two situations. \\
        \textbf{Step 1: no early terminations.}\\
        Since we have only two cases: $\gamma_0 = \gamma_{\text{lo}}^*$ or $\gamma_0 = \gamma_{\text{hi}}^*$, let us consider them separately.\\
        $\bullet \quad\gamma_0 = \gamma_{\text{hi}}^*:$ \eqref{thsb:ineq3} transforms into
        \begin{eqnarray*}      
            G_T(\gamma_{\text{hi}}^*) &\leqslant& \delta^t + \delta^{t+1} + \gamma_{\text{hi}}^* L_{\infty}T \overset{\text{Lemma~}\ref{lemma_bisection_invariants},  \ref{lbi:statement1}}{\leqslant} \delta^t + \delta^{t+1} + \frac{\mathfrak{N}_T(\gamma_{\text{lo}}^*)}{\mathfrak{D}_T(\gamma_{\text{hi}}^*)}L_{\infty}T \\
            &\overset{(i)}{=}& \delta^t + \delta^{t+1} + \frac{\widetilde{\Delta}_T(\gamma_{\text{lo}}^*)}{G_T(\gamma_{\text{hi}}^*) + \zeta(\gamma_{\text{hi}}^*)}L_{\infty}T \leqslant \delta^t + \delta^{t+1} + \frac{\widetilde{\Delta}_T(\gamma_{\text{lo}}^*)}{G_T(\gamma_{\text{hi}}^*)}L_{\infty}T,
        \end{eqnarray*}
        where (\textit{i}) is correct due to the $\phi(\gamma)$ choice. Solving this quadratic inequality with respect to $G_T(\gamma_{\text{hi}}^*)$ (Lemma \ref{lemma_quadratic_inequality}), we obtain
        \begin{align}
            \label{thsb:ineq4}  
            G_T(\gamma_{\text{hi}}^*) \leqslant  \delta^t + \delta^{t+1} + \sqrt{\widetilde{\Delta}_T(\gamma_{\text{lo}}^*)L_{\infty}T} \leqslant \delta^t + \delta^{t+1} + \sqrt{\Delta^* L_{\infty}T},
        \end{align}
        where $\Delta^* = f(x^{-1}) - f(x^*)$. Plugging it into \eqref{thsb:ineq1}, we obtain
        \begin{eqnarray}
            \notag\frac{1}{T}\sum\limits_{t=0}^{T-1} \|\nabla f(x^t)\|_1 &\leqslant& \frac{1}{T} \frac{\widetilde{\Delta}_T(\gamma_{\text{hi}}^*)}{\gamma_{\text{hi}}^*} + \frac{1}{T} G_T(\gamma_{\text{hi}}^*) + \frac{1}{T} (3\delta^t + \delta^{t+1}) \\
            \notag&\overset{\text{Lemma~}\ref{lemma_bisection_invariants},  \ref{lbi:statement1}}{\leqslant}& \frac{1}{T}\frac{2\mathfrak{D}_T(\gamma_{\text{hi}}^*)}{\mathfrak{N}_T(\gamma_{\text{hi}}^*)}\widetilde{\Delta}_T(\gamma_{\text{hi}}^*) + \frac{1}{T} G_T(\gamma_{\text{hi}}^*) + \frac{1}{T} (3\delta^t + \delta^{t+1}) \\
            \notag&=& \frac{2}{T}\frac{\left[G_T(\gamma_{\text{hi}}^*) + \zeta(\gamma_{\text{hi}}^*)\right]\widetilde{\Delta}_T(\gamma_{\text{hi}}^*)}{\widetilde{\Delta}_T(\gamma_{\text{hi}}^*)} + \frac{1}{T}G_T(\gamma_{\text{hi}}^*) + \frac{1}{T} (3\delta^t + \delta^{t+1})\\
            \notag&=& \frac{3}{T} G_T(\gamma_{\text{hi}}^*) + \frac{1}{T} (3\delta^t + \delta^{t+1}) + \frac{2\zeta(\gamma_{\text{hi}}^*)}{T}\\
            \label{thsb:ineq5} &\overset{\ref{thsb:ineq4}}{\leqslant}& 3\frac{\sqrt{\Delta^* L_{\infty}}}{\sqrt{T}} + \frac{1}{T} (6\delta^t + 4\delta^{t+1}) + \frac{2\left\|g^0\right\|_1}{T}.
        \end{eqnarray}
        In that way, \eqref{thsb:ineq5} is the final estimate when \textsc{Bisection} procedure returns $\gamma_{\text{hi}}^*$.\\
        $\bullet \quad\gamma_0 = \gamma_{\text{lo}}^*:$ \eqref{thsb:ineq3} transforms into
        \begin{eqnarray*}      
            G_T(\gamma_{\text{lo}}^*) &\leqslant& \delta^t + \delta^{t+1} + \gamma_{\text{lo}}^* L_{\infty}T \overset{\text{Lemma~}\ref{lemma_bisection_invariants},  \ref{lbi:statement1}}{\leqslant} \delta^t + \delta^{t+1} + \frac{\mathfrak{N}_T(\gamma_{\text{lo}}^*)}{\mathfrak{D}_T(\gamma_{\text{lo}}^*)}L_{\infty}T \\
            &\overset{(i)}{=}& \delta^t + \delta^{t+1} + \frac{\widetilde{\Delta}_T(\gamma_{\text{lo}}^*)}{G_T(\gamma_{\text{lo}}^*) + \zeta(\gamma_{\text{lo}}^*)}L_{\infty}T \leqslant \delta^t + \delta^{t+1} + \frac{\widetilde{\Delta}_T(\gamma_{\text{lo}}^*)}{G_T(\gamma_{\text{lo}}^*)}L_{\infty}T,
        \end{eqnarray*}
        where (\textit{i}) is correct due to $\phi(\gamma)$ choice. Solving this quadratic inequality with respect to $G_T(\gamma_{\text{lo}}^*)$ (Lemma \ref{lemma_quadratic_inequality}), we obtain
        \begin{align}
            \label{thsb:ineq6}  
            G_T(\gamma_{\text{lo}}^*) \leqslant \delta^t + \delta^{t+1} + \sqrt{\widetilde{\Delta}_T(\gamma_{\text{lo}}^*)L_{\infty}T} \leqslant \delta^t + \delta^{t+1} + \sqrt{\Delta^* L_{\infty}T}.
        \end{align}
        Now we make an additional distinction and consider the estimates separately: one case when $\gamma_{\text{lo}}^* > \sqrt{\frac{\Delta^*}{L_{\infty}T}}$ and another when $\gamma_{\text{lo}}^* \leqslant \sqrt{\frac{\Delta^*}{L_{\infty}T}}$. We can do this without any limitations, since combining the intervals considered for $\gamma_{\text{lo}}^*$ returns all possible values.\\
        $\circ \quad \gamma_{\text{lo}}^* > \sqrt{\frac{\Delta^*}{L_{\infty}T}}:$ we straightforwardly move to the \eqref{thsb:ineq1} estimation:
        \begin{eqnarray}
            \notag\frac{1}{T}\sum\limits_{t=0}^{T-1} \|\nabla f(x^t)\|_1 &\leqslant& \frac{1}{T} \frac{\widetilde{\Delta}_T(\gamma_{\text{lo}}^*)}{\gamma_{\text{lo}}^*} + \frac{1}{T} G_T(\gamma_{\text{lo}}^*) + \frac{1}{T}(3\delta^t + \delta^{t+1}) \\
            \notag&\leqslant& \frac{\sqrt{L_{\infty}}}{\sqrt{\Delta^* T}}\widetilde{\Delta}_T(\gamma_{\text{lo}}^*) + \frac{1}{T}G_T(\gamma_{\text{lo}}^*) + \frac{1}{T}(3\delta^t + \delta^{t+1})\\
            \notag&\overset{\ref{thsb:ineq6}}{\leqslant}& \frac{\sqrt{\Delta^* L_\infty}}{\sqrt{T}} + \frac{\sqrt{\Delta^* L_\infty}}{\sqrt{T}} + \frac{1}{T}(4\delta^t + 2\delta^{t+1}) \\
            \label{thsb:ineq8} &=& 2\frac{\sqrt{\Delta^* L_\infty}}{\sqrt{T}} + \frac{1}{T}(4\delta^t + 2\delta^{t+1}).
        \end{eqnarray}
        $\circ \quad \gamma_{\text{lo}}^* \leqslant \sqrt{\frac{\Delta^*}{L_{\infty}T}}:$ in this case we start from the estimate that is followed by \eqref{thsb:ineq3}:
        \begin{align}
        \label{thsb:ineq9}
            G_T(\gamma_{\text{hi}}^*) \leqslant \delta^t + \delta^{t+1} + \gamma_{\text{hi}}^*L_\infty T \overset{(i)}{\leqslant} \delta^t + \delta^{t+1} + 2\gamma_{\text{lo}}^*L_\infty T\leqslant \delta^t + \delta^{t+1} + 2\sqrt{\Delta^* L_\infty T}
        \end{align}
        where (\textit{i}) is done due to bisection stop condition. Now we proceed to the \eqref{thsb:ineq1} estimation:
        \begin{eqnarray}
            \notag\frac{1}{T}\sum\limits_{t=0}^{T-1} \|\nabla f(x^t)\|_1 &\leqslant& \frac{1}{T}\frac{\widetilde{\Delta}_T(\gamma_{\text{lo}}^*)}{\gamma_{\text{lo}}^*} + \frac{1}{T} G_T(\gamma_{\text{lo}}^*) + \frac{1}{T}(3\delta^t + \delta^{t+1})\\
            \notag&\overset{\text{Lemma~}\ref{lemma_bisection_invariants},  \ref{lbi:statement1}}{\leqslant}& \frac{1}{T}\frac{2\mathfrak{D}_T(\gamma_{\text{hi}}^*)}{\mathfrak{N}_T(\gamma_{\text{lo}}^*)} \widetilde{\Delta}_T(\gamma_{\text{lo}}^*) + \frac{1}{T} G_T(\gamma_{\text{lo}}^*) + \frac{1}{T}(3\delta^t + \delta^{t+1}) \\
            \notag&\overset{\text{Lemma~}\ref{lemma_bisection_invariants},  \ref{lbi:statement3}}{\leqslant}& \frac{2}{T}\frac{\left[G_T(\gamma_{\text{hi}}^*) + \zeta(\gamma_{\text{hi}}^*)\right]\widetilde{\Delta}_T(\gamma_{\text{lo}}^*)}{\widetilde{\Delta}_T(\gamma_{\text{lo}}^*)} + \frac{G_T(\gamma_{\text{hi}}^*) + \zeta(\gamma_{\text{hi}}^*)}{T} \\
            \notag& & + \frac{1}{T}(3\delta^t + \delta^{t+1})\\
            \notag&=& \frac{3G_T(\gamma_{\text{hi}}^*)}{T} + \frac{1}{T}(3\delta^t + \delta^{t+1}) + \frac{3\zeta(\gamma_{\text{hi}}^*)}{T} \\
            \notag&\overset{\ref{thsb:ineq9}}{\leqslant}& 6\frac{\sqrt{\Delta^* L_\infty}}{\sqrt{T}} + \frac{1}{T}(6\delta^t + 4\delta^{t+1}) + \frac{3\zeta(\gamma_{\text{hi}}^*)}{T}\\
            \label{thsb:ineq10}&\leqslant& 6\frac{\sqrt{\Delta^* L_\infty}}{\sqrt{T}} + \frac{1}{T}(6\delta^t + 4\delta^{t+1})  + \frac{3\left\|g^0\right\|_1}{T}.
        \end{eqnarray}
        Combining \eqref{thsb:ineq8} and \eqref{thsb:ineq10}, we get that \eqref{thsb:ineq10} is the final estimate when \textsc{Bisection} procedure returns $\gamma_{\text{lo}}^*$. \\
        In the end, \eqref{thsb:ineq5} and \eqref{thsb:ineq10} give us the estimate in the case when \textsc{Bisection} procedure does not have early terminations at all and outputs any value of $\gamma_0$:
        \begin{align}
        \label{thsb:ineq11}
            \frac{1}{T}\sum\limits_{t=0}^{T-1} \|\nabla f(x^t)\|_1 \leqslant 6\frac{\sqrt{\Delta^* L_\infty}}{\sqrt{T}} + \frac{1}{T}(6\delta^t + 4\delta^{t+1}) + \frac{3\left\|g^0\right\|_1}{T}.
        \end{align}
        \textbf{Step 2: early termination with $\gamma_{\text{lo}}$.}\\
        Now we consider the scenario when, with the initial $\gamma_{\text{lo}}$, there is $\gamma_{\text{lo}} \geqslant \phi(\gamma_{\text{lo}})$ and algorithm early returns $\gamma_{\text{lo}}^*$. To consider this, we should choose the initial $\gamma_{\text{lo}} = \gamma_{\text{lo}}^* \leqslant \frac{\Delta^*}{L_\infty T}$. Thus, \eqref{thsb:ineq3} transforms into
        \begin{align}
        \label{thsb:ineq12}
            G_T(\gamma_{\text{lo}}^*) \leqslant \delta^t + \delta^{t+1} + \gamma_{\text{lo}} L_\infty T \leqslant \delta^t + \delta^{t+1} + \sqrt{L_\infty \Delta^*T}.
        \end{align}
        In that way, \eqref{thsb:ineq1} transforms into
        \begin{eqnarray}
            \notag\frac{1}{T}\sum\limits_{t=0}^{T-1} \|\nabla f(x^t)\|_1 &\leqslant& \frac{1}{T} \frac{\widetilde{\Delta}_T(\gamma_{\text{lo}}^*)}{\gamma_{\text{lo}}^*} + \frac{1}{T} G_T(\gamma_{\text{lo}}^*) + \frac{1}{T}(3\delta^t + \delta^{t+1}) \\
            \notag&\leqslant& \frac{1}{T}\frac{\widetilde{\Delta}_T(\gamma_{\text{lo}}^*)}{\phi(\gamma_{\text{lo}}^*)} + \frac{1}{T} G_T(\gamma_{\text{lo}}^*) + \frac{1}{T}(3\delta^t + \delta^{t+1})\\
            \notag&=& \frac{1}{T}\frac{\left[G_T(\gamma_{\text{lo}}^*) + \zeta(\gamma_{\text{lo}}^*)\right]\widetilde{\Delta}_T(\gamma_{\text{lo}}^*)}{\widetilde{\Delta}_T(\gamma_{\text{lo}}^*)} + \frac{1}{T} G_T(\gamma_{\text{lo}}^*) + \frac{1}{T}(3\delta^t + \delta^{t+1})\\
            \notag&=& \frac{2G_T(\gamma_{\text{lo}}^*)}{T} + \frac{1}{T}(3\delta^t + \delta^{t+1}) + \frac{\zeta(\gamma_{\text{lo}}^*)}{T} \\
            \label{thsb:ineq13}&\overset{\ref{thsb:ineq12}}{\leqslant}& 2\frac{\sqrt{\Delta^* L_\infty}}{\sqrt{T}} + \frac{1}{T}(5\delta^t + 3\delta^{t+1}) + \frac{\left\|g^0\right\|_1}{T}.
        \end{eqnarray}
        In that way, \eqref{thsb:ineq13} delivers the estimate, when Algorithm \ref{alg:bisection_procedure} makes an early termination.\\
        Combining \eqref{thsb:ineq11} with \eqref{thsb:ineq13}, we finally obtain the estimate for all possible cases of the \textsc{Bisection} procedure return:
        \begin{align}
        \label{thsb:ineq14}
            \frac{1}{T}\sum\limits_{t=0}^{T-1} \|\nabla f(x^t(\gamma_0))\|_1 \leqslant 6\frac{\sqrt{\Delta^* L_\infty}}{\sqrt{T}} + \frac{1}{T}(6\delta^t + 4\delta^{t+1}) + \frac{3\left\|g^0\right\|_1}{T}.
        \end{align}
        Now it is time to take expectation and give estimate to $\delta^t$. One can note, using the law of total expectation ($\mathbb E\left[\xi\right] = \mathbb E\left[\mathbb E\left[\xi | \psi\right]\right]$),
        \begin{eqnarray*}
            \mathbb E\|\nabla f(x^t) - g^t\|_1 &=& \sum\limits_{i=1}^d \mathbb E\left|\left[\nabla f(x^t)\right]_i - \left[g^t\right]_i\right|\overset{(Jen)}{\leqslant} \sum\limits_{i=1}^d\sqrt{\mathbb E\left(\left[\nabla f(x^t)\right]_i - \left[g^t\right]_i\right)^2}\\
            &=& \sum\limits_{i=1}^d\sqrt{\mathbb E\left[\left(\left[\nabla f(x^t)\right]_i - \left[g^t\right]_i\right)^2 | x^t\right]}\leqslant \sum\limits_{i=1}^d \sigma_i^t.
        \end{eqnarray*}
        In that way, we obtain important bound:
        \begin{align}
        \label{thsb:ineq15}
            \mathbb E\|\nabla f(x^t) - g^t\|_1 \leqslant \|\sigma\|_1.
        \end{align}
        Then,
        \begin{eqnarray*}
            \mathbb E \delta^t &=& \sum\limits_{t=0}^{T-1} \mathbb E\|\nabla f(x^t) - g^t\|_1 \leqslant \sum\limits_{t=0}^{T-1} \|\sigma\|_1 \leqslant \|\sigma\|_1 T,\\
            \mathbb E \delta^{t+1} &=& \sum\limits_{t=0}^{T-1} \mathbb E\|\nabla f(x^{t+1}) - g^{t+1}\|_1 \leqslant \sum\limits_{t=0}^{T-1} \|\sigma\|_1 = \|\sigma\|_1 T.\\
        \end{eqnarray*}
        Substituting it to \eqref{thsb:ineq14}, we have
        \begin{align*}
            \frac{1}{T}\sum\limits_{t=0}^{T-1} \mathbb E\|\nabla f(x^t)\|_1 \leqslant 6\frac{\sqrt{\Delta^* L_\infty}}{\sqrt{T}} + 10 \|\sigma\|_1 + \frac{3\mathbb E\left\|g^0\right\|_1}{T}.
        \end{align*}
        Expressing the number of iterations and using $\varepsilon \geqslant \frac{1}{T}\sum\limits_{t=0}^{T-1} \|\nabla f(x^t)\|_1$ as a criterion, we obtain that algorithm needs $\mathcal{O}\left(\frac{\Delta^* L_{\infty}}{\varepsilon^2} + \|\sigma\|_1^2\right)$ iterations to reach $\varepsilon$-accuracy. Note the we drop the term $\frac{3\mathbb E\left\|g^0\right\|_1}{T}$, since it is asymptotically smaller than the main one. However we firstly need to find step $\gamma_0$ with bisection procedure that takes $T\log\log\left(\frac{\gamma_\varepsilon 2^{2^k}}{\gamma_\varepsilon}\right) = \mathcal{O}\left(\left(\frac{\Delta^* L_{\infty}}{\varepsilon^2} + \|\sigma\|_1^2\right)k\right)$ iterations, where $2^{2^k}$ denotes the length of the initial interval for the stepsize. We have already discussed in the main part that, according to Lemma \ref{lemma_bisection_entry}, $k$ should be at least $k = \log\log\frac{\Delta^*}{\gamma_s \|g^0\|_1}$. Thus, $\mathcal{O}\left(\left(\frac{\Delta^* L_{\infty}}{\varepsilon^2} + \|\sigma\|_1^2\right)\log\log\frac{\Delta^*}{\gamma_s \|g^0\|_1}\right)$ is the final iteration complexity.
    \end{proof}
\end{theorem}

\begin{remark}\label{rem:bisection_stochatic_appendix}
    Under conditions of Theorem \ref{th:stochastic_bisection_appendix} Algorithm \ref{alg:sign-sgd_bisection} with mini-batch of the size $t+1$ at $t$-th iteration to reach $\varepsilon$-accuracy needs
    \begin{eqnarray*}
        \mathcal{O}\left(\frac{\Delta^* L_{\infty} + \|\sigma\|_1^2}{\varepsilon^2} \log\log\frac{\Delta^*}{\gamma_s\|g^0\|_1}\right) ~~ \text{iterations.}
    \end{eqnarray*}
    \begin{proof}
        The proof of the remark repeats the proof of Theorem \ref{th:adaptation_stochastic_main} except for the estimate on $\mathbb E\left\|\nabla f(x^t) - g^t\right\|_1^2$ term. Since we now use mini-batches, we can bound
        \begin{eqnarray}
            \notag\mathbb E\left\|\nabla f(x^t) - g^t\right\|_1^2 &\leqslant& \frac{\|\sigma\|_1}{\sqrt{t+1}},
        \end{eqnarray}
        instead of \eqref{thsb:ineq15}. In that way,

        \begin{align*}
            \frac{1}{T}\mathbb E \delta^t &= \frac{1}{T}\sum\limits_{t=0}^{T-1} \mathbb E\|\nabla f(x^t) - g^t\|_1 \leqslant \frac{1}{T}\sum\limits_{t=0}^{T-1} \frac{\|\sigma\|_1}{t+1} \leqslant 2\frac{\|\sigma\|_1}{\sqrt{T}},
        \end{align*}

        which ends the proof of the remark.
    \end{proof}
\end{remark}

\subsection{Distributed setting}\label{section:distributed_bisection_proofs}

To begin with, we present the modification of the classic \textsc{Sign-SGD} algorithm (Algorithm \ref{alg:sign-sgd}) that aligns with the distributed learning. We consider \textsc{Sign-SGD} with majority vote (Algorithm \ref{alg:sign-sgd-majority-vote}), similarly to \citep{bernstein2018signsgd}.
We present the assumption that we utilize in distributed regime.

\begin{assumption}\label{as:go_distributed_learning}
    In the multi-node regime of learning each node $j=\overline{1,M}$ at any point $x\in\mathbb R^d$ has an access to the stochastic gradient, i.e., it can compute $g_j(x) = \nabla f(x, \xi_j)$ -- the stochastic gradient value with respect to the randomness in the choice fo samples $\xi_j$. Additionally, this stochastic estimators is unbiased, i.e., $\mathbb E \left[g_j(x)\right] = \nabla f(x)$, and its variance is coordinate-wise bounded, i.e., $\mathbb E\left(\left[g_j(x)\right]_i - \left[\nabla f(x)\right]_i\right) \leqslant \sigma_i^2$.
\end{assumption}

\begin{algorithm}[ht]
   \caption{\textsc{Sign-SGD} with majority vote}
   \label{alg:sign-sgd-majority-vote}
\begin{algorithmic}[1]
    \State {\bfseries Input:} Start point $x^0\in\mathbb R^d$, number of iterations $T$
    \State {\bfseries Parameter:} Stepsize $\gamma > 0$
    \For{$t=0,\ldots, T-1$}
    \For{all nodes $j=1,\dots,M$ in parallel}
    \State Compute stochastic gradient $g_j(x^t) \hspace{-1mm}= \hspace{-1mm}\nabla f(x^t, \xi_j)$
    \State Send $\text{sign}(g_j(x^t))$ to the server
    \EndFor
    \State $x^{t+1} = x^t - \gamma \text{sign}\left(\sum\nolimits_{j=1}^M\text{sign}(g_j(x^{t}))\right)$
    \EndFor
\end{algorithmic}
\end{algorithm}

Proceeding analogically to the stochastic one-node regime, we establish $\mathfrak{N}_T(\gamma)$ and $\mathfrak{D}_T(\gamma)$ that we use in $\phi(\gamma)$ in the \textsc{Bisection} procedure: $\mathfrak{N}_T(\gamma) = \widetilde{\Delta}_T(\gamma), \mathfrak{D}_T(\gamma) = \sum\nolimits_{t=0}^{T-1}\frac{1}{M}\sum\nolimits_{j=1}^M \left(\|g_j(x^{t+1}) - g_j(x^t)\|_1 + \zeta(\gamma)\right)$. Let us emphasize how this affects Algorithms \ref{alg:bisection_procedure}, \ref{alg:sign-sgd_bisection}. Firstly, we now need to call the \textsc{Sign-SGD} with majority vote method (Algorithm \ref{alg:sign-sgd-majority-vote}) instead of \textsc{Sign-SGD} (Algorithm \ref{alg:sign-sgd}). Secondly, to obtain $\mathfrak{D}_T(\gamma)$ in the bisection procedure, each node $j$ counts $\sum\nolimits_{t=0}^{T-1} \|g_j(x^{t+1}) - g_j(x^t)\|_1$ using locally stored gradients, and sends the complete sum to the server in the end. Note that this requirement has no effect on extra memory and communication complexity, since each device requires only $\mathcal{O}(d)$ extra memory and performs only one extra communication during the whole learning. Now we present the theoretical result for the distributed setting.

\begin{lemma}[Theorem 2 (a) from \citep{bernstein2018signsgd}]\label{lemma_sinr}
    Suppose Assumption \ref{as:go_distributed_learning} holds. Then, at any point $x\in\mathbb R^d$, the following estimate is valid:
    \begin{align*}
        \left|\left[\nabla f(x)\right]_i\right|\mathbb P\left(\text{sign}\left(\sum\limits_{j=1}^M \text{sign}\left(\left[g_j(x)\right]_i\right)\right) \neq \text{sign}\left(\left[\nabla f(x)\right]_i\right)\right) \leqslant \sigma_i.
    \end{align*}
\end{lemma}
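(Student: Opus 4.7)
The plan is a coordinate-wise argument that reduces the majority-vote failure to a single-worker failure via Markov. Fix the coordinate $i$ and abbreviate $g := [\nabla f(x)]_i$ and $g_j := [g_j(x)]_i$ for $j = 1, \ldots, M$. When $g = 0$ the bound is trivial, so assume $g \neq 0$.

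First I would bound the probability that a single worker $j$ picks the wrong sign on coordinate $i$. The geometric fact driving the argument is that $\text{sign}(g_j) \neq \text{sign}(g)$ forces $|g_j - g| \geqslant |g|$, because $g_j$ and $g$ would then lie on opposite sides of the origin. Combining Markov's inequality with Jensen's inequality and the variance bound from Assumption \ref{as:go_distributed_learning} yields
\begin{align*}
    \mathbb P\bigl(\text{sign}(g_j) \neq \text{sign}(g)\bigr) \leqslant \mathbb P\bigl(|g_j - g| \geqslant |g|\bigr) \leqslant \frac{\mathbb E|g_j - g|}{|g|} \leqslant \frac{\sqrt{\mathbb E(g_j - g)^2}}{|g|} \leqslant \frac{\sigma_i}{|g|}.
\end{align*}

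Next I would lift this single-worker estimate to the majority vote. Set $Z_j = \mathbf{1}[\text{sign}(g_j) \neq \text{sign}(g)]$ and $S = \sum_{j=1}^{M} Z_j$. The outer sign $\text{sign}\bigl(\sum_j \text{sign}(g_j)\bigr)$ disagrees with $\text{sign}(g)$ only if at least half of the votes are wrong, i.e., only if $S \geqslant M/2$. Markov's inequality applied to $S$, together with the per-worker bound above, gives
\begin{align*}
    \mathbb P(S \geqslant M/2) \leqslant \frac{2\,\mathbb E[S]}{M} = \frac{2}{M}\sum_{j=1}^M \mathbb P(Z_j = 1) \leqslant \frac{2\sigma_i}{|g|}.
\end{align*}
Multiplying by $|g|$ produces the advertised bound up to the numerical factor discussed below.

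The main obstacle I anticipate is pinning down the precise numerical constant as stated. The plain Markov step on $S$ unavoidably introduces a factor of $2$, whereas the lemma writes $\sigma_i$ on the right-hand side with no prefactor. Closing this gap either requires a sharper one-sided concentration inequality for $S$ (using $\mathrm{Var}(S) \leqslant M q(1-q)$ with $q = \sigma_i/|g|$ in a Cantelli-type bound, which in fact sharpens the dependence to order $1/M$ in the high-SNR regime), or a tie-breaking convention for the outer sign in favor of $\text{sign}(g)$ that absorbs the factor. Structurally, however, the argument is exactly the single-worker Markov-plus-Jensen bound followed by the Markov-on-$S$ reduction above, and every subsequent use of the lemma in the paper only needs this qualitative form.
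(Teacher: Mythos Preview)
The paper does not supply its own proof of this lemma; it is quoted verbatim as Theorem~2(a) of \citep{bernstein2018signsgd} and used as a black box. So there is no in-paper argument to compare against, and your two-step skeleton (per-worker sign-failure bound, then Markov on the count $S$) is the right shape.

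The only real gap is the one you flag: your per-worker bound via plain Markov, $p \leqslant \sigma_i/|g|$, is too crude to absorb the factor $2$ coming from the Markov step on $S$. The fix is to sharpen the \emph{per-worker} step, not the majority step. Assumption~\ref{as:go_distributed_learning} gives unbiasedness, so you may apply Cantelli's (one-sided Chebyshev) inequality instead: writing $g = [\nabla f(x)]_i > 0$ without loss of generality,
\[
p \;=\; \mathbb P(g_j \leqslant 0) \;=\; \mathbb P\bigl(g - g_j \geqslant g\bigr) \;\leqslant\; \frac{\mathrm{Var}(g_j)}{\mathrm{Var}(g_j) + g^2} \;\leqslant\; \frac{\sigma_i^2}{\sigma_i^2 + g^2}.
\]
Now keep your Markov bound on $S$, $\mathbb P(S \geqslant M/2) \leqslant 2p$, and finish with AM--GM:
\[
|g|\cdot \mathbb P(S \geqslant M/2) \;\leqslant\; \frac{2|g|\,\sigma_i^2}{\sigma_i^2 + g^2} \;\leqslant\; \frac{2|g|\,\sigma_i^2}{2|g|\,\sigma_i} \;=\; \sigma_i.
\]
This recovers the stated constant with no tie-breaking convention and no parity restriction on $M$. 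Your Cantelli-on-$S$ suggestion would also work and would additionally exhibit the $1/M$ improvement, but it is not needed for the lemma as written.
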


\begin{lemma}[Descent lemma]\label{descent_lemma_distributed_bisection}
        For Algorithm \ref{alg:sign-sgd_bisection} under Assumptions \ref{as:smoothness}, \ref{as:convexity}, \ref{as:func_optimum}, \ref{as:go_distributed_learning}, the following estimate is valid:
    \begin{eqnarray*}
        \sum\limits_{t=0}^{T-1} \|\nabla f(x^t)\|_1 &\leqslant & \frac{f(x^{-1}) - f(x^{T})}{\gamma_0} + \sum\limits_{t=0}^{T-1}\frac{1}{M}\sum\limits_{j=1}^M  \|g_j^{t+1} - g_j^t\|_1 + 2\widetilde{\delta}^T + \delta^t + \delta^{t+1},
    \end{eqnarray*}
    where $\delta^t = \sum\limits_{t=0}^{T-1} \frac{1}{M}\sum\limits_{j=1}^M\|\nabla f(x^t) - g_j^t\|_1$ \\ and $\widetilde{\delta}^{T} = \hspace{-1mm} \sum\limits_{t=0}^{T-1}\sum\limits_{i=1}^d \left|\left[\nabla f(x^t)\right]_i\right|\mathbb I\left(\text{sign}\left(\sum\limits_{j=1}^M \text{sign}\left(\left[g_j^t\right]_i\right)\right) \neq \text{sign}\left(\left[\nabla f(x^t)\right]_i\right)\right)$.
    \begin{proof}
        Starting from the convexity of the objective,
        \begin{eqnarray}
            \notag f(x^{t+1}) - f(x^t) &\leqslant& \langle\nabla f(x^{t+1}), x^{t+1} - x^t\rangle  = - \gamma^t\left\langle \nabla f(x^{t+1}), \text{sign}\left(\sum\limits_{j=1}^M\text{sign}(g_j^t)\right)\right\rangle\\
            \notag &=& - \gamma^t\left\langle \nabla f(x^t), \text{sign}\left(\sum\limits_{j=1}^M\text{sign}(g_j^t)\right)\right\rangle \\
            \notag & & - \gamma^t\left\langle \nabla f(x^{t+1}) - \nabla f(x^t), \text{sign}\left(\sum\limits_{j=1}^M\text{sign}(g_j^t)\right)\right\rangle\\
           \notag&=& - \gamma^t\|\nabla f(x^t)\|_1 + 2\gamma^t \sum\limits_{i=1}^d \left|\left[\nabla f(x^t)\right]_i\right|\\
           \notag & & \cdot\mathbb I\left(\text{sign}\left(\sum\limits_{j=1}^M \text{sign}\left(\left[g_j^t\right]_i\right)\right) \neq \text{sign}\left(\left[\nabla f(x^t)\right]_i\right)\right)\\
           \notag& &- \gamma^t\left\langle \nabla f(x^{t+1}) - \nabla f(x^t), \text{sign}\left(\sum\limits_{j=1}^M\text{sign}(g_j^t)\right)\right\rangle\\
           \notag& \overset{\ref{bi:CS_conj}, (i)}{\leqslant}& - \gamma^t\|\nabla f(x^t)\|_1 + 2\gamma^t\widetilde{\delta}^t \\
          \notag & & + \gamma^t \|\nabla f(x^{t+1}) - \nabla f(x^t)\|_1\left\|\text{sign}\left(\sum\limits_{j=1}^M\text{sign}\left(g_j^t\right)\right)\right\|_\infty \\ 
           \notag&\leqslant& - \gamma^t\|\nabla f(x^t)\|_1 + 2\gamma^t\widetilde{\delta}^t + \gamma^t\|\nabla f(x^{t+1}) - \nabla f(x^t)\|_1 \\
           \notag & =& - \gamma^t\|\nabla f(x^t)\|_1 + 2\gamma^t\widetilde{\delta}^t + \gamma^t\frac{1}{M}\sum\limits_{j=1}^M\|\nabla f(x^{t+1}) - \nabla f(x^t)\|_1\\
           \notag&\overset{\ref{bi:CS}}{\leqslant}& - \gamma^t\|\nabla f(x^t)\|_1 + 2\gamma^t\widetilde{\delta}^t + \gamma^t\frac{1}{M}\sum\limits_{j=1}^M\|g_j^{t+1} - g_j^{t}\|_1 \\
           & & + \gamma^t\frac{1}{M}\sum\limits_{j=1}^M\|\nabla f(x^{t+1}) - g_j^{t+1}\|_1 + \gamma^t\frac{1}{M}\sum\limits_{j=1}^M\|\nabla f(x^{t}) - g_j^{t}\|_1,
        \end{eqnarray}
        where in (\textit{i}) we denote $\widetilde{\delta}^{t} = \sum\limits_{i=1}^d \left|\left[\nabla f(x^t)\right]_i\right|\mathbb I\left(\text{sign}\left(\sum\limits_{j=1}^M \text{sign}\left(\left[g_j^t\right]_i\right)\right) \neq \text{sign}\left(\left[\nabla f(x^t)\right]_i\right)\right)$.
        Now we rearrange terms and summarize over all iterations to obtain
        \begin{eqnarray*}
            \sum\limits_{t=0}^{T-1} \gamma^t\|\nabla f(x^t)\|_1 &\leqslant& \sum\limits_{t=0}^{T-1} \left[f(x^t) - f(x^{t+1})\right] + 2\sum\limits_{t=0}^{T-1}\gamma^t\widetilde{\delta}^t + \sum\limits_{t=0}^{T-1}\frac{1}{M}\sum\limits_{j=1}^M \gamma^t\|g_j^{t+1} - g_j^t\|_1 \\
            & & + \sum\limits_{t=0}^{T-1}\frac{1}{M}\sum\limits_{j=1}^M  \gamma^t\|\nabla f(x^t) - g_j^t\|_1 + \sum\limits_{t=0}^{T-1}\frac{1}{M}\sum\limits_{j=1}^M  \gamma^t\|\nabla f(x^{t+1}) - g_j^{t+1}\|_1.
        \end{eqnarray*} 
        Since Algorithm \ref{alg:sign-sgd_bisection} performs all the steps with the constant rate $\gamma_0$, which we define later, denoting $\widetilde{\delta}^T = \sum\limits_{t=0}^{T-1} \widetilde{\delta}^t$, we can rewrite the result in the following form:
        \begin{eqnarray*}
            \sum\limits_{t=0}^{T-1} \|\nabla f(x^t)\|_1 &\leqslant& \sum\limits_{t=0}^{T-1} \frac{\left[f(x^t) - f(x^{t+1})\right]}{\gamma_0} + 2\widetilde{\delta}^T + \sum\limits_{t=0}^{T-1}\frac{1}{M}\sum\limits_{j=1}^M \|g_j^{t+1} - g_j^t\|_1 \\
            & & + \sum\limits_{t=0}^{T-1}\frac{1}{M}\sum\limits_{j=1}^M  \|\nabla f(x^t) - g_j^t\|_1 + \sum\limits_{t=0}^{T-1}\frac{1}{M}\sum\limits_{j=1}^M \|\nabla f(x^{t+1}) - g_j^{t+1}\|_1.
        \end{eqnarray*} 
        In the obtained estimate the last two terms consist from differences between the honest and stochastic gradient at the $t$-th and $(t+1)$-th moments. One of our goals is to estimate them, however, we want to perform analogically to Theorem \ref{th:stochastic_bisection_appendix} and continue the proof with the $\sum\limits_{t=0}^{T-1}\frac{1}{M}\sum\limits_{j=1}^M  \|g_j^{t+1} - g_j^t\|_1$ term estimate. To simplify the subsequent notation, we introduce the following definition: let $\delta^t = \sum\limits_{t=0}^{T-1}\frac{1}{M}\sum\limits_{j=1}^M  \|\nabla f(x^t) - g_j^t\|_1$. In that way, the following inequality finishes the proof of the lemma:
        \begin{eqnarray*}
            \sum\limits_{t=0}^{T-1} \|\nabla f(x^t)\|_1 &\leqslant& \frac{f(x^{-1}) - f(x^{T})}{\gamma_0} + \sum\limits_{t=0}^{T-1}\frac{1}{M}\sum\limits_{j=1}^M  \|g_j^{t+1} - g_j^t\|_1 + 2\widetilde{\delta}^T + \delta^t + \delta^{t+1}.
        \end{eqnarray*}      
    \end{proof}
\end{lemma}

\begin{theorem}\label{th:distributed_bisection_appendix}
    Suppose Assumptions \ref{as:smoothness}, \ref{as:convexity}, \ref{as:func_optimum}, \ref{as:go_distributed_learning} hold. Then for Algorithm \ref{alg:sign-sgd_bisection} using at $t$-th iteration mini-batches of sizes $t+1$, after obtaining the stepsize $\gamma_0$, the following estimate is valid:
    \begin{eqnarray*}
        \frac{1}{T}\sum\limits_{t=0}^{T-1} \mathbb E\|\nabla f(x^t)\|_1 & \leqslant &6\frac{\sqrt{\Delta^* L_\infty}}{\sqrt{T}}+ 10 \|\sigma\|_ + \frac{\frac{3}{M}\sum\limits_{j=1}^M\mathbb E\left\|g_j^0\right\|_1}{T}.
    \end{eqnarray*}
    Moreover, taking into account the complexity of Algorithm \ref{alg:bisection_procedure} in relation to the initial stepsize bound $\gamma_s$, to reach $\varepsilon$-accuracy, where $\varepsilon \geqslant \frac{1}{T}\sum\limits_{t=0}^{T-1} \|\nabla f(x^t)\|_1$, Algorithm \ref{alg:sign-sgd_bisection} needs
    \begin{align*}
        \mathcal{O}\left(\left(\frac{\Delta^* L_{\infty}}{\varepsilon^2} + \|\sigma\|_1^2\right)\log\log\frac{\Delta^*}{\gamma_s\sum\limits_{j=1}^M \left\|g_j^0\right\|_1}\right) \text{~~iterations.}
    \end{align*}
\begin{proof}
    Let us mention that the result of Lemma \ref{descent_lemma_distributed_bisection} almost matches the starting point of Theorem \ref{th:stochastic_bisection_appendix} \eqref{thsb:ineq1}. If we now denote $G_T = \sum\limits_{t=0}^{T-1}\frac{1}{M}\sum\limits_{j=1}^M  \|g_j^{t+1} - g_j^t\|_1$, the only difference is that there we have an additional $2\widetilde{\delta}^T$ term. However, we do not estimate it yet and it does not require any transformations. Thus, we can proceed in a manner completely analogous to the proof of Theorem \ref{th:stochastic_bisection_appendix} and obtain an analog of the estimate in \eqref{thsb:ineq14}:
    \begin{align}
        \label{thdb:ineq1}
        \frac{1}{T}\sum\limits_{t=0}^{T-1} \|\nabla f(x^t(\gamma_0))\|_1 &\leqslant 6\frac{\sqrt{\Delta^* L_\infty}}{\sqrt{T}} + \frac{1}{T}(2\widetilde{\delta}^T + 4\delta^t + 4\delta^{t+1}) + \frac{\frac{3}{M}\sum\limits_{j=1}^M\left\|g_j^0\right\|_1}{T},
    \end{align}
    where $\Delta^* = f(x^{-1}) - f(x^*)$. Now we take expectation and use Lemma \ref{lemma_sinr} to obtain
    \begin{align}
        \notag\mathbb E \widetilde{\delta}^t &= \sum\limits_{i=1}^d \left|\left[\nabla f(x^t)\right]_i\right|\mathbb P\left(\text{sign}\left(\sum\limits_{j=1}^M \text{sign}\left(\left[g_j^t\right]_i\right)\right) \neq \text{sign}\left(\left[\nabla f(x^t)\right]_i\right)\right) \\
        \label{thdb:ineq2}&\leqslant \sum\limits_{i=1}^d \sigma_i^t = \|\sigma\|_1.
    \end{align}
    For $\mathbb E \delta^t$, under Assumption \ref{as:go_distributed_learning}, we have the estimate as \eqref{thsb:ineq15}:
    \begin{eqnarray*}
        \mathbb E \|\nabla f(x^t) - g_j^t\|_1 &\leqslant & \|\sigma\|_1.
    \end{eqnarray*}
    Thus, substituting both of these estimates to \eqref{thdb:ineq1}, we obtain the final convergence result:
    \begin{eqnarray*}
            \frac{1}{T}\sum\limits_{t=0}^{T-1} \mathbb E\|\nabla f(x^t)\|_1 &\leqslant& 6\frac{\sqrt{\Delta^* L_\infty}}{\sqrt{T}} + \frac{1}{M}\sum\limits_{j=1}^M \frac{1}{T}\sum\limits_{t=0}^{T-1} 10 \|\sigma\|_1 + \frac{\frac{3}{M}\sum\limits_{j=1}^M\mathbb E\left\|g_j^0\right\|_1}{T}\\
            &=& 6\frac{\sqrt{\Delta^* L_\infty}}{\sqrt{T}} + 10 \|\sigma\|_1 + \frac{\frac{3}{M}\sum\limits_{j=1}^M\mathbb E\left\|g_j^0\right\|_1}{T}.
    \end{eqnarray*}
    Since we obtain the same convergence estimate as in Theorem \ref{th:stochastic_bisection_appendix}, we can analogically establish the $\mathcal{O}\left(\left(\frac{\Delta^* L_{\infty}}{\varepsilon^2} + \|\sigma\|_1^2\right)\log\log\frac{\Delta^*}{\gamma_s \frac{1}{M}\sum\limits_{j=1}^M\|g_j^0\|_1}\right)$ iteration complexity.
\end{proof}
\end{theorem}

\begin{remark}\label{rem:bisection_distributed_appendix}
    Under conditions of Theorem \ref{th:distributed_bisection_appendix} Algorithm \ref{alg:sign-sgd_bisection} with mini-batches of the size $t+1$ at $t$-th iteration to reach $\varepsilon$-accuracy needs
    \begin{eqnarray*}
        \mathcal{O}\left(\frac{\Delta^* L_{\infty} + \|\sigma\|_1^2}{\varepsilon^2} \log\log\frac{\Delta^*}{\gamma_s \frac{1}{M}\sum\limits_{j=1}^M\|g_j^0\|_1}\right) ~~ \text{iterations.}
    \end{eqnarray*}
    \begin{proof}
        Proof repeats the proofs of Remark \ref{rem:bisection_stochatic_appendix}.
    \end{proof}
\end{remark}

\section{Proofs for \textsc{ALIAS}} \label{section:adaptation_proofs}

\subsection{Exact gradient setting}\label{section:exact_gradient_adaptation_proofs}

\begin{lemma}[Approximating sequence]\label{lemma_approximating_sequence}
    Let the initial $\Delta^*$-approximation $d^0$ be $0 < d^0 < \Delta^*$, where $\Delta^* = f(x^0) - f(x^*)$. Then for Algorithm \ref{alg:sign-sgd_adaptation} under Assumptions \ref{as:smoothness}, \ref{as:convexity}, \ref{as:func_optimum}, \ref{as:go_exact_gradient}, the following estimate is valid:
    \begin{align*}
        \Delta^* \geqslant d^n ~~\forall n\in [0, T-1].
    \end{align*}
    \begin{proof}
        Starting from the convexity of the objective, 
        \begin{align}
        \label{las:ineq1}
            f(x^{t+1}) - f(x^t) &~~\leqslant~~ \langle\nabla f(x^{t+1}), x^{t+1} - x^t\rangle = -\gamma^t\left\langle \nabla f(x^{t+1}), \text{sign}\left(\nabla f(x^t)\right)\right\rangle.
        \end{align}
        Now we summarize both sides over the first $n$ iterations:
        \begin{eqnarray*}
            -\Delta^* = f(x^*) - f(x^0) &\overset{(i)}{\leqslant}& f(x^n) - f(x^0) = \sum\limits_{t=0}^{n-1} f(x^{t+1}) - f(x^t) \\
            &\overset{\ref{las:ineq1}}{\leqslant}& -\sum\limits_{t=0}^{n-1} \gamma^t\left\langle \nabla f(x^{t+1}), \text{sign}\left(\nabla f(x^t)\right)\right\rangle,
        \end{eqnarray*}
        where (\textit{i}) is correct due to Assumption \ref{as:func_optimum}. Changing the sign of the inequality,
        \begin{align*}
            \widetilde{d}^n = \sum\limits_{t=0}^{n-1} \gamma^t\left\langle \nabla f(x^{t+1}), \text{sign}\left(\nabla f(x^t)\right)\right\rangle &\leqslant \Delta^*.
        \end{align*}
        Since our algorithm performs $d^n = \max\left(d^{n-1}, \widetilde{d}^n\right)$ and we initialize our sequence with $d^0 < \Delta^*$, we obtain the required statement.
    \end{proof}
\end{lemma}

\begin{lemma}[Descent lemma]\label{descent_lemma_aid_sign_sgd}
For Algorithm \ref{alg:sign-sgd_adaptation} under Assumptions \ref{as:smoothness}, \ref{as:convexity}, \ref{as:func_optimum}, \ref{as:go_exact_gradient}, the following estimate is valid:
\begin{eqnarray*}
    \sum\limits_{t=0}^{T-1} \gamma^t\left\|\nabla f(x^t)\right\|_1 \leqslant \Delta^* + \sum\limits_{t=0}^{T-1} (\gamma^t)^2 L_\infty^t,
\end{eqnarray*}
where $L_\infty^t = \frac{\left\|\nabla f(x^{t+1}) - \nabla f(x^t)\right\|_1}{\left\|x^{t+1} - x^t\right\|_\infty}$.
    \begin{proof}
        \begin{eqnarray*}
            f(x^{t+1}) &\leqslant& f(x^t) + \left\langle \nabla f(x^{t+1}), x^{t+1} - x^t\right\rangle = f(x^t) - \gamma^t\left\langle \nabla f(x^{t+1}), \text{sign}\left(\nabla f(x^t)\right)\right\rangle\\
            &=& f(x^t) - \gamma^t\left\|\nabla f(x^t)\right\|_1 - \gamma^t\left\langle \nabla f(x^{t+1}) - \nabla f(x^{t}), \text{sign}\left(\nabla f(x^t)\right)\right\rangle\\
            &\overset{\ref{bi:CS_conj}}{\leqslant}& f(x^t) - \gamma^t\left\|\nabla f(x^t)\right\|_1 + \gamma^t\left\|\nabla f(x^{t+1}) - \nabla f(x^{t})\right\|_1\left\|\text{sign}\left(\nabla f(x^t)\right)\right\|_\infty\\
            &\leqslant& f(x^t) - \gamma^t\left\|\nabla f(x^t)\right\|_1 + \gamma^t\left\|\nabla f(x^{t+1}) - \nabla f(x^{t})\right\|_1\\
            &\overset{(i)}{=}&  f(x^t) - \gamma^t\left\|\nabla f(x^t)\right\|_1 + \gamma^t\frac{\left\|\nabla f(x^{t+1}) - \nabla f(x^{t})\right\|_1}{\left\|x^{t+1} - x^{t}\right\|_\infty}\left\|x^{t+1} - x^{t}\right\|_\infty\\
            &=& f(x^t) - \gamma^t\left\|\nabla f(x^t)\right\|_1 + (\gamma^t)^2\frac{\left\|\nabla f(x^{t+1}) - \nabla f(x^{t})\right\|_1}{\left\|x^{t+1} - x^{t}\right\|_\infty},
        \end{eqnarray*}
        where in (\textit{i}) we assume $\left\|x^{t+1} - x^t\right\|_\infty \neq 0$. Indeed, $\left\|x^{t+1} - x^t\right\|_\infty = 0$ follows from the equality $\text{sign}\left(\nabla f(x^t)\right) = 0$, which means that we find the optimum and do need to find another point $x^{t+1}$. Now we denote $L_\infty^t = \frac{\left\|\nabla f(x^{t+1}) - \nabla f(x^t)\right\|_1}{\left\|x^{t+1} - x^t\right\|_\infty}$. Summing over all iterations, we obtain
        \begin{eqnarray*}
            \sum\limits_{t=0}^{T-1} \gamma^t\left\|f(x^t)\right\|_1 &\leqslant& \sum\limits_{t=0}^{T-1} \left[f(x^t) - f(x^{t+1})\right] + \sum\limits_{t=0}^{T-1} (\gamma^t)^2 L_\infty^t \\
            &=& f(x^0) - f(x^{*}) + \sum\limits_{t=0}^{T-1} (\gamma^t)^2 L_\infty^t \leqslant \Delta^* + \sum\limits_{t=0}^{T-1} (\gamma^t)^2 L_\infty^t,
        \end{eqnarray*}
        which ends the proof of the lemma.
    \end{proof}
\end{lemma}

\begin{theorem}[\textbf{Theorem \ref{th:adaptation_main}}]\label{th:adaptation_appendix}
    Suppose Assumptions \ref{as:smoothness}, \ref{as:convexity}, \ref{as:func_optimum}, \ref{as:go_exact_gradient} hold. We denote $\varepsilon \geqslant \frac{1}{T}\sum\nolimits_{t=0}^{T-1} \|\nabla f(x^t)\|_1, L_\infty^{t} = \frac{\left\|\nabla f(x^{t+1}) - \nabla f(x^t)\right\|_1}{\left\|x^{t+1} - x^t\right\|_\infty}$. Then Algorithm \ref{alg:sign-sgd_adaptation} with Option I, $d^0 < \Delta^*$ to reach $\varepsilon$-accuracy needs
    \begin{align*}
        \mathcal{\widetilde{O}}\left(\frac{\left(\Delta^*\right)^2 \left(L_{\infty}\right)^3}{d^0 \left(L_\infty^0\right)^2 \varepsilon^2}\right) ~~\text{iterations}.
    \end{align*}
    Algorithm \ref{alg:sign-sgd_adaptation} with Option II to reach $\varepsilon$-accuracy needs
    \begin{align*}
        \mathcal{\widetilde{O}}\left(\frac{\Delta^* \left(L_{\infty}\right)^3}{\left(L_\infty^0\right)^2 \varepsilon^2}\right) ~~\text{iterations}.
    \end{align*}
    \begin{proof}
        Let us start with the result of Lemma \ref{descent_lemma_aid_sign_sgd}:
        \begin{align}
        \label{tha:ineq1}
            \sum\limits_{t=0}^{T-1} \gamma^t\|\nabla f(x^t)\|_1 &\leqslant \Delta^* + \sum\limits_{t=0}^{T-1} (\gamma^t)^2 L_\infty^t.
        \end{align}
        Now we use our $\gamma^t$ choice. Let us firstly estimate the denominator that is exactly $\lambda^t=\frac{1}{\sqrt{\sum\limits_{i=0}^{t-1}\frac{\left\|\nabla f(x^{i+1}) - \nabla f(x^i)\right\|_1}{\left\|x^{i+1} - x^i\right\|_\infty}}} = \frac{1}{\sqrt{\sum\limits_{i=0}^{t-1} L_\infty^i}}$ and is the same for both Options I and II. Let us estimate the following term.

\begin{eqnarray*}
    \sum\limits_{t=0}^{T-1} (\lambda^t)^2 L_\infty^t = \sum\limits_{t=0}^{T-1} \frac{L_\infty^t}{\sum\limits_{i=0}^{t-1} L_\infty^i}.
\end{eqnarray*}

We mention, that each $L_\infty^i$ is bounded from the definition of smoothness (see Assumption \ref{as:smoothness}), i.e., $L_\infty^i \leqslant L_\infty$. We consider the sequence $\left\{L_\infty^i\right\}_{i=0}^{T-1}$. Since each term in this sequence is bounded, there exists $r$ such that $\sum\limits_{i=0}^{r-2} L_\infty^i \leqslant L_\infty^{r-1}$ and for each $t \geqslant r-1$ such that $\sum\limits_{i=0}^t L_\infty^i \geqslant L_\infty^{t+1}$. In that way, we divide the sum into two parts:

\begin{eqnarray}
    \label{thaL:ineq1}\sum\limits_{t=0}^{T-1} \frac{L_\infty^t}{\sum\limits_{i=0}^{t-1} L_\infty^i} = \sum\limits_{t=0}^{r-1} \frac{L_\infty^t}{\sum\limits_{i=0}^{t-1} L_\infty^i} + \sum\limits_{t=r}^{T-1} \frac{L_\infty^t}{\sum\limits_{i=0}^{t-1} L_\infty^i}.
\end{eqnarray}

Considering the first sum in \eqref{thaL:ineq1}, we mention, that we can estimate the denominator as $\sum_{i=0}^{t-1} L_\infty^i \geqslant L_\infty^0$. As for the numerator. Thus,

\begin{eqnarray}
    \label{thaL:ineq2}\sum\limits_{t=0}^{r-1}\frac{L_\infty^t}{\sum\limits_{i=0}^{t-1}L_\infty^i} \leqslant \frac{1}{L_\infty^0}\left(\sum\limits_{t=0}^{r-2} L_\infty^t + L_\infty^{r-1}\right) \leqslant \frac{2 L_\infty^{r-1}}{L_\infty^0}\leqslant\frac{2L_\infty}{L_\infty^0}.
\end{eqnarray}

Considering the second sum in \eqref{thaL:ineq1}, we have

\begin{eqnarray*}
    \sum\limits_{t=r}^{T-1} \frac{L_\infty^t}{\sum\limits_{i=0}^{t-1} L_\infty^{i}} = \sum\limits_{t=r}^{T-1} \frac{L_\infty^t}{\frac{1}{2}\sum\limits_{i=0}^{t-1} L_\infty^{i} + \frac{1}{2}\sum\limits_{i=0}^{t-1} L_\infty^{i}}.
\end{eqnarray*}

Estimating any of the sums in the denominator, we claim, that $\sum\limits_{i=0}^{t-1} L_\infty^{i} \geqslant L_\infty^t$, since $t-1 \geqslant r-1$. In that way,

\begin{eqnarray}
    \label{thaL:ineq3}\sum\limits_{t=r}^{T-1} \frac{L_\infty^t}{\sum\limits_{i=0}^{t-1} L_\infty^{i}} \leqslant \sum\limits_{t=r}^{T-1} \frac{2 L_\infty^t}{\sum\limits_{i=0}^{t} L_\infty^{i}} \leqslant 2\sum\limits_{t=0}^{T-1} \frac{L_\infty^t}{\sum\limits_{i=0}^{t} L_\infty^{i}}.
\end{eqnarray}

Next we denote $s^t = \sum\limits_{i=0}^t L_\infty^t$ and have
\begin{eqnarray}
    \label{thaLG:ineq}L_\infty^t\frac{1}{\sum\limits_{i=0}^{t} L_\infty^i} = (s^t - s^{t-1})\frac{1}{\sum\limits_{i=0}^{t} L_\infty^i} = \int\limits_{s^{t-1}}^{s^t} \frac{1}{\sum\limits_{i=0}^{t} L_\infty^i} dx \overset{(i)}{\leqslant} \int\limits_{s^{t-1}}^{s^t} \frac{1}{x} dx,
\end{eqnarray}
where (\textit{i}) was done due to $\frac{1}{x}$ is a non-increasing function on $(0, +\infty)$. Summing over $t$, we obtain
\begin{eqnarray*}
    2\sum\limits_{t=1}^{T}\frac{L_\infty^t}{\sum\limits_{i=0}^{t} L_\infty^i} \leqslant 2\int\limits_{s^0}^{s^T} \frac{1}{x}dx = 2\log(s^T) - 2\log(s^0) = 2\log\left(\frac{\sum\limits_{t=1}^T L_\infty^t}{L_\infty^0}\right) \leqslant 2\log\left(\frac{L_\infty T}{L_\infty^0}\right).
\end{eqnarray*}

Combining this estimate with \eqref{thaL:ineq3},

\begin{eqnarray}
    \label{thaL:ineq4}\sum\limits_{t=r}^{T-1} \frac{L_\infty^t}{\sum\limits_{i=0}^{t-1} L_\infty^{i}} \leqslant 2\sum\limits_{t=1}^{T}\frac{L_\infty^t}{\sum\limits_{i=0}^{t} L_\infty^i} + 2 \leqslant 2\left(\log\left(\frac{L_\infty T}{L_\infty^0}\right) + 1\right) \leqslant 4\log\left(\frac{L_\infty T}{L_\infty^0}\right).
\end{eqnarray}

Substituting \eqref{thaL:ineq2} and \eqref{thaL:ineq4} into \eqref{thaL:ineq1}, we obtain

\begin{eqnarray}
    \label{tha:ineq2}\sum\limits_{t=0}^{T-1} (\lambda^t)^2 L_\infty^t \leqslant 2\frac{L_\infty}{L_\infty^0} + 4\log\left(\frac{L_\infty T}{L_\infty^0}\right).
\end{eqnarray}

We additionally note, that if $r > T-1$, only first term remains in this estimate, consequently our bound \eqref{tha:ineq2} is correct.

In this way, utilizing Option I from Algorithm \ref{alg:sign-sgd_adaptation}, \eqref{tha:ineq1} together with \eqref{tha:ineq2} yields
        \begin{eqnarray}
             \notag\sqrt{d^0}\lambda^{T-1}\sum\limits_{t=0}^{T-1} \|\nabla f(x^t)\|_1 &\overset{(i)}{\leqslant}& \sum\limits_{t=0}^{T-1} \sqrt{d^t}\lambda^t\|\nabla f(x^t)\|_1 \leqslant \Delta^* + \sum\limits_{t=0}^{T-1} d^t(\lambda^t)^2 L_\infty^t \\
             \notag&\overset{\text{Lemma~} \ref{lemma_approximating_sequence}}{\leqslant}& \Delta^* + \Delta^* \sum\limits_{t=0}^{T-1} (\lambda^t)^2 L_\infty^t,\\
             \notag\sum\limits_{t=0}^{T-1} \|\nabla f(x^t)\|_1 &\leqslant& \frac{\Delta^*}{\sqrt{d^0}\lambda^{T-1}} + \frac{\Delta^*}{\sqrt{d^0}\lambda^{T-1}}\sum\limits_{t=0}^{T-1} (\lambda^t)^2 L_\infty^t \\
             \notag&\overset{\ref{tha:ineq2}}{\leqslant}& \frac{\Delta^*}{\sqrt{d^0}\lambda^{T-1}} + 4\frac{\Delta^*}{\sqrt{d^0}\lambda^{T-1}}\log\left(\frac{L_\infty T}{L_\infty^0}\right) + 2\frac{\Delta^* L_\infty}{\sqrt{d^0}\lambda^{T-1} L_\infty^0} \\
             \label{tha:ineq3}&\leqslant& 7\frac{\Delta^* L_\infty}{\sqrt{d^0}\lambda^{T-1} L_\infty^0}\log\left(\frac{L_\infty T}{L_\infty^0}\right),
        \end{eqnarray}
        where (\textit{i}) was done due to the fact that $d^0$ is minimal from all $\{d^t\}_{t=0}^{T-1}$ (Line \ref{alg:adaptation_line7} from Algorithm \ref{alg:sign-sgd_adaptation}) and the definition of $\lambda^t$. Utilizing $\frac{1}{\lambda^{T-1}} = \sqrt{\sum\limits_{t=0}^{T-2} L_\infty^t} \leqslant \sqrt{L_\infty T}$, we obtain the final estimate:
        \begin{eqnarray*}
            \frac{1}{T}\sum\limits_{t=0}^{T-1} \|\nabla f(x^t)\|_1 &\leqslant& \frac{7\Delta^*\left(L_\infty\right)^{\frac{3}{2}}}{\sqrt{d^0 T} L_\infty^0} \log\left(\frac{L_\infty T}{L_\infty^0}\right).
        \end{eqnarray*}
        Expressing the number of iterations and using $\varepsilon \geqslant \frac{1}{T}\sum\limits_{t=0}^{T-1} \|\nabla f(x^t)\|_1$ as a criterion, we obtain that the algorithm needs $\mathcal{\widetilde{O}}\left(\frac{\left(\Delta^*\right)^2 \left(L_{\infty}\right)^3}{d^0 \left(L_\infty^0\right)^2 \varepsilon^2}\right)$ iterations to reach $\varepsilon$-accuracy.

        Considering Option II from Algorithm \ref{alg:sign-sgd_adaptation}, we can proceed absolutely analogical, however, using $f(x^0) - \widetilde{f} \geqslant \Delta^*$ instead of Lemma \ref{lemma_approximating_sequence}. In that way, 
        \begin{eqnarray*}
            \frac{1}{T}\sum\limits_{t=0}^{T-1} \|\nabla f(x^t)\|_1 &\leqslant& \frac{\Delta^* \sqrt{L_\infty}}{\sqrt{(f(x^0) - \widetilde{f}) T}} + \frac{4(f(x^0) - \widetilde{f})\sqrt{L_\infty}}{\sqrt{(f(x^0) - \widetilde{f}) T}} \log\left(\frac{L_\infty T}{L_\infty^0}\right) \\
            & & + \frac{2(f(x^0) - \widetilde{f})\left(L_\infty\right)^{\frac{3}{2}}}{\sqrt{(f(x^0) - \widetilde{f}) T} L_\infty^0} \\
            &\leqslant& \frac{7\sqrt{(f(x^0) - \widetilde{f})}\left(L_\infty\right)^{\frac{3}{2}}}{\sqrt{T} L_\infty^0} \log\left(\frac{L_\infty T}{L_\infty^0}\right).
        \end{eqnarray*}
        Expressing the number of iterations, using $\varepsilon \geqslant \frac{1}{T}\sum\limits_{t=0}^{T-1} \|\nabla f(x^t)\|_1$ as a criterion, and utilizing $\widetilde{f}$ is an approximation of $f(x^*)$, we obtain that the algorithm needs $\mathcal{\widetilde{O}}\left(\frac{\Delta^* \left(L_{\infty}\right)^3}{\left(L_\infty^0\right)^2\varepsilon^2}\right)$ iterations to reach $\varepsilon$-accuracy.
    \end{proof}
\end{theorem}

\begin{remark}[Remark \ref{rem:adaptation_main}]\label{rem:adaptation_appendix}
    Under conditions of Theorem \ref{th:adaptation_main} Algorithm \ref{alg:sign-sgd_adaptation} with $\lambda^t = \frac{1}{\sqrt{L_\infty + \sum\limits_{i=0}^{t-1}\frac{\left\|\nabla f(x^{i+1}) - \nabla f(x^i)\right\|_1}{\left\|x^{i+1} - x^i\right\|_\infty}}}$ and Option II to reach $\varepsilon$-accuracy needs
    \begin{eqnarray*}
        \mathcal{\widetilde{O}}\left(\frac{\Delta^* L_\infty}{\varepsilon^2}\right) ~~ \text{iterations,}
    \end{eqnarray*}
    where $\varepsilon \geqslant \frac{1}{T}\sum\limits_{t=0}^{T-1}\left\|\nabla f(x^t)\right\|_1$.
    \begin{proof}
        The proof of the remark repeats the proof of Theorem \ref{th:adaptation_main} except for the estimate on $\sum\limits_{t=0}^{T-1} (\lambda^t)^2 L_\infty^t$ term. Let us derive it. We use definition $L_\infty^t = \frac{\left\|\nabla f(x^{t+1}) - \nabla f(x^t)\right\|_1}{\left\|x^{t+1} - x^t\right\|_\infty}$.

        \begin{eqnarray*}
            \sum\limits_{t=0}^{T-1} (\lambda^t)^2 L_\infty^t = \sum\limits_{t=0}^{T-1} \frac{L_\infty^t}{L_\infty + \sum\limits_{i=0}^{t-1} L_\infty^i} \leqslant \sum\limits_{t=0}^{T-1} \frac{L_\infty^t}{\sum\limits_{i=0}^t L_\infty^i}.
        \end{eqnarray*}

        Continuing analogically to \eqref{thaLG:ineq} - \eqref{thaL:ineq4}, we get

        \begin{eqnarray*}
            \sum\limits_{t=0}^{T-1} (\lambda^t)^2 L_\infty^t \leqslant 2 \log\left(\frac{L_\infty T}{L_\infty^0}\right).
        \end{eqnarray*}

        Substituting this bound into \eqref{tha:ineq3} instead of \eqref{tha:ineq2}, we ends the proof of the remark.    
    \end{proof}
\end{remark}

\subsection{Stochastic gradient setting}\label{section:stochastic_gradient_adaptation_proofs}

In this section we denote $g^t_{\xi^t}$ the stochastic gradient at the $t$-th iteration (point $x^t$), according to the stochastic realization $\xi^t$ at the $t$-th iteration.

Before proceeding to the theoretical analysis of the algorithm, we present its formal description, Algorithm \ref{alg:sign-sgd_adaptation_stochgastic}, specifying which option for the sequence $d^t$ we use in practice and which one we analyze theoretically.

\begin{algorithm}[H]
\caption{\textsc{ALIAS} stochastic version}
   \label{alg:sign-sgd_adaptation_stochgastic}
\begin{algorithmic}[1]
    \State {\bfseries Input:} Starting point $x^0\hspace{-1mm}\in\hspace{-1mm}\mathbb R^d$, initial $L_\infty$-approximation $\eta^{-1} = 0$, initial $\Delta^*$-approximation $d^0$ $\in\hspace{-1mm}\mathbb R_{+}$, lower bound $\widetilde{f}$ on $f(x^*)$, number of iterations $T$
    \For{$t=0,\ldots,T-1$}
    \State Compute gradients $g_{\xi^t}^t, g_{\xi^t}^{t-1}$
    \State $\eta^t = \eta^{t-1} + \frac{\left\|g_{\xi^t}^t - g_{\xi^t}^{t-1}\right\|_1}{\left\|x^{t} - x^{t-1}\right\|_\infty};~ \lambda^t = \frac{1}{\sqrt{\eta^t}}$
    \If{$t\neq 0$}
    \State $\widetilde{d}^t = \sum\nolimits_{i=0}^{t-1}\gamma^i\langle g_{\xi^{i+1}}^{i+1}, \text{sign}(g_{\xi^{i+1}}^i)\rangle$
    \State $d^t = \max\left(d^{t-1}, \widetilde{d}^t\right)$
    \EndIf
    \State Option I (Practical): $\gamma^t = \lambda^t \sqrt{d^t}$
    \State Option II (Theoretical): $\gamma^t = \lambda^t \sqrt{f(x^0) - \widetilde{f}}$
    \State $x^{t+1} = x^t - \gamma^t \text{sign}(g_{\xi^t}^t)$
    \EndFor
\end{algorithmic}
\end{algorithm}

In the practical version of the algorithm, we use the stochastic gradient at the previous point with the current stochastic realization to update $d^t$. We use the same stochastic samples, similar to the update of the smoothness constant approximation, to reduce noise from the stochastic gradients.

We now proceed to the convergence analysis.

\begin{lemma}[Descent lemma]\label{descent_lemma_stochastic_aid_sign_sgd}
For Algorithm \ref{alg:sign-sgd_adaptation} under Assumptions \ref{as:stochastic_smoothness}, \ref{as:convexity}, \ref{as:func_optimum}, \ref{as:go_stochastic_gradient}, the following estimate is valid:
\begin{eqnarray*}
            \sum\limits_{t=0}^{T-1}\mathbb E\left[\frac{\gamma^t}{\sum\limits_{t=0}^{T-1} \gamma^t} \left\|\nabla f(x^t)\right\|_1\right] &\leqslant& \Delta^*\mathbb E \left[\frac{1}{\sum\limits_{t=0}^{T-1} \gamma^t}\right] + 2\sum\limits_{t=0}^{T-1}\mathbb E\left[\frac{\gamma^t\left\|\nabla f(x^t) - g_{\xi^t}^t\right\|_1}{\sum\limits_{t=0}^{T-1} \gamma^t}\right] \\
            & & + \sum\limits_{t=0}^{T-1}\mathbb E\left[\frac{\gamma^t\left\|\nabla f(x^{t+1}) - g_{\xi^{t+1}}^{t+1}\right\|_1}{\sum\limits_{t=0}^{T-1} \gamma^t}\right] \\
            & & + \sum\limits_{t=0}^{T-1}\mathbb E\left[\frac{\gamma^t\left\|\nabla f(x^{t}) - g_{\xi^{t+1}}^{t}\right\|_1}{\sum\limits_{t=0}^{T-1} \gamma^t}\right] + \mathbb E\left[\frac{\sum\limits_{t=0}^{T-1}(\gamma^t)^2 L_\infty^{t, \xi^{t+1}}}{\sum\limits_{t=0}^{T-1} \gamma^t}\right],
        \end{eqnarray*}
        where $L_\infty^{t, \xi^t} = \frac{\left\|g_{\xi^t}^{t+1} - g_{\xi^t}^{t}\right\|_1}{\left\|x^{t+1} - x^t\right\|_\infty}$.
    \begin{proof}
        \begin{eqnarray*}
            f(x^{t+1}) &\leqslant& f(x^t) + \left\langle \nabla f(x^{t+1}), x^{t+1} - x^t\right\rangle = f(x^t) - \gamma^t\left\langle \nabla f(x^{t+1}), \text{sign}\left(g^t_{\xi^t}\right)\right\rangle\\
            &=& f(x^t) - \gamma^t\left\|g^t_{\xi^t}\right\|_1 - \gamma^t\left\langle \nabla f(x^{t+1}) - g^t_{\xi^t}, \text{sign}\left(g^t_{\xi^t}\right)\right\rangle\\
            &\overset{\ref{bi:CS_conj}}{\leqslant}& f(x^t) - \gamma^t\left\|g^t_{\xi^t}\right\|_1 + \gamma^t\left\|\nabla f(x^{t+1}) - g^t_{\xi^t}\right\|_1\left\|\text{sign}\left(g^t_{\xi^t}\right)\right\|_\infty\\
            &\overset{\ref{bi:CS}}{\leqslant}& f(x^t) - \gamma^t \left\|\nabla f(x^t)\right\|_1 + 2\gamma^t\left\|\nabla f(x^t) - g_{\xi^t}^t\right\|_1 \\
            & & + \gamma^t \left\|\nabla f(x^{t+1}) - \nabla f(x^t)\right\|_1\left\|\text{sign}\left(g^t_{\xi^t}\right)\right\|_\infty \\
            &\overset{\ref{bi:CS}}{\leqslant}& f(x^t) - \gamma^t \left\|\nabla f(x^t)\right\|_1 + 2\gamma^t\left\|\nabla f(x^t) - g_{\xi^t}^t\right\|_1 + \gamma^t\left\|\nabla f(x^{t+1}) - g_{\xi^{t+1}}^{t+1}\right\|_1\\
            & & + \gamma^t\left\|\nabla f(x^{t}) - g_{\xi^{t+1}}^{t}\right\|_1 + \gamma^t\left\|g_{\xi^{t+1}}^{t+1} - g_{\xi^{t+1}}^{t}\right\|_1\left\|\text{sign}\left(g^t_{\xi^t}\right)\right\|_\infty\\
            &\overset{(i)}{=}&  f(x^t) - \gamma^t\left\|\nabla f(x^t)\right\|_1 + 2\gamma^t\left\|\nabla f(x^t) - g_{\xi^t}^t\right\|_1 + \gamma^t\left\|\nabla f(x^{t+1}) - g_{\xi^{t+1}}^{t+1}\right\|_1 \\
            & & + \gamma^t\left\|\nabla f(x^{t}) - g_{\xi^{t+1}}^{t}\right\|_1 + \gamma^t\frac{\left\|g_{\xi^{t+1}}^{t+1} - g_{\xi^{t+1}}^{t}\right\|_1}{\left\|x^{t+1} - x^{t}\right\|_\infty}\left\|x^{t+1} - x^{t}\right\|_\infty\\
            &=& f(x^t) - \gamma^t\left\|\nabla f(x^t)\right\|_1 + 2\gamma^t\left\|\nabla f(x^t) - g_{\xi^t}^t\right\|_1 + \gamma^t\left\|\nabla f(x^{t+1}) - g_{\xi^{t+1}}^{t+1}\right\|_1 \\
            & & + \gamma^t\left\|\nabla f(x^{t}) - g_{\xi^{t+1}}^{t}\right\|_1 + \left(\gamma^t\right)^2\frac{\left\|g_{\xi^{t+1}}^{t+1} - g_{\xi^{t+1}}^{t}\right\|_1}{\left\|x^{t+1} - x^{t}\right\|_\infty},
        \end{eqnarray*}
        where in (\textit{i}) we assume $\left\|x^{t+1} - x^t\right\|_\infty \neq 0$. Indeed, $\left\|x^{t+1} - x^t\right\|_\infty = 0$ follows from the equality $\text{sign}\left(g_{\xi^t}^t\right) = 0$, which means $\left\|\text{sign}\left(g_{\xi^t}^t\right)\right\|_\infty = 0$ and at the $t$-th iteration this term equals zero. Thus, we can omit these iterations and consider this term only when it is non-zero, without any limitations. Now we denote $L_\infty^{t, \xi^t} = \frac{\left\|g_{\xi^t}^{t+1} - g_{\xi^t}^{t}\right\|_1}{\left\|x^{t+1} - x^t\right\|_\infty}$.
        Summing over all iterations, we obtain
        \begin{eqnarray*}
            \sum\limits_{t=0}^{T-1} \gamma^t\left\|\nabla f(x^t)\right\|_1 &\leqslant& \sum\limits_{t=0}^{T-1}  f(x^t) - f(x^{t+1}) + 2\sum\limits_{t=0}^{T-1}\gamma^t\left\|\nabla f(x^t) - g_{\xi^t}^t\right\|_1 \\
            & & + \sum\limits_{t=0}^{T-1}\gamma^t\left\|\nabla f(x^{t+1}) - g_{\xi^{t+1}}^{t+1}\right\|_1 + \sum\limits_{t=0}^{T-1}\gamma^t\left\|\nabla f(x^{t}) - g_{\xi^{t+1}}^{t}\right\|_1 \\
            & & + \sum\limits_{t=0}^{T-1}(\gamma^t)^2 L_\infty^{t, \xi^{t+1}} \\
            &=& f(x^0) - f(x^T) + 2\sum\limits_{t=0}^{T-1}\gamma^t\left\|\nabla f(x^t) - g_{\xi^t}^t\right\|_1 \\
            & & + \sum\limits_{t=0}^{T-1}\gamma^t\left\|\nabla f(x^{t+1}) - g_{\xi^{t+1}}^{t+1}\right\|_1 + \sum\limits_{t=0}^{T-1}\gamma^t\left\|\nabla f(x^{t}) - g_{\xi^{t+1}}^{t}\right\|_1 \\
            & & + \sum\limits_{t=0}^{T-1}(\gamma^t)^2 L_\infty^{t, \xi^{t+1}} \\
            &\leqslant& \Delta^* + 2\sum\limits_{t=0}^{T-1}\gamma^t\left\|\nabla f(x^t) - g_{\xi^t}^t\right\|_1 + \sum\limits_{t=0}^{T-1}\gamma^t\left\|\nabla f(x^{t+1}) - g_{\xi^{t+1}}^{t+1}\right\|_1 \\
            & & + \sum\limits_{t=0}^{T-1}\gamma^t\left\|\nabla f(x^{t}) - g_{\xi^{t+1}}^{t}\right\|_1 + \sum\limits_{t=0}^{T-1}(\gamma^t)^2 L_\infty^{t, \xi^{t+1}}.
        \end{eqnarray*}
        We divide both sides of inequality on $\sum_{t=0}^{T-1} \gamma^t$.
        \begin{eqnarray*}
            \sum\limits_{t=0}^{T-1} \frac{\gamma^t}{\sum\limits_{t=0}^{T-1} \gamma^t}\left\|\nabla f(x^t)\right\|_1 &\leqslant& \frac{\Delta^*}{\sum\limits_{t=0}^{T-1} \gamma^t} + 2\sum\limits_{t=0}^{T-1}\frac{\gamma^t\left\|\nabla f(x^t) - g_{\xi^t}^t\right\|_1}{\sum\limits_{t=0}^{T-1} \gamma^t} \\
            & &+ \sum\limits_{t=0}^{T-1}\frac{\gamma^t\left\|\nabla f(x^{t+1}) - g_{\xi^{t+1}}^{t+1}\right\|_1}{\sum\limits_{t=0}^{T-1} \gamma^t} + \sum\limits_{t=0}^{T-1}\frac{\gamma^t\left\|\nabla f(x^{t}) - g_{\xi^{t+1}}^{t}\right\|_1}{\sum\limits_{t=0}^{T-1} \gamma^t} \\
            & & + \sum\limits_{t=0}^{T-1} \frac{(\gamma^t)^2 L_\infty^{t, \xi^{t+1}}}{\sum\limits_{t=0}^{T-1} \gamma^t}.
        \end{eqnarray*}
        Taking expectation, we obtain the final result of the lemma:
        \begin{eqnarray*}
            \sum\limits_{t=0}^{T-1}\mathbb E\left[\frac{\gamma^t}{\sum\limits_{t=0}^{T-1} \gamma^t} \left\|\nabla f(x^t)\right\|_1\right] &\leqslant& \mathbb E \left[\frac{\Delta^*}{\sum\limits_{t=0}^{T-1} \gamma^t}\right] + 2\sum\limits_{t=0}^{T-1}\mathbb E\left[\frac{\gamma^t\left\|\nabla f(x^t) - g_{\xi^t}^t\right\|_1}{\sum\limits_{t=0}^{T-1} \gamma^t}\right] \\
            & & + \sum\limits_{t=0}^{T-1}\mathbb E\left[\frac{\gamma^t\left\|\nabla f(x^{t+1}) - g_{\xi^{t+1}}^{t+1}\right\|_1}{\sum\limits_{t=0}^{T-1} \gamma^t}\right] \\
            & &+ \sum\limits_{t=0}^{T-1}\mathbb E\left[\frac{\gamma^t\left\|\nabla f(x^{t}) - g_{\xi^{t+1}}^{t}\right\|_1}{\sum\limits_{t=0}^{T-1} \gamma^t}\right] + \sum\limits_{t=0}^{T-1} \mathbb E\left[\frac{(\gamma^t)^2 L_\infty^{t, \xi^{t+1}}}{\sum\limits_{t=0}^{T-1} \gamma^t}\right]\\
            &=& \Delta^*\mathbb E \left[\frac{1}{\sum\limits_{t=0}^{T-1} \gamma^t}\right] + 2\sum\limits_{t=0}^{T-1}\mathbb E\left[\frac{\gamma^t\left\|\nabla f(x^t) - g_{\xi^t}^t\right\|_1}{\sum\limits_{t=0}^{T-1} \gamma^t}\right] \\
            & & + \sum\limits_{t=0}^{T-1}\mathbb E\left[\frac{\gamma^t\left\|\nabla f(x^{t+1}) - g_{\xi^{t+1}}^{t+1}\right\|_1}{\sum\limits_{t=0}^{T-1} \gamma^t}\right] \\
            & & + \sum\limits_{t=0}^{T-1}\mathbb E\left[\frac{\gamma^t\left\|\nabla f(x^{t}) - g_{\xi^{t+1}}^{t}\right\|_1}{\sum\limits_{t=0}^{T-1} \gamma^t}\right] + \mathbb E\left[\frac{\sum\limits_{t=0}^{T-1}(\gamma^t)^2 L_\infty^{t, \xi^{t+1}}}{\sum\limits_{t=0}^{T-1} \gamma^t}\right].
        \end{eqnarray*}
    \end{proof}
\end{lemma}

\begin{theorem}[\textbf{Theorem \ref{th:adaptation_stochastic_main}}]\label{th:adaptation_stochastic_appendix}
    Suppose Assumptions \ref{as:stochastic_smoothness}, \ref{as:convexity}, \ref{as:func_optimum}, \ref{as:go_stochastic_gradient} hold. Then Algorithm \ref{alg:sign-sgd_adaptation} with Option II to reach $\varepsilon$-accuracy, where $\varepsilon \geqslant \sum\nolimits_{t=0}^{T-1}\mathbb E\left[\frac{\gamma^t}{\sum\nolimits_{t=0}^{T-1} \gamma^t} \left\|\nabla f(x^t)\right\|_1\right]$ needs
    \begin{align*}
        \mathcal{\widetilde{O}}\left(\frac{\Delta^* \left(L_{\infty}\right)^{3}}{\varepsilon^2}\left(\mathbb E \left(\frac{1}{L_\infty^{0, \xi^1}}\right)^2\right) + \left\|\sigma\right\|_1^2 L_\infty\left(\mathbb E \frac{1}{\underset{0\leqslant t\leqslant T-1}{\min} L_\infty^{t, \xi^{t+1}}}\right) \right) ~~\text{iterations},
    \end{align*}
    where $L_\infty^{t, \xi^{t+1}} = \frac{\left\|g_{\xi^{t+1}}^{t+1} - g_{\xi^{t}}^{t}\right\|_1}{\left\|x^{t+1} - x^t\right\|_\infty}$.
    \begin{proof}
        Let us start with the result of Lemma \ref{descent_lemma_stochastic_aid_sign_sgd}:
        \begin{eqnarray*}
            \sum\limits_{t=0}^{T-1}\mathbb E\left[\frac{\gamma^t}{\sum\limits_{t=0}^{T-1} \gamma^t} \left\|\nabla f(x^t)\right\|_1\right] &\leqslant& \Delta^*\mathbb E \left[\frac{1}{\sum\limits_{t=0}^{T-1} \gamma^t}\right] + 2\sum\limits_{t=0}^{T-1}\mathbb E\left[\frac{\gamma^t\left\|\nabla f(x^t) - g_{\xi^t}^t\right\|_1}{\sum\limits_{t=0}^{T-1} \gamma^t}\right] \\
            & & + \sum\limits_{t=0}^{T-1}\mathbb E\left[\frac{\gamma^t\left\|\nabla f(x^{t+1}) - g_{\xi^{t+1}}^{t+1}\right\|_1}{\sum\limits_{t=0}^{T-1} \gamma^t}\right] \\
            & & + \sum\limits_{t=0}^{T-1}\mathbb E\left[\frac{\gamma^t\left\|\nabla f(x^{t}) - g_{\xi^{t+1}}^{t}\right\|_1}{\sum\limits_{t=0}^{T-1} \gamma^t}\right] + \mathbb E\left[\frac{\sum\limits_{t=0}^{T-1}(\gamma^t)^2 L_\infty^{t, \xi^{t+1}}}{\sum\limits_{t=0}^{T-1} \gamma^t}\right].
        \end{eqnarray*}
        Using \eqref{bi:CS_holder} with $p=q=2$, we rewrite it in the following form:
        \begin{eqnarray}
            \notag \sum\limits_{t=0}^{T-1}\mathbb E\left[\frac{\gamma^t}{\sum\limits_{t=0}^{T-1} \gamma^t} \left\|\nabla f(x^t)\right\|_1\right] &\leqslant& \Delta^*\mathbb E \left[\frac{1}{\sum\limits_{t=0}^{T-1} \gamma^t}\right] \\
            \notag& & + 2\sum\limits_{t=0}^{T-1}\left(\mathbb E\left\|\nabla f(x^t) - g_{\xi^t}^t\right\|_1^2\right)^\frac{1}{2}\left(\mathbb E\left[\frac{\gamma^t}{\sum\limits_{t=0}^{T-1} \gamma^t}\right]^2\right)^\frac{1}{2}\\
            \notag& & + \sum\limits_{t=0}^{T-1}\left(\mathbb E\left\|\nabla f(x^{t+1}) - g_{\xi^{t+1}}^{t+1}\right\|_1^2\right)^\frac{1}{2}\left(\mathbb E\left[\frac{\gamma^t}{\sum\limits_{t=0}^{T-1} \gamma^t}\right]^2\right)^\frac{1}{2}\\
            \notag& & + \sum\limits_{t=0}^{T-1}\left(\mathbb E\left\|\nabla f(x^t) - g_{\xi^{t+1}}^t\right\|_1^2\right)^\frac{1}{2}\left(\mathbb E\left[\frac{\gamma^t}{\sum\limits_{t=0}^{T-1} \gamma^t}\right]^2\right)^\frac{1}{2}\\
            \label{thsa:ineq1}& & + \left(\mathbb E\left[\sum\limits_{t=0}^{T-1}(\gamma^t)^2 L_\infty^{t, \xi^{t+1}}\right]^2\right)^\frac{1}{2}\left(\mathbb E\left[\frac{1}{\sum\limits_{t=0}^{T-1} \gamma^t}\right]^2\right)^\frac{1}{2}.
        \end{eqnarray}
        Now we use our choice of $\gamma^t$. Let us firstly estimate the denominator that is exactly $\lambda^t=\frac{1}{\sqrt{\sum\limits_{i=0}^{t-1}\frac{\left\|g_{\xi^{i+1}}^{i+1} - g_{\xi^{i+1}}^{i}\right\|_1}{\left\|x^{i+1} - x^i\right\|_\infty}}} = \frac{1}{\sqrt{\sum\limits_{i=0}^{t-1} L_\infty^{i, \xi^{i+1}}}}$. Let us estimate the following term.

\begin{eqnarray*}
    \sum\limits_{t=0}^{T-1} (\lambda^t)^2 L_\infty^{t, \xi^{t+1}} = \sum\limits_{t=0}^{T-1} \frac{L_\infty^{t, \xi^{t+1}}}{\sum\limits_{i=0}^{t-1} L_\infty^{i, \xi^{i+1}}}.
\end{eqnarray*}

We mention, that each $L_\infty^{i, \xi^{i+1}}$ is bounded from the definition of smoothness (see Assumption \ref{as:stochastic_smoothness}), i.e., $L_\infty^{i, \xi^{i+1}} \leqslant L_\infty$. We consider the sequence $\left\{L_\infty^{i, \xi^{i+1}}\right\}_{i=0}^{T-1}$. Since each term in this sequence is bounded, there exists $r$ such that $\sum\limits_{i=0}^{r-2} L_\infty^{i, \xi^{i+1}} \leqslant L_\infty^{r-1, \xi^{r}}$ and for each $t \geqslant r-1$ such that $\sum\limits_{i=0}^t L_\infty^{i, \xi^{i+1}} \geqslant L_\infty^{t+1, \xi^{t+2}}$. In that way, we divide the sum into two parts:

\begin{eqnarray}
    \label{thsaL:ineq1}\sum\limits_{t=0}^{T-1} \frac{L_\infty^{t, \xi^{t+1}}}{\sum\limits_{i=0}^{t-1} L_\infty^{i, \xi^{i+1}}} = \sum\limits_{t=0}^{r-1} \frac{L_\infty^{t, \xi^{t+1}}}{\sum\limits_{i=0}^{t-1} L_\infty^{i, \xi^{i+1}}} + \sum\limits_{t=r}^{T-1} \frac{L_\infty^{t, \xi^{t+1}}}{\sum\limits_{i=0}^{t-1} L_\infty^{i, \xi^{i+1}}}.
\end{eqnarray}

Considering the first sum in \eqref{thsaL:ineq1}, we mention, that we can estimate the denominator as $\sum_{i=0}^{t-1} L_\infty^{i, \xi^{i+1}} \geqslant L_\infty^{0, \xi^{1}}$. As for the numerator. Thus,

\begin{eqnarray}
    \label{thsaL:ineq2}\sum\limits_{t=0}^{r-1}\frac{L_\infty^{t, \xi^{t+1}}}{\sum\limits_{i=0}^{t-1}L_\infty^{i, \xi^{i+1}}} \leqslant \frac{1}{L_\infty^{0, \xi^{1}}}\left(\sum\limits_{t=0}^{r-2} L_\infty^{t, \xi^{t+1}} + L_\infty^{r-1, \xi^{r}}\right) \leqslant \frac{2 L_\infty^{r-1, \xi^{r}}}{L_\infty^{0, \xi^{1}}}\leqslant\frac{2L_\infty}{L_\infty^{0, \xi^{1}}}.
\end{eqnarray}

Considering the second sum in \eqref{thsaL:ineq1}, we have

\begin{eqnarray*}
    \sum\limits_{t=r}^{T-1} \frac{L_\infty^{t, \xi^{t+1}}}{\sum\limits_{i=0}^{t-1} L_\infty^{i, \xi^{i+1}}} = \sum\limits_{t=r}^{T-1} \frac{L_\infty^{t, \xi^{t+1}}}{\frac{1}{2}\sum\limits_{i=0}^{t-1} L_\infty^{i, \xi^{i+1}} + \frac{1}{2}\sum\limits_{i=0}^{t-1} L_\infty^{i, \xi^{i+1}}}.
\end{eqnarray*}

Estimating any of the sums in the denominator, we claim, that $\sum\limits_{i=0}^{t-1} L_\infty^{i, \xi^{i+1}} \geqslant L_\infty^{t, \xi^{t+1}}$, since $t-1 \geqslant r-1$. In that way,

\begin{eqnarray}
    \label{thsaL:ineq3}\sum\limits_{t=r}^{T-1} \frac{L_\infty^{t, \xi^{t+1}}}{\sum\limits_{i=0}^{t-1} L_\infty^{i, \xi^{i+1}}} \leqslant \sum\limits_{t=r}^{T-1} \frac{2 L_\infty^{t, \xi^{t+1}}}{\sum\limits_{i=0}^{t} L_\infty^{i, \xi^{i+1}}} \leqslant 2\sum\limits_{t=0}^{T-1} \frac{L_\infty^{t, \xi^{t+1}}}{\sum\limits_{i=0}^{t} L_\infty^{i, \xi^{i+1}}}.
\end{eqnarray}

Next we denote $s^t = \sum\limits_{i=0}^t L_\infty^{t, \xi^{t+1}}$ and have
\begin{eqnarray}
    \label{thsaLG:ineq}L_\infty^{t, \xi^{t+1}}\frac{1}{\sum\limits_{i=0}^{t} L_\infty^{i, \xi^{i+1}}} = (s^t - s^{t-1})\frac{1}{\sum\limits_{i=0}^{t} L_\infty^{i, \xi^{i+1}}} = \int\limits_{s^{t-1}}^{s^t} \frac{1}{\sum\limits_{i=0}^{t} L_\infty^{i, \xi^{i+1}}} dx \overset{(i)}{\leqslant} \int\limits_{s^{t-1}}^{s^t} \frac{1}{x} dx,
\end{eqnarray}
where (\textit{i}) was done due to $\frac{1}{x}$ is a non-increasing function on $(0, +\infty)$. Summing over $t$, we obtain
\begin{eqnarray*}
    2\sum\limits_{t=1}^{T}\frac{L_\infty^{t, \xi^{t+1}}}{\sum\limits_{i=0}^{t} L_\infty^{i, \xi^{i+1}}} \leqslant 2\int\limits_{s^0}^{s^T} \frac{1}{x}dx = 2\log(s^T) - 2\log(s^0) = 2\log\left(\frac{\sum\limits_{t=1}^T L_\infty^{t, \xi^{t+1}}}{L_\infty^{0, \xi^{1}}}\right) \leqslant 2\log\left(\frac{L_\infty T}{L_\infty^{0, \xi^{1}}}\right).
\end{eqnarray*}

Combining this estimate with \eqref{thsaL:ineq3},

\begin{eqnarray}
    \label{thsaL:ineq4}\sum\limits_{t=r}^{T-1} \frac{L_\infty^{t, \xi^{t+1}}}{\sum\limits_{i=0}^{t-1} L_\infty^{i, \xi^{i+1}}} \leqslant 2\sum\limits_{t=1}^{T}\frac{L_\infty^{t, \xi^{t+1}}}{\sum\limits_{i=0}^{t} L_\infty^{i, \xi^{i+1}}} + 2 \leqslant 2\left(\log\left(\frac{L_\infty T}{L_\infty^{0, \xi^{1}}}\right) + 1\right) \leqslant 4\log\left(\frac{L_\infty T}{L_\infty^{0, \xi^{1}}}\right).
\end{eqnarray}

Substituting \eqref{thsaL:ineq2} and \eqref{thsaL:ineq4} into \eqref{thsaL:ineq1}, we obtain

\begin{eqnarray}
    \label{thsa:ineq2}\sum\limits_{t=0}^{T-1} (\lambda^t)^2 L_\infty^{t, \xi^{t+1}} \leqslant 2\frac{L_\infty}{L_\infty^{0, \xi^{1}}} + 4\log\left(\frac{L_\infty T}{L_\infty^{0, \xi^{1}}}\right).
\end{eqnarray}

We additionally note, that if $r > T-1$, only first term remains in this estimate, consequently our bound \eqref{thsa:ineq2} is correct.
        Next, we estimate
        \begin{eqnarray}
        \label{thsa:ineq3}
            \frac{1}{\sum\limits_{t=0}^{T-1} \lambda^t} = \frac{1}{\sum\limits_{t=0}^{T-1} \frac{1}{\sqrt{L_\infty + \sum\limits_{i=0}^{t-1} L_\infty^{i, \xi^{i+1}}}}} \leqslant \frac{\sqrt{L_\infty}}{\sum\limits_{t=0}^{T-1} \frac{1}{\sqrt{t+1}}} \leqslant \frac{\sqrt{L_\infty}}{\sqrt{T}}.
        \end{eqnarray}
        Now we estimate the second, third and forth terms in \eqref{thsa:ineq1}. In the same manner, as in \eqref{thsb:ineq15}, we can estimate
        \begin{eqnarray}
            \notag\mathbb E\left\|\nabla f(x^t) - g_{\xi^t}^t\right\|_1^2 &\leqslant& \|\sigma\|_1^2,\\
            \label{thsa:sigma}\mathbb E\left\|\nabla f(x^{t+1}) - g_{\xi^{t+1}}^{t+1}\right\|_1^2 &\leqslant& \|\sigma\|_1^2,\\
            \notag\mathbb E\left\|\nabla f(x^t) - g_{\xi^{t+1}}^t\right\|_1^2 &\leqslant& \|\sigma\|_1^2,
        \end{eqnarray}
        where the last inequality is correct due to the fact that that stochastic realization $\xi^{t+1}$ is independent from the point $x^t$. Thus, using \eqref{thsa:ineq3},
        \begin{align}
            \notag\sum\limits_{t=0}^{T-1}\left(\mathbb E\left\|\nabla f(x^t) - g_{\xi^t}^t\right\|_1^2\right)^\frac{1}{2}&\cdot\left(\mathbb E\left[\frac{\gamma^t}{\sum\limits_{t=0}^{T-1} \gamma^t}\right]^2\right)^\frac{1}{2} \\
            \notag&\leqslant\frac{\sqrt{L_\infty} \|\sigma\|_1}{\sqrt{T}}\sum\limits_{t=0}^{T-1} \left(\mathbb E\frac{1}{\sum\limits_{i=0}^{t-1} L_\infty^{i, \xi^{i+1}}}\right)^{\frac{1}{2}} \\
            \notag&\leqslant \frac{\sqrt{L_\infty} \|\sigma\|_1}{\sqrt{T}}\left(\mathbb E \frac{1}{\underset{0\leqslant t\leqslant T-1}{\min} L_\infty^{t, \xi^{t+1}}}\right)^{\frac{1}{2}} \sum\limits_{t=0}^{T-1}\frac{1}{\sqrt{t+1}}\\
            \notag&\leqslant 2\sqrt{L_\infty} \|\sigma\|_1\left(\mathbb E \frac{1}{\underset{0\leqslant t\leqslant T-1}{\min} L_\infty^{t, \xi^{t+1}}}\right)^{\frac{1}{2}}.
        \end{align}
        It is clear that we can bound the rest two terms in the same manner. Now, substituting this estimate along with \eqref{thsa:ineq2} and \eqref{thsa:ineq3} into \eqref{thsa:ineq1}, we obtain
        \begin{eqnarray}
            \notag\sum\limits_{t=0}^{T-1}\mathbb E\left[\frac{\gamma^t}{\sum\limits_{t=0}^{T-1} \gamma^t} \left\|\nabla f(x^t)\right\|_1\right] &\leqslant& \frac{\Delta^*\sqrt{L_\infty}}{\sqrt{(f(x^0) - \widetilde{f}) T}} \\
            \notag& & + 8\sqrt{L_\infty} \|\sigma\|_1\left(\mathbb E \frac{1}{\underset{0\leqslant t\leqslant T-1}{\min} L_\infty^{t, \xi^{t+1}}}\right)^{\frac{1}{2}} \\
            \notag& & + 8\frac{(f(x^0) - \widetilde{f})\sqrt{L_\infty}}{\sqrt{(f(x^0) - \widetilde{f})T}}\left(\mathbb E\log^2\left(\frac{L_\infty T}{L_\infty^{0, \xi^1}}\right)\right)^{\frac{1}{2}}\\
            \label{thsa:ineq4}& & + 4\frac{(f(x^0) - \widetilde{f})\sqrt{L_\infty}}{\sqrt{(f(x^0) - \widetilde{f})T}}\left(\mathbb E \left(\frac{L_\infty}{L_\infty^{0, \xi^1}}\right)^2\right)^{\frac{1}{2}}.
        \end{eqnarray}
        Now we use $\Delta^* \leqslant f(x^0) - \widetilde{f}$ to obtain the final estimate:
        \begin{eqnarray*}
        \sum\limits_{t=0}^{T-1}\mathbb E\left[\frac{\gamma^t}{\sum\limits_{t=0}^{T-1} \gamma^t} \left\|\nabla f(x^t)\right\|_1\right] &\leqslant& 13\frac{\sqrt{(f(x^0) - \widetilde{f})}\left(L_\infty\right)^{\frac{3}{2}}}{T}\left(\mathbb E \left(\frac{1}{L_\infty^{0, \xi^1}}\right)^2\right)^{\frac{1}{2}}\\
        & & \cdot\left(\mathbb E\log^2\left(\frac{L_\infty T}{L_\infty^{0, \xi^1}}\right)\right)^{\frac{1}{2}} \\
        & & + 8\|\sigma\|_1\left(\sqrt{L_\infty}\left(\mathbb E \frac{1}{\underset{0\leqslant t\leqslant T-1}{\min} L_\infty^{t, \xi^{t+1}}}\right)^{\frac{1}{2}}\right).
        \end{eqnarray*}
        Expressing the number of iterations and using $\varepsilon \geqslant \sum\limits_{t=0}^{T-1}\mathbb E\left[\frac{\gamma^t}{\sum\limits_{t=0}^{T-1} \gamma^t} \left\|\nabla f(x^t)\right\|_1\right]$ as a criterion, we obtain that the algorithm needs $\mathcal{\widetilde{O}}\left(\frac{\Delta^* \left(L_{\infty}\right)^{3}}{\varepsilon^2}\left(\mathbb E \left(\frac{1}{L_\infty^{0, \xi^1}}\right)^2\right) + \left\|\sigma\right\|_1^2 L_\infty\left(\mathbb E \frac{1}{\underset{0\leqslant t\leqslant T-1}{\min} L_\infty^{t, \xi^{t+1}}}\right) \right)$ iterations to reach $\varepsilon$-accuracy.
    \end{proof}
\end{theorem}

\begin{remark}[Remark \ref{rem:adaptation_stochastic_main}]\label{rem:adaptation_stochastic_appendix}
    Under conditions of Theorem \ref{th:adaptation_stochastic_main} Algorithm \ref{alg:sign-sgd_adaptation} with $\lambda^t = \frac{1}{\sqrt{L_\infty + \sum\limits_{i=0}^{t-1}\frac{\left\|g_{\xi^{i+1}}^{i+1} - g_{\xi^{i}}^{i}\right\|_1}{\left\|x^{i+1} - x^i\right\|_\infty}}}$, Option II and mini-batch of the size $t+1$ at $t$-th iteration to reach $\varepsilon$-accuracy needs
    \begin{eqnarray*}
        \mathcal{\widetilde{O}}\left(\frac{\Delta^* L_{\infty}}{\varepsilon^2} + \frac{\left\|\sigma\right\|_1^2 L_\infty}{\varepsilon^2}\left(\mathbb E \frac{1}{\underset{0\leqslant t\leqslant T-1}{\min} L_\infty^{t, \xi^{t+1}}}\right) \right) ~~ \text{iterations,}
    \end{eqnarray*}
    where $\varepsilon \geqslant \frac{1}{T}\sum\limits_{t=0}^{T-1}\left\|\nabla f(x^t)\right\|_1, L_\infty^{t, \xi^{t+1}} = \frac{\left\|g_{\xi^{t+1}}^{t+1} - g_{\xi^{t}}^{t}\right\|_1}{\left\|x^{t+1} - x^t\right\|_\infty}$.
    \begin{proof}
        The proof of the remark repeats the proof of Theorem \ref{th:adaptation_stochastic_main} except for the estimate on $\sum\limits_{t=0}^{T-1} (\lambda^t)^2 L_\infty^{t, \xi^{t+1}}$ term and $\mathbb E\left\|\nabla f(x^t) - g_{\xi^t}^t\right\|_1^2$ term. Let us derive them. We use definition $L_\infty^{t, \xi^{t+1}} = \frac{\left\|g_{\xi^{t+1}}^{t+1} - g_{\xi^{t}}^{t}\right\|_1}{\left\|x^{t+1} - x^t\right\|_\infty}$.

        \begin{eqnarray*}
            \sum\limits_{t=0}^{T-1} (\lambda^t)^2 L_\infty^{t, \xi^{t+1}} = \sum\limits_{t=0}^{T-1} \frac{L_\infty^{t, \xi^{t+1}}}{L_\infty + \sum\limits_{i=0}^{t-1} L_\infty^{i, \xi^{i+1}}} \leqslant \sum\limits_{t=0}^{T-1} \frac{L_\infty^{t, \xi^{t+1}}}{\sum\limits_{i=0}^t L_\infty^{i, \xi^{i+1}}}.
        \end{eqnarray*}

        Continuing analogically to \eqref{thsaLG:ineq} - \eqref{thsaL:ineq4}, we get

        \begin{eqnarray*}
            \sum\limits_{t=0}^{T-1} (\lambda^t)^2 L_\infty^t \leqslant 2 \log\left(\frac{L_\infty T}{L_\infty^{0, \xi^1}}\right).
        \end{eqnarray*}

        We substitute this bound into \eqref{thsa:ineq4} instead of \eqref{thsa:ineq2}. Next, since we now use mini-batches, we can bound
        \begin{eqnarray}
            \notag\mathbb E\left\|\nabla f(x^t) - g_{\xi^t}^t\right\|_1^2 &\leqslant& \frac{\|\sigma\|_1^2}{t+1},\\
            \notag\mathbb E\left\|\nabla f(x^{t+1}) - g_{\xi^{t+1}}^{t+1}\right\|_1^2 &\leqslant& \frac{\|\sigma\|_1^2}{t+2},\\
            \notag\mathbb E\left\|\nabla f(x^t) - g_{\xi^{t+1}}^t\right\|_1^2 &\leqslant& \frac{\|\sigma\|_1^2}{t+1},
        \end{eqnarray}
        instead of \eqref{thsa:sigma}. In that way,

        \begin{align}
            \notag\sum\limits_{t=0}^{T-1}\left(\mathbb E\left\|\nabla f(x^t) - g_{\xi^t}^t\right\|_1^2\right)^\frac{1}{2}&\cdot\left(\mathbb E\left[\frac{\gamma^t}{\sum\limits_{t=0}^{T-1} \gamma^t}\right]^2\right)^\frac{1}{2} \\
            \notag&\leqslant\frac{\sqrt{L_\infty} \|\sigma\|_1}{\sqrt{T}}\sum\limits_{t=0}^{T-1} \frac{1}{\sqrt{t+1}}\left(\mathbb E\frac{1}{\sum\limits_{i=0}^{t-1} L_\infty^{i, \xi^{i+1}}}\right)^{\frac{1}{2}} \\
            \notag&\leqslant \frac{\sqrt{L_\infty} \|\sigma\|_1}{\sqrt{T}}\left(\mathbb E \frac{1}{\underset{0\leqslant t\leqslant T-1}{\min} L_\infty^{t, \xi^{t+1}}}\right)^{\frac{1}{2}} \sum\limits_{t=0}^{T-1}\frac{1}{t+1}\\
            \notag&\leqslant 2\frac{\sqrt{L_\infty} \|\sigma\|_1}{\sqrt{T}}\left(\mathbb E \frac{1}{\underset{0\leqslant t\leqslant T-1}{\min} L_\infty^{t, \xi^{t+1}}}\right)^{\frac{1}{2}}\log(T),
        \end{align}

        which ends the proof of the remark.
        
    \end{proof}
\end{remark}

\subsection{Distributed setting}\label{section:distributed_adaptation_proofs}

We remind, that in distributed setting we consider Assumption \ref{as:go_distributed_learning}. We present the theoretical result with the following approximation of $L_\infty$ in Algorithm \ref{alg:sign-sgd_adaptation}:
\begin{align*}
    \lambda^t =\frac{1}{\sqrt{\sum\nolimits_{i=0}^{t-1} \frac{1}{M}\sum\nolimits_{j=1}^M\frac{\left\|g_{j,\xi^{i+1}}^{i+1} - g_{j,\xi^{i+1}}^{i}\right\|_1}{\left\|x^{i+1} - x^i\right\|_\infty}}}.
\end{align*} 

In this section, we denote $g_{j,\xi^t}^t$ the stochastic gradient from the $j$-th device, computed at the $t$-th iteration, according to the stochastic realization $\xi^t$.

\begin{lemma}[Descent lemma]\label{descent_lemma_distributed_aid_sign_sgd}
For Algorithm \ref{alg:sign-sgd_adaptation} under Assumptions \ref{as:stochastic_smoothness}, \ref{as:convexity}, \ref{as:func_optimum}, \ref{as:go_distributed_learning}, the following estimate is valid:
\begin{eqnarray*}
            \sum\limits_{t=0}^{T-1} \mathbb E\left[\gamma^t \left\|\nabla f(x^t)\right\|_1\right] &\leqslant& \Delta^*\mathbb E\left[\frac{1}{\sum\limits_{t=0}^{T-1} \gamma^t}\right] + 2\sum\limits_{t=0}^{T-1}\mathbb E\left[\frac{\gamma^t \widetilde{\delta}^t}{\sum\limits_{t=0}^{T-1} \gamma^t}\right] \\
            & & + \sum\limits_{t=0}^{T-1}\mathbb E\left[\frac{\gamma^t\frac{1}{M}\sum\limits_{j=1}^M\|\nabla f(x^t) - g_{j,\xi^{t+1}}^{t}\|_1}{\sum\limits_{t=0}^{T-1} \gamma^t}\right]\\
            & & + \sum\limits_{t=0}^{T-1}\mathbb E\left[\frac{\gamma^t\frac{1}{M}\sum\limits_{j=1}^M\|\nabla f(x^{t+1}) - g_{j,\xi^{t+1}}^{t+1}\|_1}{\sum\limits_{t=0}^{T-1} \gamma^t}\right] \\
            & & + \mathbb E\left[\frac{\sum\limits_{t=0}^{T-1}(\gamma^t)^2 L_\infty^{t, \xi^{t+1}}}{\sum\limits_{t=0}^{T-1} \gamma^t}\right],
        \end{eqnarray*}
        where $\widetilde{\delta}^{t} = \sum\limits_{i=1}^d \left|\left[\nabla f(x^t)\right]_i\right|\mathbb I\left(\text{sign}\left(\sum\limits_{j=1}^M \text{sign}\left(\left[g_{j,\xi^t}^t\right]_i\right)\right) \neq \text{sign}\left(\left[\nabla f(x^t)\right]_i\right)\right)$\\
        and $L_\infty^{t, \xi^t} = \frac{1}{M}\sum\limits_{j=1}^M\frac{\left\|g_{j,\xi^{t}}^{t+1} - g_{j,\xi^{t}}^{t}\right\|_1}{\left\|x^{t+1} - x^t\right\|_\infty}$.
    \begin{proof}
        \begin{eqnarray}
            \notag f(x^{t+1}) - f(x^t) &\leqslant& \langle\nabla f(x^{t+1}), x^{t+1} - x^t\rangle  \\
            \notag&=& - \gamma^t\left\langle \nabla f(x^{t+1}), \text{sign}\left(\sum\limits_{j=1}^M\text{sign}\left(g_{j,\xi^t}^t\right)\right)\right\rangle\\
            \notag & =& - \gamma^t\left\langle \nabla f(x^t), \text{sign}\left(\sum\limits_{j=1}^M\text{sign}\left(g_{j,\xi^t}^t\right)\right)\right\rangle \\
            \notag& & - \gamma^t\left\langle \nabla f(x^{t+1}) - \nabla f(x^t), \text{sign}\left(\sum\limits_{j=1}^M\text{sign}\left(g_{j,\xi^t}^t\right)\right)\right\rangle\\
           \notag& =& - \gamma^t\|\nabla f(x^t)\|_1 + 2\gamma^t \sum\limits_{i=1}^d \left|\left[\nabla f(x^t)\right]_i\right|\\
           \notag& &\cdot\mathbb I\left(\text{sign}\left(\sum\limits_{j=1}^M \text{sign}\left(\left[g_{j,\xi^t}^t\right]_i\right)\right) \neq \text{sign}\left(\left[\nabla f(x^t)\right]_i\right)\right)\\
           \notag& &- \gamma^t\left\langle \nabla f(x^{t+1}) - \nabla f(x^t), \text{sign}\left(\sum\limits_{j=1}^M\text{sign}\left(g_{j,\xi^t}^t\right)\right)\right\rangle\\
           \notag& \overset{\ref{bi:CS_conj}, (i)}{\leqslant}& - \gamma^t\|\nabla f(x^t)\|_1 + 2\gamma^t\widetilde{\delta}^t \\
           \notag& & + \gamma^t \|\nabla f(x^{t+1}) - \nabla f(x^t)\|_1\left\|\text{sign}\left(\sum\limits_{j=1}^M\text{sign}\left(g_{j,\xi^t}^t\right)\right)\right\|_\infty \\ 
           \notag &=& - \gamma^t\|\nabla f(x^t)\|_1 + 2\gamma^t\widetilde{\delta}^t \\
           \notag& & + \gamma^t\frac{1}{M}\sum\limits_{j=1}^M\|\nabla f(x^{t+1}) - \nabla f(x^t)\|_1\left\|\text{sign}\left(\sum\limits_{j=1}^M\text{sign}\left(g_{j,\xi^t}^t\right)\right)\right\|_\infty\\
           \notag&\overset{\ref{bi:CS}}{\leqslant}& - \gamma^t\|\nabla f(x^t)\|_1 + 2\gamma^t\widetilde{\delta}^t + \gamma^t\frac{1}{M}\sum\limits_{j=1}^M\|\nabla f(x^t) - g_{j,\xi^{t+1}}^{t}\|_1 \\
           \notag& & + \gamma^t\frac{1}{M}\sum\limits_{j=1}^M\|\nabla f(x^{t+1}) - g_{j,\xi^{t+1}}^{t+1}\|_1 \\
           \notag& & + \gamma^t\frac{1}{M}\sum\limits_{j=1}^M\|g_{j,\xi^{t+1}}^{t+1} - g_{j,\xi^{t+1}}^{t}\|_1\left\|\text{sign}\left(\sum\limits_{j=1}^M\text{sign}\left(g_{j,\xi^t}^t\right)\right)\right\|_\infty\\
           \notag&\overset{(ii)}{=}& - \gamma^t\|\nabla f(x^t)\|_1 + 2\gamma^t\widetilde{\delta}^t + \gamma^t\frac{1}{M}\sum\limits_{j=1}^M\|\nabla f(x^t) - g_{j,\xi^{t+1}}^{t}\|_1 \\
           \notag& & + \gamma^t\frac{1}{M}\sum\limits_{j=1}^M\|\nabla f(x^{t+1}) - g_{j,\xi^{t+1}}^{t+1}\|_1 \\
           \notag& & + \gamma^t\frac{1}{M}\sum\limits_{j=1}^M\frac{\|g_{j,\xi^{t+1}}^{t+1} - g_{j,\xi^{t+1}}^{t}\|_1}{\|x^{t+1} - x^t\|_\infty}\|x^{t+1} - x^t\|_\infty\\
           \notag&=&- \gamma^t\|\nabla f(x^t)\|_1 + 2\gamma^t\widetilde{\delta}^t + \gamma^t\frac{1}{M}\sum\limits_{j=1}^M\|\nabla f(x^t) - g_{j,\xi^{t+1}}^{t}\|_1 \\
           \notag& & + \gamma^t\frac{1}{M}\sum\limits_{j=1}^M\|\nabla f(x^{t+1}) - g_{j,\xi^{t+1}}^{t+1}\|_1 \\
           \notag& & + (\gamma^t)^2\frac{1}{M}\sum\limits_{j=1}^M\frac{\|g_{j,\xi^{t+1}}^{t+1} - g_{j,\xi^{t+1}}^{t}\|_1}{\|x^{t+1} - x^t\|_\infty},
        \end{eqnarray}
        where in (\textit{i}) we denote $\widetilde{\delta}^{t} = \sum\limits_{i=1}^d \left|\left[\nabla f(x^t)\right]_i\right|\mathbb I\left(\text{sign}\left(\sum\limits_{j=1}^M \text{sign}\left(\left[g_{j,\xi^t}^t\right]_i\right)\right) \neq \text{sign}\left(\left[\nabla f(x^t)\right]_i\right)\right)$ and in (\textit{ii}) we assume $\left\|x^{t+1} - x^t\right\|_\infty \neq 0$ (analogous to Lemma \ref{descent_lemma_stochastic_aid_sign_sgd}). Defining $L_\infty^{t, \xi^{t+1}} = \frac{1}{M}\sum\limits_{j=1}^M\frac{\left\|g_{j,\xi^{t+1}}^{t+1} - g_{j,\xi^{t+1}}^{t}\right\|_1}{\left\|x^{t+1} - x^t\right\|_\infty}$ and summing over all iterations gives us
        \begin{eqnarray*}
            \sum\limits_{t=0}^{T-1} \gamma^t \left\|\nabla f(x^t)\right\|_1 &\leqslant& \Delta^* + 2\sum\limits_{t=0}^{T-1}\gamma^t\widetilde{\delta}^t + \sum\limits_{t=0}^{T-1}\gamma^t\frac{1}{M}\sum\limits_{j=1}^M\|\nabla f(x^t) - g_{j,\xi^{t+1}}^{t}\|_1 \\
            & & + \sum\limits_{t=0}^{T-1}\gamma^t\frac{1}{M}\sum\limits_{j=1}^M\|\nabla f(x^{t+1}) - g_{j,\xi^{t+1}}^{t+1}\|_1 + \sum\limits_{t=0}^{T-1} (\gamma^t)^2 L_\infty^{t, \xi^t},\\
            \sum\limits_{t=0}^{T-1} \frac{\gamma^t}{\sum\limits_{t=0}^{T-1} \gamma^t} \left\|\nabla f(x^t)\right\|_1 &\leqslant& \frac{\Delta^*}{\sum\limits_{t=0}^{T-1} \gamma^t} + 2\sum\limits_{t=0}^{T-1} \frac{\gamma^t \widetilde{\delta}^t}{\sum\limits_{t=0}^{T-1} \gamma^t} + \sum\limits_{t=0}^{T-1}\frac{\gamma^t\frac{1}{M}\sum\limits_{j=1}^M\|\nabla f(x^t) - g_{j,\xi^{t+1}}^{t}\|_1}{\sum\limits_{t=0}^{T-1} \gamma^t}\\
            & & + \sum\limits_{t=0}^{T-1}\frac{\gamma^t\frac{1}{M}\sum\limits_{j=1}^M\|\nabla f(x^{t+1}) - g_{j,\xi^{t+1}}^{t+1}\|_1}{\sum\limits_{t=0}^{T-1} \gamma^t} + \sum\limits_{t=0}^{T-1}\frac{(\gamma^t)^2 L_\infty^{t, \xi^{t+1}}}{\sum\limits_{t=0}^{T-1} \gamma^t}.
        \end{eqnarray*}
        Taking expectation, we derive the result of the lemma:
        \begin{eqnarray*}
            \sum\limits_{t=0}^{T-1} \mathbb E\left[\gamma^t \left\|\nabla f(x^t)\right\|_1\right] &\leqslant& \Delta^*\mathbb E\left[\frac{1}{\sum\limits_{t=0}^{T-1} \gamma^t}\right] + 2\sum\limits_{t=0}^{T-1}\mathbb E\left[\frac{\gamma^t \widetilde{\delta}^t}{\sum\limits_{t=0}^{T-1} \gamma^t}\right] \\
            & & + \sum\limits_{t=0}^{T-1}\mathbb E\left[\frac{\gamma^t\frac{1}{M}\sum\limits_{j=1}^M\|\nabla f(x^t) - g_{j,\xi^{t+1}}^{t}\|_1}{\sum\limits_{t=0}^{T-1} \gamma^t}\right]\\
            & & + \sum\limits_{t=0}^{T-1}\mathbb E\left[\frac{\gamma^t\frac{1}{M}\sum\limits_{j=1}^M\|\nabla f(x^{t+1}) - g_{j,\xi^{t+1}}^{t+1}\|_1}{\sum\limits_{t=0}^{T-1} \gamma^t}\right] \\
            & & + \mathbb E\left[\frac{\sum\limits_{t=0}^{T-1}(\gamma^t)^2 L_\infty^{t, \xi^{t+1}}}{\sum\limits_{t=0}^{T-1} \gamma^t}\right].
        \end{eqnarray*}
    \end{proof}
\end{lemma}

\begin{theorem}\label{th:adaptation_distributed_appendix}
    Suppose Assumptions \ref{as:stochastic_smoothness}, \ref{as:convexity}, \ref{as:func_optimum}, \ref{as:go_distributed_learning} hold. Then Algorithm \ref{alg:sign-sgd_adaptation} with Option II to reach $\varepsilon$-accuracy, where $\varepsilon \geqslant \sum\limits_{t=0}^{T-1}\mathbb E\left[\frac{\gamma^t}{\sum\limits_{t=0}^{T-1} \gamma^t} \left\|\nabla f(x^t)\right\|_1\right]$ needs
    \begin{align*}
        \mathcal{\widetilde{O}}\left(\frac{\Delta^* \left(L_{\infty}\right)^{3}}{\varepsilon^2}\left(\mathbb E \left(\frac{1}{L_\infty^{0, \xi^1}}\right)^2\right) + \left\|\sigma\right\|_1^2 L_\infty\left(\mathbb E \frac{1}{\underset{0\leqslant t\leqslant T-1}{\min} L_\infty^{t, \xi^{t+1}}}\right) \right) ~~\text{iterations,}
    \end{align*}
    where $L_\infty^{t, \xi^{t+1}} = \frac{1}{M}\sum\limits_{j=1}^M\frac{\left\|g_{j,\xi^{t+1}}^{t+1} - g_{j,\xi^{t+1}}^{t}\right\|_1}{\left\|x^{t+1} - x^t\right\|_\infty}$.
    \begin{proof}
        Let us start with the result of Lemma \ref{descent_lemma_distributed_aid_sign_sgd}:
         \begin{eqnarray*}
            \sum\limits_{t=0}^{T-1} \mathbb E\left[\gamma^t \left\|\nabla f(x^t)\right\|_1\right] &\leqslant& \Delta^*\mathbb E\left[\frac{1}{\sum\limits_{t=0}^{T-1} \gamma^t}\right] + 2\sum\limits_{t=0}^{T-1}\mathbb E\left[\frac{\gamma^t \widetilde{\delta}^t}{\sum\limits_{t=0}^{T-1} \gamma^t}\right] \\
            & & + \sum\limits_{t=0}^{T-1}\mathbb E\left[\frac{\gamma^t\frac{1}{M}\sum\limits_{j=1}^M\|\nabla f(x^t) - g_{j,\xi^{t+1}}^{t}\|_1}{\sum\limits_{t=0}^{T-1} \gamma^t}\right]\\
            & & + \sum\limits_{t=0}^{T-1}\mathbb E\left[\frac{\gamma^t\frac{1}{M}\sum\limits_{j=1}^M\|\nabla f(x^{t+1}) - g_{j,\xi^{t+1}}^{t+1}\|_1}{\sum\limits_{t=0}^{T-1} \gamma^t}\right] \\
            & & + \mathbb E\left[\frac{\sum\limits_{t=0}^{T-1}(\gamma^t)^2 L_\infty^{t, \xi^{t+1}}}{\sum\limits_{t=0}^{T-1} \gamma^t}\right].
        \end{eqnarray*}
        Note that we have already estimated all terms in Theorem \ref{th:adaptation_stochastic_appendix} except $\sum\limits_{t=0}^{T-1}\mathbb E\left[\frac{\gamma^t \widetilde{\delta}^t}{\sum\limits_{t=0}^{T-1} \gamma^t}\right]$. However, using Lemma \ref{lemma_sinr} together with \eqref{bi:CS_holder}, we can do the same thing and obtain
        \begin{eqnarray*}
            \sum\limits_{t=0}^{T-1}\mathbb E\left[\frac{\gamma^t \widetilde{\delta}^t}{\sum\limits_{t=0}^{T-1} \gamma^t}\right] &\leqslant& \sum\limits_{t=0}^{T-1}\left(\mathbb E \left[\widetilde{\delta}\right]^2\right)^\frac{1}{2}\left(\mathbb E\left[\frac{\gamma^t}{\sum\limits_{t=0}^{T-1} \gamma^t}\right]^2\right)^\frac{1}{2} \\
            & & \leqslant 2\sqrt{L_\infty} \|\sigma\|_1\left(\mathbb E \frac{1}{\underset{0\leqslant t\leqslant T-1}{\min} L_\infty^{t, \xi^{t+1}}}\right)^{\frac{1}{2}}.
        \end{eqnarray*}
        In that way, we get the same estimate as in Theorem \ref{th:adaptation_stochastic_appendix}:
        \begin{eqnarray*}
        \sum\limits_{t=0}^{T-1}\mathbb E\left[\frac{\gamma^t}{\sum\limits_{t=0}^{T-1} \gamma^t} \left\|\nabla f(x^t)\right\|_1\right] &\leqslant& 13\frac{\sqrt{(f(x^0) - \widetilde{f})}\left(L_\infty\right)^{\frac{3}{2}}}{T}\left(\mathbb E \left(\frac{1}{L_\infty^{0, \xi^1}}\right)^2\right)^{\frac{1}{2}}\\
        & & \cdot\left(\mathbb E\log^2\left(\frac{L_\infty T}{L_\infty^{0, \xi^1}}\right)\right)^{\frac{1}{2}} \\
        & & + 8\|\sigma\|_1\left(\sqrt{L_\infty}\left(\mathbb E \frac{1}{\underset{0\leqslant t\leqslant T-1}{\min} L_\infty^{t, \xi^{t+1}}}\right)^{\frac{1}{2}}\right).
        \end{eqnarray*}
        Expressing the number of iterations and using $\varepsilon \geqslant \sum\limits_{t=0}^{T-1}\mathbb E\left[\frac{\gamma^t}{\sum\limits_{t=0}^{T-1} \gamma^t} \left\|\nabla f(x^t)\right\|_1\right]$ as a criterion, we obtain that the algorithm needs $\mathcal{\widetilde{O}}\left(\frac{\Delta^* \left(L_{\infty}\right)^{3}}{\varepsilon^2}\left(\mathbb E \left(\frac{1}{L_\infty^{0, \xi^1}}\right)^2\right) + \left\|\sigma\right\|_1^2 L_\infty\left(\mathbb E \frac{1}{\underset{0\leqslant t\leqslant T-1}{\min} L_\infty^{t, \xi^{t+1}}}\right) \right)$ iterations to reach $\varepsilon$-accuracy.
    \end{proof}
\end{theorem}

\begin{remark}\label{rem:adaptation_distributed_appendix}
    Under conditions of Theorem \ref{th:adaptation_distributed_appendix} Algorithm \ref{alg:sign-sgd_adaptation} with $\lambda^t = \frac{1}{\sqrt{L_\infty + \sum\limits_{i=0}^{t-1}\frac{1}{M}\sum\limits_{j=1}^M\frac{\left\|g_{j, \xi^{i+1}}^{i+1} - g_{j, \xi^{i}}^{i}\right\|_1}{\left\|x^{i+1} - x^i\right\|_\infty}}}$, Option II and mini-batch of the size $t+1$ at $t$-th iteration to reach $\varepsilon$-accuracy needs
    \begin{eqnarray*}
        \mathcal{\widetilde{O}}\left(\frac{\Delta^* L_{\infty}}{\varepsilon^2} + \frac{\left\|\sigma\right\|_1^2 L_\infty}{\varepsilon^2}\left(\mathbb E \frac{1}{\underset{0\leqslant t\leqslant T-1}{\min} L_\infty^{t, \xi^{t+1}}}\right) \right) ~~ \text{iterations,}
    \end{eqnarray*}
    where $\varepsilon \geqslant \frac{1}{T}\sum\limits_{t=0}^{T-1}\left\|\nabla f(x^t)\right\|_1, L_\infty^{t, \xi^{t+1}} = \frac{1}{M}\sum\limits_{j=1}^M\frac{\left\|g_{j, \xi^{t+1}}^{t+1} - g_{j, \xi^{t}}^{t}\right\|_1}{\left\|x^{t+1} - x^t\right\|_\infty}$.
    \begin{proof}
        Proof repeats the proof of Remark \ref{rem:adaptation_stochastic_main}.    
    \end{proof}
\end{remark}

\subsection{Memory-efficient \textsc{ALIAS}}

\begin{lemma}[Descent lemma]\label{descent_lemma_aid_sign_sgd_me}
For Algorithm \ref{alg:sign-sgd_adaptation} under Assumptions \ref{as:me_smoothness}, \ref{as:convexity}, \ref{as:func_optimum}, \ref{as:go_exact_gradient}, the following estimate is valid:
\begin{eqnarray*}
    \sum\limits_{t=0}^{T-1} \gamma^t\left\|\nabla f(x^t)\right\|_1 \leqslant \Delta^* + \sum\limits_{t=0}^{T-1} (\gamma^t)^2 d^2 L_1^t,
\end{eqnarray*}
where $L_1^t = \frac{\left\|\nabla f(x^{t+1}) - \nabla f(x^t)\right\|_\infty}{\left\|x^{t+1} - x^t\right\|_1}$.
    \begin{proof}
        \begin{eqnarray*}
            f(x^{t+1}) &\leqslant& f(x^t) + \left\langle \nabla f(x^{t+1}), x^{t+1} - x^t\right\rangle = f(x^t) - \gamma^t\left\langle \nabla f(x^{t+1}), \text{sign}\left(\nabla f(x^t)\right)\right\rangle\\
            &=& f(x^t) - \gamma^t\left\|\nabla f(x^t)\right\|_1 - \gamma^t\left\langle \nabla f(x^{t+1}) - \nabla f(x^{t}), \text{sign}\left(\nabla f(x^t)\right)\right\rangle\\
            &\overset{\ref{bi:CS_conj}}{\leqslant}& f(x^t) - \gamma^t\left\|\nabla f(x^t)\right\|_1 + \gamma^t\left\|\nabla f(x^{t+1}) - \nabla f(x^{t})\right\|_\infty\left\|\text{sign}\left(\nabla f(x^t)\right)\right\|_1\\
            &\leqslant& f(x^t) - \gamma^t\left\|\nabla f(x^t)\right\|_1 + \gamma^t d\left\|\nabla f(x^{t+1}) - \nabla f(x^{t})\right\|_\infty\\
            &\overset{(i)}{=}&  f(x^t) - \gamma^t\left\|\nabla f(x^t)\right\|_1 + \gamma^t d\frac{\left\|\nabla f(x^{t+1}) - \nabla f(x^{t})\right\|_\infty}{\left\|x^{t+1} - x^{t}\right\|_1}\left\|x^{t+1} - x^{t}\right\|_1\\
            &=& f(x^t) - \gamma^t\left\|\nabla f(x^t)\right\|_1 + (\gamma^t)^2 d^2\frac{\left\|\nabla f(x^{t+1}) - \nabla f(x^{t})\right\|_\infty}{\left\|x^{t+1} - x^{t}\right\|_1},
        \end{eqnarray*}
        where in (\textit{i}) we assume $\left\|x^{t+1} - x^t\right\|_1 \neq 0$. Indeed, $\left\|x^{t+1} - x^t\right\|_1 = 0$ follows from the equality $\text{sign}\left(\nabla f(x^t)\right) = 0$, which means that we find the optimum and do need to find another point $x^{t+1}$. Now we denote $L_1^t = \frac{\left\|\nabla f(x^{t+1}) - \nabla f(x^t)\right\|_\infty}{\left\|x^{t+1} - x^t\right\|_1}$. Summing over all iterations, we obtain
        \begin{eqnarray*}
            \sum\limits_{t=0}^{T-1} \gamma^t\left\|f(x^t)\right\|_1 &\leqslant& \sum\limits_{t=0}^{T-1} \left[f(x^t) - f(x^{t+1})\right] + \sum\limits_{t=0}^{T-1} (\gamma^t)^2 d^2 L_1^t \\
            &=& f(x^0) - f(x^{*}) + \sum\limits_{t=0}^{T-1} (\gamma^t)^2 d^2 L_1^t \leqslant \Delta^* + \sum\limits_{t=0}^{T-1} (\gamma^t)^2 d^2 L_\infty^t,
        \end{eqnarray*}
        which ends the proof of the lemma.
    \end{proof}
\end{lemma}

\begin{theorem}[\textbf{Theorem \ref{th:adaptation_memory_efficient_main}}]\label{th:adaptation_memory_efficient_appendix}
    Suppose Assumptions \ref{as:me_smoothness}, \ref{as:convexity}, \ref{as:func_optimum}, \ref{as:go_exact_gradient} hold. We denote $\varepsilon \geqslant \frac{1}{T}\sum\nolimits_{t=0}^{T-1} \|\nabla f(x^t)\|_1, L_1^{0} = \frac{\left\|\nabla f(x^{1}) - \nabla f(x^0)\right\|_\infty}{\left\|x^{1} - x^0\right\|_1}$. Then Algorithm \ref{alg:sign-sgd_adaptation} with $d^0 < \Delta^*$ and $d\cdot\lambda^t$ as in \eqref{eq:me_l_adaptation} to reach $\varepsilon$-accuracy needs
    \begin{align*}
  \mathcal{\widetilde{O}}\left(\frac{\left(\Delta^*\right)^2 \left(L_{1}\right)^3 d^2}{d^0 \left(L_1^0\right)^2 \varepsilon^2}\right) ~~ \text{and}~~ \mathcal{\widetilde{O}}\left(\frac{\Delta^* \left(L_{1}\right)^3 d^2}{\left(L_1^0\right)^2 \varepsilon^2}\right) ~~\text{iterations with Options I and II, respectively.}
    \end{align*}

\begin{proof}
    Let us start with the result of Lemma \ref{descent_lemma_aid_sign_sgd_me}:
        \begin{align}
        \label{thame:ineq1}
            \sum\limits_{t=0}^{T-1} \gamma^t\|\nabla f(x^t)\|_1 &\leqslant \Delta^* + \sum\limits_{t=0}^{T-1} (\gamma^t)^2 d^2 L_1^t.
        \end{align}
        Now we use our $\gamma^t$ choice. Let us firstly estimate the denominator that is exactly $\lambda^t=\frac{1}{d\sqrt{\sum\limits_{i=0}^{t-1}\frac{\left\|\nabla f(x^{i+1}) - \nabla f(x^i)\right\|_\infty}{\left\|x^{i+1} - x^i\right\|_1}}} = \frac{1}{d\sqrt{\sum\limits_{i=0}^{t-1} L_1^i}}$ and is the same for both Options I and II. Let us estimate the following term.

\begin{eqnarray*}
    \sum\limits_{t=0}^{T-1} (\lambda^t)^2 d^2 L_1^t = \sum\limits_{t=0}^{T-1} \frac{L_1^t}{\sum\limits_{i=0}^{t-1} L_1^i}.
\end{eqnarray*}

We mention, that each $L_1^i$ is bounded from the definition of smoothness (see Assumption \ref{as:me_smoothness}), i.e., $L_1^i \leqslant L_1$. We consider the sequence $\left\{L_1^i\right\}_{i=0}^{T-1}$. Since each term in this sequence is bounded, there exists $r$ such that $\sum\limits_{i=0}^{r-2} L_1^i \leqslant L_1^{r-1}$ and for each $t \geqslant r-1$ such that $\sum\limits_{i=0}^t L_1^i \geqslant L_1^{t+1}$. In that way, we divide the sum into two parts:

\begin{eqnarray}
    \label{thameL:ineq1}\sum\limits_{t=0}^{T-1} \frac{L_1^t}{\sum\limits_{i=0}^{t-1} L_1^i} = \sum\limits_{t=0}^{r-1} \frac{L_1^t}{\sum\limits_{i=0}^{t-1} L_1^i} + \sum\limits_{t=r}^{T-1} \frac{L_1^t}{\sum\limits_{i=0}^{t-1} L_1^i}.
\end{eqnarray}

Considering the first sum in \eqref{thameL:ineq1}, we mention, that we can estimate the denominator as $\sum_{i=0}^{t-1} L_1^i \geqslant L_1^0$. As for the numerator. Thus,

\begin{eqnarray}
    \label{thameL:ineq2}\sum\limits_{t=0}^{r-1}\frac{L_1^t}{\sum\limits_{i=0}^{t-1}L_1^i} \leqslant \frac{1}{L_1^0}\left(\sum\limits_{t=0}^{r-2} L_1^t + L_1^{r-1}\right) \leqslant \frac{2 L_1^{r-1}}{L_1^0}\leqslant\frac{2L_1}{L_1^0}.
\end{eqnarray}

Considering the second sum in \eqref{thameL:ineq1}, we have

\begin{eqnarray*}
    \sum\limits_{t=r}^{T-1} \frac{L_1^t}{\sum\limits_{i=0}^{t-1} L_1^{i}} = \sum\limits_{t=r}^{T-1} \frac{L_1^t}{\frac{1}{2}\sum\limits_{i=0}^{t-1} L_1^{i} + \frac{1}{2}\sum\limits_{i=0}^{t-1} L_1^{i}}.
\end{eqnarray*}

Estimating any of the sums in the denominator, we claim, that $\sum\limits_{i=0}^{t-1} L_1^{i} \geqslant L_1^t$, since $t-1 \geqslant r-1$. In that way,

\begin{eqnarray}
    \label{thameL:ineq3}\sum\limits_{t=r}^{T-1} \frac{L_1^t}{\sum\limits_{i=0}^{t-1} L_1^{i}} \leqslant \sum\limits_{t=r}^{T-1} \frac{2 L_1^t}{\sum\limits_{i=0}^{t} L_1^{i}} \leqslant 2\sum\limits_{t=0}^{T-1} \frac{L_1^t}{\sum\limits_{i=0}^{t} L_1^{i}}.
\end{eqnarray}

Next we denote $s^t = \sum\limits_{i=0}^t L_1^t$ and have
\begin{eqnarray}
    \label{thameLG:ineq}L_1^t\frac{1}{\sum\limits_{i=0}^{t} L_1^i} = (s^t - s^{t-1})\frac{1}{\sum\limits_{i=0}^{t} L_1^i} = \int\limits_{s^{t-1}}^{s^t} \frac{1}{\sum\limits_{i=0}^{t} L_1^i} dx \overset{(i)}{\leqslant} \int\limits_{s^{t-1}}^{s^t} \frac{1}{x} dx,
\end{eqnarray}
where (\textit{i}) was done due to $\frac{1}{x}$ is a non-increasing function on $(0, +\infty)$. Summing over $t$, we obtain
\begin{eqnarray*}
    2\sum\limits_{t=1}^{T}\frac{L_1^t}{\sum\limits_{i=0}^{t} L_1^i} \leqslant 2\int\limits_{s^0}^{s^T} \frac{1}{x}dx = 2\log(s^T) - 2\log(s^0) = 2\log\left(\frac{\sum\limits_{t=1}^T L_1^t}{L_1^0}\right) \leqslant 2\log\left(\frac{L_1 T}{L_1^0}\right).
\end{eqnarray*}

Combining this estimate with \eqref{thameL:ineq3},

\begin{eqnarray}
    \label{thameL:ineq4}\sum\limits_{t=r}^{T-1} \frac{L_1^t}{\sum\limits_{i=0}^{t-1} L_1^{i}} \leqslant 2\sum\limits_{t=1}^{T}\frac{L_1^t}{\sum\limits_{i=0}^{t} L_1^i} + 2 \leqslant 2\left(\log\left(\frac{L_1 T}{L_1^0}\right) + 1\right) \leqslant 4\log\left(\frac{L_1 T}{L_1^0}\right).
\end{eqnarray}

Substituting \eqref{thameL:ineq2} and \eqref{thameL:ineq4} into \eqref{thameL:ineq1}, we obtain

\begin{eqnarray}
    \label{thame:ineq2}\sum\limits_{t=0}^{T-1} (\lambda^t)^2 d^2 L_1^t \leqslant 2\frac{L_1}{L_1^0} + 4\log\left(\frac{L_1 T}{L_1^0}\right).
\end{eqnarray}

We additionally note, that if $r > T-1$, only first term remains in this estimate, consequently our bound \eqref{thame:ineq2} is correct.

In this way, utilizing Option I from Algorithm \ref{alg:sign-sgd_adaptation}, \eqref{thame:ineq1} together with \eqref{thame:ineq2} yields
        \begin{eqnarray}
             \notag\sqrt{d^0}\lambda^{T-1}\sum\limits_{t=0}^{T-1} \|\nabla f(x^t)\|_1 &\overset{(i)}{\leqslant}& \sum\limits_{t=0}^{T-1} \sqrt{d^t}\lambda^t\|\nabla f(x^t)\|_1 \leqslant \Delta^* + \sum\limits_{t=0}^{T-1} d^t(\lambda^t)^2 d^2 L_1^t \\
             \notag&\overset{\text{Lemma} \ref{lemma_approximating_sequence}}{\leqslant}& \Delta^* + \Delta^* \sum\limits_{t=0}^{T-1} (\lambda^t)^2 d^2 L_1^t,\\
             \notag\sum\limits_{t=0}^{T-1} \|\nabla f(x^t)\|_1 &\leqslant& \frac{\Delta^*}{\sqrt{d^0}\lambda^{T-1}} + \frac{\Delta^*}{\sqrt{d^0}\lambda^{T-1}}\sum\limits_{t=0}^{T-1} (\lambda^t)^2 d^2 L_1^t \\
             \notag&\overset{\ref{thame:ineq2}}{\leqslant}& \frac{\Delta^*}{\sqrt{d^0}\lambda^{T-1}} + 4\frac{\Delta^*}{\sqrt{d^0}\lambda^{T-1}}\log\left(\frac{L_1 T}{L_1^0}\right) + 2\frac{\Delta^* L_1}{\sqrt{d^0}\lambda^{T-1} L_1^0} \\
             \label{thame:ineq3}&\leqslant& 7\frac{\Delta^* L_1}{\sqrt{d^0}\lambda^{T-1} L_1^0}\log\left(\frac{L_1 T}{L_1^0}\right),
        \end{eqnarray}
        where (\textit{i}) was done due to the fact that $d^0$ is minimal from all $\{d^t\}_{t=0}^{T-1}$ (Line \ref{alg:adaptation_line7} from Algorithm \ref{alg:sign-sgd_adaptation}) and the definition of $\lambda^t$. Utilizing $\frac{1}{\lambda^{T-1}} = d\sqrt{\sum\limits_{t=0}^{T-2} L_1^t} \leqslant d\sqrt{L_1 T}$, we obtain the final estimate:
        \begin{eqnarray*}
            \frac{1}{T}\sum\limits_{t=0}^{T-1} \|\nabla f(x^t)\|_1 &\leqslant& \frac{7\Delta^*\left(L_1\right)^{\frac{3}{2}} d}{\sqrt{d^0 T} L_1^0} \log\left(\frac{L_1 T}{L_1^0}\right).
        \end{eqnarray*}
        Expressing the number of iterations and using $\varepsilon \geqslant \frac{1}{T}\sum\limits_{t=0}^{T-1} \|\nabla f(x^t)\|_1$ as a criterion, we obtain that the algorithm needs $\mathcal{\widetilde{O}}\left(\frac{\left(\Delta^*\right)^2 \left(L_{1}\right)^3 d^2}{d^0 \left(L_1^0\right)^2 \varepsilon^2}\right)$ iterations to reach $\varepsilon$-accuracy.

        Considering Option II from Algorithm \ref{alg:sign-sgd_adaptation}, we can proceed absolutely analogical, however, using $f(x^0) - \widetilde{f} \geqslant \Delta^*$ instead of Lemma \ref{lemma_approximating_sequence}. In that way, 
        \begin{eqnarray*}
            \frac{1}{T}\sum\limits_{t=0}^{T-1} \|\nabla f(x^t)\|_1 &\leqslant& \frac{\Delta^* \sqrt{L_1} d}{\sqrt{(f(x^0) - \widetilde{f}) T}} + \frac{4(f(x^0) - \widetilde{f})\sqrt{L_1} d}{\sqrt{(f(x^0) - \widetilde{f}) T}} \log\left(\frac{L_1 T}{L_1^0}\right) \\
            & & + \frac{2(f(x^0) - \widetilde{f})\left(L_1\right)^{\frac{3}{2}} d}{\sqrt{(f(x^0) - \widetilde{f}) T} L_1^0} \\
            &\leqslant& \frac{7\sqrt{(f(x^0) - \widetilde{f})}\left(L_1\right)^{\frac{3}{2}} d}{\sqrt{T} L_1^0} \log\left(\frac{L_1 T}{L_1^0}\right).
        \end{eqnarray*}
        Expressing the number of iterations, using $\varepsilon \geqslant \frac{1}{T}\sum\limits_{t=0}^{T-1} \|\nabla f(x^t)\|_1$ as a criterion, and utilizing $\widetilde{f}$ is an approximation of $f(x^*)$, we obtain that the algorithm needs $\mathcal{\widetilde{O}}\left(\frac{\Delta^* \left(L_{1}\right)^3 d^2}{\left(L_1^0\right)^2\varepsilon^2}\right)$ iterations to reach $\varepsilon$-accuracy.
\end{proof}

\end{theorem}

The proofs under stochastic and distributed settings for the memory-efficient version of \textsc{ALIAS} can be obtained analogously to Theorems \ref{th:adaptation_stochastic_appendix}, \ref{th:adaptation_distributed_appendix}, and \ref{th:adaptation_memory_efficient_appendix}.

\section{Steepest Descent}\label{sec:steepest_descent}

There is one more approach for sign descent. Classically, we perform the step in the direction of the gradient. However, we do not take into account the length of the gradient in any way in the step. The approach, called steepest descent, is supposed to utilize this information and provide the steps in the direction $\|\nabla f(x^t)\|_1 \text{sign}(\nabla f(x^t))$ at the $t$-th iteration. We provide the formal description of this approach (Algorithm \ref{alg:sos_steepest_descent}).

\begin{algorithm}[ht]
   \caption{\textsc{Steepest Descent}}
   \label{alg:steepest_descent}
\begin{algorithmic}[1]
    \State {\bfseries Input:} Initial point $x^0
    \!\in\!\mathbb R^d$, number of iterations $T$
    \State {\bfseries Parameter:} Stepsize $c > 0$
    \For{$t=0,\ldots, T-1$}
    \State $x^{t+1} = x^t - c \|\nabla f(x^t)\|_1 \text{sign}(\nabla f(x^{t}))$
    \EndFor
\end{algorithmic}
\end{algorithm}

\begin{algorithm}[ht]
   \caption{\textsc{SOS Steepest Descent}}
   \label{alg:sos_steepest_descent}
\begin{algorithmic}[1]
    \State {\bfseries Input:} Initial stepsize bound $c_s$, initial bound step $k$, initial point $x^0\in\mathbb R^d$, number of iterations $T$
    \State $c_0 = \textsc{BISECTION}\left(\phi(c), \frac{c_s}{2^{2^{k}}}, c_s, T\right)$ ~~\Comment{in Algorithm \ref{alg:bisection_procedure} we utilize Algorithm \ref{alg:steepest_descent} instead of Algorithm \ref{alg:sign-sgd}}
    \State $x^T = \textsc{Steepest Descent}(x^0, T, c_0)$
\end{algorithmic}
\end{algorithm}

We present the analysis of \textsc{SOS Steepest Descent}. We start with the descent lemma.

\begin{lemma}[Descent lemma]\label{descent_lemma_steepest_descent}
For Algorithm \ref{alg:sos_steepest_descent} under Assumptions \ref{as:smoothness}, \ref{as:convexity}, \ref{as:func_optimum}, \ref{as:go_exact_gradient}, the following estimate is valid:
\begin{eqnarray*}
            -\Delta^* \leqslant - c_0\sum\limits_{t=0}^{T-1}\|\nabla f(x^t)\|_1^2\left(1 - c_0 \widetilde{L}_\infty\right),
        \end{eqnarray*}
        where $\widetilde{L}_\infty = \underset{0\leqslant t\leqslant T-1}{\max} L_\infty^t$ and $L_\infty^t = \frac{\left\|\nabla f(x^{t+1}) - \nabla f(x^t)\right\|_1}{\left\|x^{t+1} - x^t\right\|_\infty}$.
    \begin{proof}
        Starting from the convexity of the objective,
        \begin{eqnarray*}
            f(x^{t+1}) &\leqslant& f(x^t) + \left\langle\nabla f(x^{t+1}), x^{t+1} - x^t\right\rangle = f(x^t) - \gamma^t\left\langle\nabla f(x^{t+1}), \text{sign}(\nabla f(x^t))\right\rangle\\
            &=& f(x^t) - \gamma^t\left\langle\nabla f(x^{t}), \text{sign}(\nabla f(x^t))\right\rangle \\
            & & - \gamma^t\left\langle\nabla f(x^{t+1}) - \nabla f(x^{t}), \text{sign}(\nabla f(x^t))\right\rangle\\
            &\overset{\ref{bi:CS_conj}}{\leqslant}& f(x^t) - \gamma^t\left\|\nabla f(x^t)\right\|_1 + \gamma^t\left\|\nabla f(x^{t+1}) - \nabla f(x^t)\right\|_1\left\|\text{sign}(\nabla f(x^t))\right\|_\infty\\
            &\leqslant& f(x^t) - \gamma^t\left\|\nabla f(x^t)\right\|_1 + \gamma^t\left\|\nabla f(x^{t+1}) - \nabla f(x^t)\right\|_1\\
            &\overset{(i)}{=}& f(x^t) - \gamma^t\left\|\nabla f(x^t)\right\|_1 + \gamma^t\frac{\left\|\nabla f(x^{t+1}) - \nabla f(x^t)\right\|_1}{\left\|x^{t+1} - x^t\right\|_\infty}\left\|x^{t+1} - x^t\right\|_\infty,
        \end{eqnarray*}
        where in (\textit{i}) we assume $\left\|x^{t+1} - x^t\right\|_\infty \neq 0$. Indeed, $\left\|x^{t+1} - x^t\right\|_\infty = 0$ follows from $\text{sign}\left(\nabla f(x^t)\right) = 0$, which means we find the optimum and do need to search the point $x^{t+1}$. Now we denote $L_\infty^t = \frac{\left\|\nabla f(x^{t+1}) - \nabla f(x^t)\right\|_1}{\left\|x^{t+1} - x^t\right\|_\infty}$. Continue estimate,
        \begin{eqnarray*}
            f(x^{t+1}) &\leqslant& f(x^t) - \gamma^t\left\|\nabla f(x^t)\right\|_1 + (\gamma^t)^2 L_\infty^t \left\|\text{sign}(\nabla f(x^t))\right\|_\infty\\
            &\leqslant& f(x^t) - \gamma^t\left\|\nabla f(x^t)\right\|_1 + (\gamma^t)^2 L_\infty^t.
        \end{eqnarray*}
        Now we choose $\gamma^t = c_0\|\nabla f(x^t)\|_1$, where we find the constant $c_0$ using \textsc{Bisection} procedure (Algorithm \ref{alg:bisection_procedure}). Thus,
        \begin{eqnarray*}
            f(x^{t+1}) &\leqslant& f(x^t) - c_0\|\nabla f(x^t)\|_1^2 + c_0^2\|\nabla f(x^t)\|_1^2  L_\infty^t\\
            &=& f(x^t) - c_0\|\nabla f(x^t)\|_1^2\left(1 - c_0 L_\infty^t\right).
        \end{eqnarray*}
        Summing over all iterations and utilizing $\widetilde{L}_\infty = \underset{0\leqslant t\leqslant T-1}{\max} L_\infty^t$ notation, we have
        \begin{eqnarray*}
            -\Delta^* = f(x^*) - f(x^0) \leqslant f(x^T) - f(x^0) \leqslant - c_0\sum\limits_{t=0}^{T-1}\|\nabla f(x^t)\|_1^2\left(1 - c_0 \widetilde{L}_\infty\right),
        \end{eqnarray*}
        which ends the proof of the lemma.
    \end{proof}
\end{lemma}

Now we present the purposes of Algorithm \ref{alg:bisection_procedure}. Let us take an arbitrary point $x^{-1}\in\mathbb R^d$. We denote $L_\infty^{-1} = \frac{\left\|\nabla f(x^{0}) - \nabla f(x^{-1})\right\|_1}{\left\|x^{0} - x^{-1}\right\|_\infty}$ and $\widetilde{L}_\infty^{-1} = \underset{-1\leqslant t\leqslant T-1}{\max} L_\infty^t$. It is obvious that it implies
\begin{align}
    \label{eq:L_ratio}
    \begin{split}
    L_\infty^{-1} \leqslant \widetilde{L}_\infty^{-1} &\leqslant L_\infty,\\
    \widetilde{L}_\infty &\leqslant \widetilde{L}_\infty^{-1}.
    \end{split}
\end{align}

Let us put $\phi(c) = \frac{1}{\widetilde{L}_\infty^{-1}(c)}$ in the \textsc{Bisection} procedure. The following lemma shows guarantees of $\phi(c_\text{hi})\leqslant c_\text{hi}$ and $\phi(c_\text{lo})\geqslant c_\text{lo}$.

\begin{lemma}[Bisection entry]\label{lemma_bisection_start_condition_steepest_descent}
    Let $c_{\max} = \frac{1}{L_\infty^{-1}}$. Thus, with the initial $c_\text{hi} = c_{\max}$, Algorithm \ref{alg:bisection_procedure} always avoids an early infinite termination. Moreover, with the initial $c_\text{lo} = \frac{1}{2^{2^k}}c_\text{hi}$, where $k \geqslant \log\log\frac{L_\infty}{L_\infty^{-1}}$, Algorithm \ref{alg:bisection_procedure} always avoids early non-infinite termination.
    \begin{proof}
        Let us start with $c_\text{hi}$. The choice of $c_{\max}$ implies
        \begin{eqnarray*}
            c_\text{hi} = c_{\max} = \frac{1}{L_\infty^{-1}} \overset{\ref{eq:L_ratio}}{\geqslant} \frac{1}{\widetilde{L}_\infty^{-1}(c_\text{hi})} = \phi(c_\text{hi}),
        \end{eqnarray*}
        which means we avoid early infinite termination.
    As for $c_\text{lo}$:
    \begin{eqnarray*}
        c_\text{lo} = \frac{1}{2^{2^k}}c_\text{hi} \leqslant \frac{1}{\frac{L_\infty}{L_\infty^{-1}}}\cdot\frac{1}{L_\infty^{-1}} = \frac{1}{L_\infty} \overset{\ref{eq:L_ratio}}{\leqslant} \frac{1}{\widetilde{L}_\infty^{-1}(c_\text{lo})} = \phi(c_\text{lo}),
    \end{eqnarray*}
    which means we avoid early non-infinite termination.
    \end{proof}
\end{lemma}

Since we always entry to the \textsc{Bisection} procedure, we are under the performing of Lemma \ref{lemma_bisection_invariants}. Now we are ready to prove the final convergence guarantees for \textsc{SOS Steepest Descent}.

\begin{theorem}
Suppose Assumptions \ref{as:smoothness}, \ref{as:convexity}, \ref{as:func_optimum}, \ref{as:go_exact_gradient} hold. Then for Algorithm \ref{alg:sos_steepest_descent} after obtaining the stepsize $c_0$, the following estimate is valid:
    \begin{align*}
            \frac{1}{T}\sum\limits_{t=0}^{T-1} \|\nabla f(x^t)\|_1^2 \leqslant 8\frac{\Delta^* L_\infty }{T}. 
        \end{align*}
    Moreover, taking into account the complexity of Algorithm \ref{alg:bisection_procedure} in relation to the initial stepsize bound $c_s$, to reach $\varepsilon$-accuracy, where $\varepsilon^2 \geqslant \frac{1}{T}\sum\limits_{t=0}^{T-1} \|\nabla f(x^t)\|_1^2$, Algorithm \ref{alg:sos_steepest_descent} needs
    \begin{align*}
        \mathcal{O}\left(\frac{\Delta^* L_{\infty}}{\varepsilon^2}\log\log\frac{L_\infty}{L_\infty^{-1}}\right) ~~\text{iterations.}
    \end{align*}
    \begin{proof}
        Firstly, we recall the result of Lemma \ref{descent_lemma_steepest_descent}:
        \begin{eqnarray*}
            -\Delta^* \leqslant - c_0\sum\limits_{t=0}^{T-1}\|\nabla f(x^t)\|_1^2\left(1 - c_0 \widetilde{L}_\infty\right).
        \end{eqnarray*}
        We have already mentioned that we can always avoid early terminations of Algorithm \ref{alg:bisection_procedure}, due to Lemma \ref{lemma_bisection_start_condition_steepest_descent}, and thus, $\frac{1}{2\widetilde{L}_\infty^{-1}(c_\text{hi}^*)} \leqslant c_0 \leqslant \frac{1}{\widetilde{L}_\infty^{-1}(c_0)}$. Tuning $c_0 = \frac{c_0}{2}$, we obtain
        \begin{eqnarray*}
            -\Delta^* &\leqslant& - c_0\sum\limits_{t=0}^{T-1}\|\nabla f(x^t)\|_1^2\left(1 - \frac{1}{2\widetilde{L}_\infty^{-1}(c_0)} \widetilde{L}_\infty(c_0)\right)\\
            &\overset{\ref{eq:L_ratio}}{\leqslant}& - c_0\sum\limits_{t=0}^{T-1}\|\nabla f(x^t)\|_1^2\left(1 - \frac{1}{2}\right).
        \end{eqnarray*}
        Expressing gradient norms, we obtain
        \begin{eqnarray*}
            \frac{1}{T}\sum\limits_{t=0}^{T-1}\|\nabla f(x^t)\|_1^2 \leqslant \frac{2\Delta^*}{c_0 T}\leqslant \frac{8\Delta^*\widetilde{L}_\infty^{-1}(c_\text{hi}^*)}{T} \overset{\ref{eq:L_ratio}}{\leqslant} \frac{8\Delta^*L_\infty}{T}.
        \end{eqnarray*}
        Assuming $\frac{1}{T}\sum\limits_{t=0}^{T-1}\|\nabla f(x^t)\|_1^2 \leqslant \varepsilon^2$ as a criterion, we easily obtain the estimate on the number of iterations required --- $\mathcal{O}\left(\frac{\Delta^*L_\infty}{\varepsilon^2}\right)$. Mention that the total number of iterations (together with the Algorithm \ref{alg:bisection_procedure} performance) -- $\mathcal{O}\left(\frac{\Delta^*L_\infty}{\varepsilon^2}\log\log\frac{L_\infty}{L_{\infty}^{-1}}\right)$.
    \end{proof}
\end{theorem}

\section*{The Use of Large Language Models (LLMs)}

In this work, large language models (LLMs) were used exclusively for spelling edits.

\end{document}